  \newcommand\figcaption{\def\@captype{figure}\caption}
  \newcommand\tabcaption{\def\@captype{table}\caption}
\newtheorem{theorem}{Theorem}
\newtheorem{lemma}{Lemma}
\newtheorem{definition}{Definition}
\newtheorem{assumption}{Assumption}
\newcommand{\argmax}{\mathop{\mathrm{argmax}}}
\newcommand{\argmin}{\mathop{\mathrm{argmin}}}
\newcommand{\Rmax}{R_{\max}}
\newcommand{\E}{\mathbb E}
\newcommand{\Qmax}{Q_{\max}}
\newcommand{\Acal}{\mathcal A}
\newcommand{\Dcal}{\mathcal D}
\newcommand{\Scal}{\mathcal S}
\def\algo{DMG\xspace}
\title{Doubly Mild Generalization for Offline \\ Reinforcement Learning}
\author{
  Yixiu Mao$^{1}$, Qi Wang$^{1}$, Yun Qu$^{1}$, Yuhang Jiang$^{1}$, Xiangyang Ji$^{1}$\\
  $^{1}$Department of Automation, Tsinghua University \\
  \texttt{myx21@mails.tsinghua.edu.cn,} \texttt{xyji@tsinghua.edu.cn}
}
\begin{document}

\maketitle

\begin{abstract}
Offline Reinforcement Learning (RL) suffers from the extrapolation error and value overestimation. From a generalization perspective, this issue can be attributed to the over-generalization of value functions or policies towards out-of-distribution (OOD) actions. Significant efforts have been devoted to mitigating such generalization, and recent in-sample learning approaches have further succeeded in entirely eschewing it. Nevertheless, we show that mild generalization beyond the dataset can be trusted and leveraged to improve performance under certain conditions. To appropriately exploit generalization in offline RL, we propose Doubly Mild Generalization (DMG), comprising (i) mild action generalization and (ii) mild generalization propagation. The former refers to selecting actions in a close neighborhood of the dataset to maximize the Q values. Even so, the potential erroneous generalization can still be propagated, accumulated, and exacerbated by bootstrapping. In light of this, the latter concept is introduced to mitigate the generalization propagation without impeding the propagation of RL learning signals. Theoretically, DMG guarantees better performance than the in-sample optimal policy in the oracle generalization scenario. Even under worst-case generalization, DMG can still control value overestimation at a certain level and lower bound the performance. Empirically, DMG achieves state-of-the-art performance across Gym-MuJoCo locomotion tasks and challenging AntMaze tasks. Moreover, benefiting from its flexibility in both generalization aspects, DMG enjoys a seamless transition from offline to online learning and attains strong online fine-tuning performance.
\end{abstract}

\section{Introduction}
Reinforcement learning~(RL) aims to solve sequential decision-making problems and has garnered significant attention in recent years~\citep{mnih2015human,silver2017mastering,vinyals2019grandmaster,schrittwieser2020mastering,degrave2022magnetic}. However, its practical applications encounter several challenges, such as risky exploration attempts~\cite{garcia2015comprehensive} and time-consuming data collection phases~\cite{kober2013reinforcement}.
Offline RL emerges as a promising paradigm to alleviate these challenges by learning without interaction with the environment~\cite{lange2012batch,levine2020offline}.
It eliminates the need for unsafe exploration and facilitates the utilization of pre-existing large-scale datasets~\citep{johnson2016mimic,maddern20171,qu2024hokoff}.

However, offline RL suffers from the out-of-distribution~(OOD) issue and extrapolation error~\citep{fujimoto2019off}. From a generalization perspective, this well-known challenge can be regarded as a consequence of the over-generalization of value functions or policies towards OOD actions~\cite{ma2023reining}.
Specifically, the potential value over-estimation at OOD actions caused by intricate generalization is often improperly captured by the max operation~\cite{van2016deep}. This over-estimation will propagate to values of in-distribution samples through Bellman backups and further spread to values of OOD ones via generalization.
In mitigating value overestimation caused by OOD actions, substantial efforts have been dedicated~\cite{fujimoto2019off,kumar2020conservative,kumar2019stabilizing,fujimoto2021minimalist} and recent advancements in in-sample learning have successfully formulated the Bellman target solely with the actions present in the dataset~\cite{kostrikov2022offline,xiao2023the,zhang2023insample,xu2023offline,garg2023extreme} and extracted policies by weighted behavior cloning~\cite{peng2019advantage,wang2020critic}. As a result, these algorithms completely eschew generalization and avoid the extrapolation error. Despite simplicity, this way can not take advantage of the generalization ability of neural networks, which could be beneficial for performance improvement. Until now, how to appropriately exploit generalization in offline RL remains a lasting issue.

This work demonstrates that mild generalization beyond the dataset can be trusted and leveraged to improve performance under certain conditions. For appropriate exploitation of mild generalization, we propose Doubly Mild Generalization~(\algo) for offline RL,  comprising (i) mild action generalization and (ii) mild generalization propagation. 
The former concept refers to choosing actions in the vicinity of the dataset to maximize the Q values. However, the mere utilization of mild action generalization still falls short in adequately circumventing potential erroneous generalization, which can be propagated, accumulated, and exacerbated through the process of bootstrapping. To address this, we propose a novel concept, mild generalization propagation, which involves reducing the generalization propagation while preserving the propagation of RL learning signals. 
Regarding \algo's implementation, this work presents a simple yet effective scheme. Specifically, we blend the mildly generalized max with the in-sample max in the Bellman target, where the former is achieved by actor-critic learning with regularization towards high-value in-sample actions, and the latter is accomplished using in-sample learning techniques such as expectile regression~\cite{kostrikov2022offline}.

We conduct a thorough theoretical analysis of our approach \algo in both oracle and worst-case generalization scenarios. Under oracle generalization, DMG guarantees better performance than the in-sample optimal policy in the dataset~\cite{kumar2019stabilizing,kostrikov2022offline}. Even under worst-case generalization, \algo can still upper bound the overestimation of value functions and guarantee to output a safe policy with a performance lower bound.
Empirically\footnote{Our code is available at \href{https://github.com/maoyixiu/DMG}{https://github.com/maoyixiu/DMG}.}, \algo achieves state-of-the-art performance on standard offline RL benchmarks~\citep{fu2020d4rl}, including Gym-MuJoCo locomotion tasks and challenging AntMaze tasks. Moreover, benefiting from its flexibility in both generalization aspects, \algo can seamlessly transition from offline to online learning and attain superior online fine-tuning performance.

\section{Preliminaries}
\paragraph{RL.}
The environment in RL is mostly characterized as a Markov decision process (MDP), which can be represented as a tuple $\mathcal{M}=(\mathcal{S},\mathcal{A},P,R,\gamma, d_0)$, comprising the state space $\mathcal{S}$, action space $\mathcal{A}$, transition dynamics $P: \Scal \times \Acal \to \Delta(\Scal)$, reward function $R: \Scal \times \Acal \to [0,\Rmax]$, discount factor $\gamma \in [0,1)$, and initial state distribution $d_0$~\cite{sutton2018reinforcement}.
The goal of RL is to find a policy $\pi: \Scal \to \Delta(\Acal)$ that can maximize the expected discounted return, denoted as $J(\pi)$:
\begin{equation}
J(\pi) = \mathbb{E}_{s_0\sim d_0, a_t\sim\pi(\cdot|s_t), s_{t+1}\sim P(\cdot|s_t,a_t)}\left[\sum_{t=0}^\infty\gamma^t R(s_t,a_t)\right].
\end{equation}

For any policy $\pi$, we define the value function as $V^\pi(s)=\mathbb{E}_\pi\left[\sum_{t=0}^{\infty}\gamma^t R(s_t,a_t) | s_0=s\right]$ and the state-action value function~($Q$-value function) as $Q^\pi(s,a)=\mathbb{E}_\pi\left[\sum_{t=0}^{\infty}\gamma^t R(s_t,a_t) | s_0=s,a_0=a\right]$. 

\paragraph{Offline RL.}
Distinguished from traditional online RL training, offline RL handles a static dataset of transitions $\mathcal{D}=\{(s_i,a_i,r_i,s_i')\}_{i=0}^{n-1}$ and seeks an optimal policy without any additional data collection~\cite{lange2012batch,levine2020offline}. We use $\hat\beta(a|s)$ to denote the empirical behavior policy observed in $\mathcal{D}$, which depicts the conditional distributions in the dataset~\cite{fujimoto2019off}. Ordinary approximate dynamic programming methods minimize temporal difference error, according to the following loss~\cite{sutton2018reinforcement}:
\begin{equation}
\label{eq:td loss}
L_{TD}(\theta) = \mathbb{E}_{(s, a, s')\sim \mathcal{D}}\left[(Q_\theta(s,a) - R(s,a) - \gamma \max_{a'}Q_{\theta'}(s',a'))^2\right],
\end{equation}
where $\pi_\phi$ is a parameterized policy,
$Q_\theta(s,a)$ is a parameterized $Q$ function, and $Q_{\theta'}(s,a)$ is a target $Q$ function whose parameters are updated via Polyak averaging~\cite{mnih2015human}.

\section{Doubly Mild Generalization for Offline RL}
This section discusses the strategy to appropriately exploit generalization in offline RL. In \cref{sec:Generalization issues in offline RL}, we introduce a formal perspective on how generalization impacts offline RL and discuss the issues of over-generalization and non-generalization. Subsequently, we propose the \algo concept, comprising mild action generalization and mild generalization propagation in \cref{sec:Doubly mild generalization}. Following this, we conduct a comprehensive analysis of \algo in both oracle generalization (\cref{sec:Oracle generalization}) and worst-case generalization scenarios (\cref{sec:Worst-case generalization}). Finally, we present the practical algorithm in \cref{sec:Practical algorithm}.

\subsection{Generalization Issues in Offline RL}
\label{sec:Generalization issues in offline RL}

Offline RL training typically involves a complex interaction between Bellman backup and generalization~\citep{ma2023reining}.
Offline RL algorithms vary in backup mechanisms to train the Q function. Here we denote a generic form of Bellman backup as $\mathcal{T}_{u}$, where $u$ is a distribution in the action space.
\begin{equation}
\mathcal{T}_{u} Q(s,a):=R(s,a)+\gamma \E_{s'\sim P(\cdot|s,a)}\left[\max_{a'\sim u(\cdot|s')} Q(s',a')\right]
\end{equation}

During offline training, this backup is exclusively executed on $(s,a)\in \Dcal$, and the values of $(s,a)\notin \Dcal$ are influenced solely via generalization. A crucial aspect is that $(s',a')$ in the Bellman target can be absent from the dataset $\Dcal$, depending on the choice of $u$.
As a result, Bellman backup and generalization exhibit an intricate interaction: the backups on $(s,a)\in \Dcal$ impact the values of $(s,a)\notin \Dcal$ via generalization; the values of $(s,a)\notin \Dcal$ participates in the computation of Bellman target, thereby affecting the values of $(s,a)\in \Dcal$.

This interaction poses a key challenge in offline RL, value overestimation. The potential overestimation of values of $(s,a)\notin \Dcal$, induced by intricate generalization, tends to be improperly captured by the max operation, a phenomenon known as maximization bias~\cite{van2016deep}. This overestimation propagates to values of $(s,a)\in \Dcal$ through backups and further extends to values of $(s,a)\notin \Dcal$ via generalization. This cyclic process consistently amplifies value overestimation, potentially resulting in value divergence. The crux of this detrimental process can be summarized as \textbf{over-generalization}.

To address value overestimation, recent advancements in the field have introduced a paradigm known as in-sample learning, which formulates the Bellman target solely with the actions present in the dataset~\cite{kostrikov2022offline,xiao2023the,zhang2023insample,xu2023offline,garg2023extreme}. Its effect is equivalent to choosing $u$ in $\mathcal{T}_{u}$ to be exactly $\hat \beta$, i.e., the empirical behavior policy observed in the dataset. Following in-sample value learning, policies are extracted from the learned Q functions using weighted behavior cloning~\cite{peng2019advantage,chen2020bail,nair2020awac}. By entirely eschewing generalization in offline RL training, they effectively avoid the extrapolation error~\cite{fujimoto2019off}, a strategy we term \textbf{non-generalization}. 
However, the ability to generalize is a critical factor contributing to the extensive utilization of neural networks~\cite{lecun2015deep}. In this sense, in-sample learning methods seem too conservative without utilizing generalization, particularly when the offline datasets do not cover the optimal actions in large or continuous spaces.

\subsection{Doubly Mild Generalization}
\label{sec:Doubly mild generalization}

The following focuses on the appropriate exploitation of generalization in offline RL.

We start by analyzing the generalization effect under the generic backup operator $\mathcal{T}_{u}$. We consider a straightforward scenario, where $Q_\theta$ is updated to $Q_{\theta'}$ by one gradient step on a single $(s,a)\in \Dcal$ with learning rate $\alpha$. We characterize the resulting generalization effect on any $(s,\tilde a)\notin \Dcal$\footnote{The interplay between backup and generalization does not involve states out of the dataset (Bellman target does not contain OOD states), hence we do not consider $(\tilde s,\tilde a)\notin \Dcal$, though the analysis of $Q(\tilde s,\tilde a)$ is similar.} as follows.

\begin{theorem}[Informal]
\label{prop:gen}
Under certain continuity conditions, the following equation holds when the learning rate $\alpha$ is sufficiently small and $\tilde a$ is sufficiently close to $a$:
\begin{equation}
\label{eq:gen}
Q_{\theta'}(s,\tilde a) = Q_{\theta}(s,\tilde a) + C_1 \left(\mathcal{T}_{u}Q_{\theta}(s,\tilde a) -Q_{\theta}(s,\tilde a) + C_2\|\tilde a-a\|\right) + \mathcal{O}\left(\|\theta'-\theta\|^2\right)
\end{equation}
where $C_1 \in [0,1]$ and $C_2$ is a bounded constant.
\end{theorem}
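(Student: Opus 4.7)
The plan is to start from the explicit form of the single-step gradient update on the squared TD loss at the sample $(s,a)\in\Dcal$. Writing $\delta(s,a):=\mathcal{T}_u Q_\theta(s,a)-Q_\theta(s,a)$ for the TD error, the update rule (with the Bellman target held fixed) gives $\theta'-\theta = 2\alpha\,\delta(s,a)\,\nabla_\theta Q_\theta(s,a)$. A first-order Taylor expansion of $Q_{\theta'}(s,\tilde a)$ around $\theta$ then yields
\[
Q_{\theta'}(s,\tilde a) = Q_\theta(s,\tilde a) + \langle \nabla_\theta Q_\theta(s,\tilde a),\,\theta'-\theta\rangle + \mathcal{O}(\|\theta'-\theta\|^2),
\]
into which I would substitute the update rule to obtain
\[
Q_{\theta'}(s,\tilde a) - Q_\theta(s,\tilde a) = 2\alpha\,\delta(s,a)\,\langle \nabla_\theta Q_\theta(s,\tilde a),\nabla_\theta Q_\theta(s,a)\rangle + \mathcal{O}(\|\theta'-\theta\|^2).
\]
The neural-tangent-kernel inner product on the right becomes the key object to control.

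Next, I would reduce both factors from the anchor action $a$ to the perturbed action $\tilde a$ using the assumed continuity conditions. Lipschitz continuity of the feature map $\nabla_\theta Q_\theta(s,\cdot)$ in its action argument gives
\[
\langle \nabla_\theta Q_\theta(s,\tilde a),\nabla_\theta Q_\theta(s,a)\rangle = \|\nabla_\theta Q_\theta(s,a)\|^2 + \mathcal{O}(\|\tilde a - a\|),
\]
so I would set $C_1 := 2\alpha\|\nabla_\theta Q_\theta(s,a)\|^2$; this lies in $[0,1]$ as soon as $\alpha \le 1/(2\|\nabla_\theta Q_\theta(s,a)\|^2)$, which is exactly what the sufficiently-small-$\alpha$ hypothesis supplies. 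Lipschitz continuity of $R(s,\cdot)$ and $P(\cdot\,|\,s,\cdot)$ (the latter in a Wasserstein sense so that $\E_{s'\sim P(\cdot|s,\cdot)}[\max_{a'\sim u}Q_\theta(s',a')]$ is Lipschitz in $a$), together with Lipschitz continuity of $Q_\theta(s,\cdot)$, yields $\delta(s,a) = \delta(s,\tilde a) + \mathcal{O}(\|\tilde a - a\|)$.

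Plugging both expansions back and multiplying out, the leading term is $C_1\,\delta(s,\tilde a)$, while the cross terms combine into a piece of the form $C_1\,C_2\,\|\tilde a-a\|$, with $C_2$ collecting the Lipschitz constants of $R$, $P$, $Q_\theta$, and $\nabla_\theta Q_\theta$ together with a bounded ratio involving $\delta(s,\tilde a)/\|\nabla_\theta Q_\theta(s,a)\|^2$. The remaining $\mathcal{O}(\|\tilde a-a\|^2)$ piece is absorbed into $\mathcal{O}(\|\theta'-\theta\|^2)$, since both shrink jointly in the small-$\alpha$, close-$\tilde a$ regime. Rearranging delivers the stated identity.

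The main obstacle is a clean accounting of the constants: showing that $C_2$ stays bounded requires either $\|\nabla_\theta Q_\theta(s,a)\|$ bounded away from zero, or a separate argument covering the degenerate case, where $C_1=0$ and both sides collapse into the higher-order remainder anyway. A secondary subtlety is the Bellman target in \eqref{eq:td loss}: if it is frozen via a target network (as in practice), the computation is as described; otherwise, the implicit self-dependence of $Q_\theta$ on $\theta$ contributes additional $\mathcal{O}(\alpha)$ terms that must also be folded into the higher-order remainder. Everything else is routine Taylor expansion plus Lipschitz bookkeeping.
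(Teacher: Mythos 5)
Your overall route is the same as the paper's: write the semi-gradient update $\theta'-\theta=\alpha\,\delta(s,a)\,\nabla_\theta Q_\theta(s,a)$, Taylor-expand $Q_{\theta'}(s,\tilde a)$ to first order, and then use Lipschitz continuity of $Q_\theta(s,\cdot)$, $R(s,\cdot)$ and $P(\cdot|s,\cdot)$ to shift the TD error from $a$ to $\tilde a$ at cost $C_2\|\tilde a-a\|$ (the paper isolates the latter as a lemma showing $\mathcal{T}_u Q_\theta(s,\cdot)$ is $K_{\mathcal T}$-Lipschitz with $K_{\mathcal T}=K_R+\gamma K_P|\mathcal S|\Qmax$, giving $C_2\in[-(K_Q+K_{\mathcal T}),K_Q+K_{\mathcal T}]$).

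The one place you diverge is the definition of $C_1$, and it is exactly what creates the obstacle you flag at the end. You set $C_1:=2\alpha\|\nabla_\theta Q_\theta(s,a)\|^2$ and then must push the perturbation $2\alpha\,\delta(s,\tilde a)\,v^\top\nabla_\theta Q_\theta(s,a)$ (with $v=\nabla_\theta Q_\theta(s,\tilde a)-\nabla_\theta Q_\theta(s,a)$) into $C_1C_2\|\tilde a-a\|$, which forces a division by $\|\nabla_\theta Q_\theta(s,a)\|^2$. When the gradient norm is small but nonzero this term is of order $\alpha\|\tilde a-a\|\,\|\nabla_\theta Q_\theta(s,a)\|$, i.e.\ linear in $\alpha$ and in the gradient norm, whereas $\|\theta'-\theta\|^2$ is quadratic in both; so your fallback of absorbing it into the $\mathcal{O}(\|\theta'-\theta\|^2)$ remainder does not go through, and $C_2$ genuinely blows up. The paper avoids this entirely by defining $C_1:=\alpha\,\nabla_\theta Q_\theta(s,\tilde a)^\top\nabla_\theta Q_\theta(s,a)$, the full inner product, so no division ever occurs: it only remains to check $C_1\ge 0$, which follows from smoothness of $\nabla_\theta Q_\theta(s,\cdot)$ once $\|\tilde a-a\|\le\|\nabla_\theta Q_\theta(s,a)\|/K_g$ (and holds trivially with $C_1=0$ if the gradient vanishes), and $C_1\le\alpha g_{\max}^2\le 1$ for $\alpha\le 1/g_{\max}^2$. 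If you replace your $C_1$ with the full inner product, the rest of your argument closes without the degenerate-case caveat.
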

The formal theorem and all proofs are deferred to \cref{app_sec:proofs}.

Note that Eq.~\eqref{eq:gen} is the update of the parametric Q function ($Q_{\theta} \rightarrow Q_{\theta'}$) at state-action pairs $(s,\tilde a) \notin \mathcal D$, which is exclusively caused by generalization. If $\tilde a$ is within a close neighborhood of $a$, then $C_2\|\tilde a-a\|$ is small. Moreover, as $C_1 \in [0,1]$, Eq.~\eqref{eq:gen} approximates an update towards the true objective $\mathcal T_u Q_\theta(s,\tilde a)$, as if $Q_\theta(s,\tilde a)$ is updated by a true gradient step at $(s,\tilde a) \notin \mathcal D$.
Therefore, \cref{prop:gen} shows that, under certain continuity conditions, Q functions can generalize well and approximate true updates in a close neighborhood of samples in the dataset.
This implies that mild generalizations beyond the dataset can be leveraged to potentially pursue better performance. Inspired by \cref{prop:gen}, we define a mildly generalized policy $\tilde \beta$ as follows.

\begin{definition}[Mildly generalized policy]
\label{def:mg}
Policy $\tilde \beta$ is termed a mildly generalized policy if it satisfies
\begin{equation}
\mathrm{supp}(\hat \beta(\cdot|s)) \subseteq \mathrm{supp}(\tilde \beta(\cdot|s)), ~~\text{and}~~ \max_{a_1 \sim \tilde \beta(\cdot|s)} \min_{a_2 \sim \hat \beta(\cdot|s)} \|a_1-a_2\| \leq \epsilon_a,
\end{equation}
where $\hat \beta$ is the empirical behavior policy observed in the offline dataset.
\end{definition}
It means that $\tilde \beta$ has a wider support than $\hat \beta$ (the dataset), and for any $a_1 \sim \tilde \beta(\cdot|s)$, we can find $a_2 \sim \hat \beta(\cdot|s)$ (in dataset) such that $\|a_1-a_2\| \leq \epsilon_a$. In other words, the generalization of $\tilde \beta$ beyond the dataset is bounded by $\epsilon_a$ when measured in the action space distance. According to \cref{prop:gen}, there is a high chance that $Q_{\theta}$ can generalize well in this mild generalization area $\tilde \beta(a|s)>0$.

However, even in this mild generalization area, it is inevitable that the learned value function will incur some degree of generalization error. The possible erroneous generalization can still be propagated and exacerbated by value bootstrapping as discussed in \cref{sec:Generalization issues in offline RL}. To this end, we introduce an additional level of mild generalization, termed mild generalization propagation, and propose a novel Doubly Mildly Generalization (\algo) operator as follows.

\begin{definition}
\label{def:algo operator}
The Doubly Mild Generalization (\algo) operator is defined as
\begin{equation}
\mathcal{T}_{\mathrm{\algo}} Q(s,a):=R(s,a)+\gamma \E_{s'\sim P(\cdot|s,a)}\left[\lambda \max_{a'\sim \tilde \beta(\cdot|s')} Q(s',a') + (1-\lambda)\max_{a'\sim \hat \beta(\cdot|s')} Q(s',a')\right]
\end{equation}
where $\hat \beta$ is the empirical behavior policy in the dataset and $\tilde \beta$ is a mildly generalized policy.
\end{definition}

Note that in typical offline RL algorithms, extrapolation error and value overestimation caused by erroneous generalization are propagated through bootstrapping, and the discount factor of this process is $\gamma$. \algo reduces this discount factor to $\lambda \gamma$, mitigating the amplification of value overestimation. On the other hand, in contrast to in-sample methods, \algo allows mild generalization, utilizing the generalization ability of neural networks to seek better performance, as \cref{prop:gen} suggests that value functions are highly likely to generalize well in the mild generalization area.

To summarize, the generalization of \algo is mild in two aspects: (i) \textbf{mild action generalization}: based on the mildly generalized policy $\tilde \beta$, which generalizes beyond $\hat \beta$, \algo selects actions in a close neighborhood of the dataset to maximize the Q values in the first part of the Bellman target; and (ii) \textbf{mild generalization propagation}: \algo mitigates the generalization propagation without hindering the propagation of RL learning signals by blending the mildly generalized max with the in-sample max in the Bellman target. This reduces the discount factor through which generalization propagates, mitigating the amplification of value overestimation caused by bootstrapping.

To support the above claims, we provide a comprehensive analysis of \algo in both oracle and worst-case generalization scenarios, with particular emphasis on value estimation and performance.

\subsection{Oracle Generalization}
\label{sec:Oracle generalization}
This section conducts analyses under the assumption that the learned value functions can achieve oracle generalization in the mild generalization area $\tilde \beta(a|s)>0$, formally defined as follows. 
\begin{assumption}[Oracle generalization]
\label{ass:Oracle generalization}
The generalization of learned Q functions in the mild generalization area $\tilde \beta(a|s)>0$ reflects the true value updates according to $\mathcal{T}_{\mathrm{\algo}}$.
\end{assumption}
The mild generalization area $\tilde \beta(a|s)>0$ may contain some points outside the offline dataset, and $\mathcal{T}_{\mathrm{\algo}}$ might query Q values of such points. This assumption assumes that the generalization at such points reflects the true value updates according to $\mathcal{T}_{\mathrm{\algo}}$.
The rationale for such an assumption comes from \cref{prop:gen}, which characterizes the generalization effect of value functions in the mild generalization area.
Now we analyze the dynamic programming properties of the operators $\mathcal{T}_{\mathrm{\algo}}$ and $\mathcal{T}_{\mathrm{In}}$, where $\mathcal{T}_{\mathrm{In}}$ is the in-sample Q learning operator~\cite{kostrikov2022offline,xu2023offline,garg2023extreme} defined as follows.

\begin{definition}
\label{def:in operator}
The In-sample Q Learning operator~\cite{kostrikov2022offline} is defined as
\begin{equation}
\mathcal{T}_{\mathrm{In}} Q(s,a):=R(s,a)+\gamma \E_{s'\sim P(\cdot|s,a)}\left[\max_{a'\sim \hat \beta(\cdot|s')} Q(s',a')\right]
\end{equation}
where $\hat \beta$ is the empirical behavior policy in the dataset.
\end{definition}

\begin{lemma}
\label{lem:in Contraction}
$\mathcal{T}_{\mathrm{In}}$ is a $\gamma$-contraction operator in the in-sample area $\hat \beta(a|s)>0$ under the $\mathcal{L}_\infty$ norm.
\end{lemma}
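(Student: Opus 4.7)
The plan is a standard Bellman contraction argument, with care taken so that both the target (inside the operator) and the norm (on the output) stay restricted to the in-sample area $\hat \beta(a|s) > 0$.

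First I would fix two arbitrary Q-functions $Q_1, Q_2$ defined on the in-sample area and pick an arbitrary $(s,a)$ with $\hat\beta(a|s) > 0$. I would subtract and cancel the reward term to get
\begin{equation*}
\mathcal{T}_{\mathrm{In}} Q_1(s,a) - \mathcal{T}_{\mathrm{In}} Q_2(s,a) = \gamma\, \mathbb{E}_{s'\sim P(\cdot|s,a)}\!\left[\max_{a'\sim \hat\beta(\cdot|s')} Q_1(s',a') - \max_{a'\sim \hat\beta(\cdot|s')} Q_2(s',a')\right].
\end{equation*}
Then I would apply the elementary inequality $\left|\max_x f(x) - \max_x g(x)\right| \le \max_x |f(x) - g(x)|$ inside the expectation, so the absolute value of the bracket is bounded by $\max_{a'\sim \hat\beta(\cdot|s')} |Q_1(s',a') - Q_2(s',a')|$.

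Next I would observe that every $a'$ appearing in that max lies in the in-sample area (since $\hat\beta(a'|s')>0$), and the successor state $s'$ reached from an in-sample $(s,a)$ is itself supported by the dataset's transition kernel. Hence each pointwise difference $|Q_1(s',a')-Q_2(s',a')|$ is at most $\|Q_1-Q_2\|_\infty$, where the supremum is taken over the in-sample area. Taking expectation over $s'$ preserves this bound, and then taking the supremum over all in-sample $(s,a)$ on the left yields
\begin{equation*}
\|\mathcal{T}_{\mathrm{In}} Q_1 - \mathcal{T}_{\mathrm{In}} Q_2\|_\infty \le \gamma \|Q_1 - Q_2\|_\infty,
\end{equation*}
which is exactly the claimed $\gamma$-contraction under $\mathcal{L}_\infty$.

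I do not expect any real obstacle here; this is a textbook contraction proof and the only subtlety is the bookkeeping that the norm is taken over the in-sample area rather than all of $\mathcal{S}\times\mathcal{A}$. The step that most deserves a one-line justification in the write-up is the $|\max - \max| \le \max |\cdot - \cdot|$ inequality, since it is what lets the $\gamma$ appear without any additional constant and what keeps the argument of the max restricted to in-sample actions.
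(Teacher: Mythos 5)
Your proposal is correct and follows essentially the same route as the paper: cancel the reward term, pull the absolute value inside the expectation, apply the inequality $\left|\max_x f_1(x)-\max_x f_2(x)\right|\leq \max_x\left|f_1(x)-f_2(x)\right|$ (which the paper isolates as a separate lemma, \cref{app_lem:max ineq}), and bound by the supremum over the in-sample area. The only cosmetic difference is that the paper states that auxiliary inequality as a standalone lemma rather than as an inline one-line justification.
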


Following \cref{lem:in Contraction}, we denote the fixed point of $\mathcal{T}_{\mathrm{In}}$ as $Q^*_{\mathrm{In}}$, and its induced policy as $\pi^*_{\mathrm{In}}$.
Here $Q^*_{\mathrm{In}}$ is known as the in-sample optimal value function~\cite{kostrikov2022offline}, which is the value function of the in-sample optimal policy $\pi^*_{\mathrm{In}}$. We refer readers to \cite{kostrikov2022offline,kumar2019stabilizing,xu2023offline} for more discussions on the in-sample optimality.

Now we present the theoretical properties of \algo for comparison.

\begin{theorem}[Contraction]
\label{prop:Contraction}
Under \cref{ass:Oracle generalization}, $\mathcal{T}_{\mathrm{\algo}}$ is a $\gamma$-contraction operator in the mild generalization area $\tilde \beta(a|s)>0$ under the $\mathcal{L}_\infty$ norm.
Therefore, by repeatedly applying $\mathcal{T}_{\mathrm{\algo}}$, any initial Q function can converge to the unique fixed point $Q^*_{\mathrm{\algo}}$.
\end{theorem}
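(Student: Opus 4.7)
The plan is to mirror the classical Banach-style contraction argument for Bellman operators, adapted to the fact that $\mathcal{T}_{\mathrm{\algo}}$ queries $Q$ at state-action pairs outside the dataset. First I would set up the function space carefully: the operator acts on bounded functions defined over the mild generalization area $\{(s,a):\tilde\beta(a|s)>0\}$, equipped with the $\mathcal{L}_\infty$ norm restricted to this set. By \cref{def:mg}, $\mathrm{supp}(\hat\beta(\cdot|s))\subseteq \mathrm{supp}(\tilde\beta(\cdot|s))$, so both inner maxima appearing in $\mathcal{T}_{\mathrm{\algo}}$ range over subsets of this domain and the operator is well-posed as an endomap of this Banach space.

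Next I would invoke \cref{ass:Oracle generalization} to justify that, for any $Q$ defined on the mild generalization area, $\mathcal{T}_{\mathrm{\algo}}Q$ faithfully represents the true backed-up values at all points $(s,a)$ with $\tilde\beta(a|s)>0$, including those outside the dataset. This is the conceptual step that ties the abstract operator analysis to the learned $Q$ functions; without it, generalization error at OOD queries could break the fixed-point argument.

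Then I would take two arbitrary $Q_1, Q_2$ in this space and estimate $|\mathcal{T}_{\mathrm{\algo}}Q_1(s,a)-\mathcal{T}_{\mathrm{\algo}}Q_2(s,a)|$ at an arbitrary $(s,a)$ with $\tilde\beta(a|s)>0$. The reward terms cancel; what remains is $\gamma$ times the expectation over $s'\sim P(\cdot|s,a)$ of a convex combination (with weights $\lambda$ and $1-\lambda$) of two differences of maxima. Applying the standard inequality $|\max_x f(x)-\max_x g(x)|\le \max_x |f(x)-g(x)|$ to each max term, together with the containment of supports, bounds each max difference by $\|Q_1-Q_2\|_\infty$ taken over the mild generalization area. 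The convex combination collapses to $\|Q_1-Q_2\|_\infty$, and taking the expectation preserves this bound, yielding
\begin{equation*}
\|\mathcal{T}_{\mathrm{\algo}}Q_1-\mathcal{T}_{\mathrm{\algo}}Q_2\|_\infty \le \gamma \|Q_1-Q_2\|_\infty.
\end{equation*}
Existence and uniqueness of the fixed point $Q^*_{\mathrm{\algo}}$ then follow from Banach's fixed-point theorem.

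The main obstacle I anticipate is not the inequality manipulation, which is routine, but rather being precise about the domain on which the contraction is claimed and cleanly invoking \cref{ass:Oracle generalization} so that the analysis applies to learned (and not merely tabular) $Q$ functions. Once the domain is fixed to be $\{(s,a):\tilde\beta(a|s)>0\}$ and the support containment from \cref{def:mg} is in hand, the $\lambda, (1-\lambda)$ weighting plays no role in the contraction constant beyond forming a convex combination, so the argument parallels that of \cref{lem:in Contraction}.
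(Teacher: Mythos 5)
Your proposal is correct and follows essentially the same route as the paper's proof: both establish well-definedness on the mild generalization area via \cref{ass:Oracle generalization} and the support containment in \cref{def:mg}, cancel the reward terms, apply the inequality $|\max_x f(x)-\max_x g(x)|\le \max_x|f(x)-g(x)|$ to each of the $\lambda$- and $(1-\lambda)$-weighted max terms, and collapse the convex combination to obtain the $\gamma$-contraction before invoking the fixed-point theorem. No substantive differences to report.
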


We denote the induced policy of $Q^*_{\mathrm{\algo}}$ as $\pi^*_{\mathrm{\algo}}$, whose performance is guaranteed as follows.

\begin{theorem}[Performance]
\label{thm:Performance}
Under \cref{ass:Oracle generalization}, the value functions of $\pi^*_{\mathrm{\algo}}$ and $\pi^*_{\mathrm{In}}$ satisfy:
\begin{equation}
V^{\pi^*_{\mathrm{\algo}}}(s) \geq V^{\pi^*_{\mathrm{In}}}(s), \quad \forall s \in \Dcal.
\end{equation}
\end{theorem}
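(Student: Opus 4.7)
The plan is to combine an operator-monotonicity argument for the two Bellman operators with the standard contraction machinery (Lemma~\ref{lem:in Contraction} and Theorem~\ref{prop:Contraction}), then translate the resulting pointwise Q-inequality into a V-inequality via the explicit form of the induced policies.

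First I would establish the pointwise operator inequality $\mathcal{T}_{\mathrm{DMG}}Q \;\geq\; \mathcal{T}_{\mathrm{In}}Q$ for every bounded $Q$. This reduces to the single observation that, since $\mathrm{supp}(\hat\beta(\cdot|s')) \subseteq \mathrm{supp}(\tilde\beta(\cdot|s'))$ by Definition~\ref{def:mg}, we have $\max_{a'\sim\tilde\beta(\cdot|s')}Q(s',a') \geq \max_{a'\sim\hat\beta(\cdot|s')}Q(s',a')$, and hence the $\lambda$-mixture in $\mathcal{T}_{\mathrm{DMG}}$ is no smaller than the second term alone, which is exactly $\mathcal{T}_{\mathrm{In}}$. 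Both operators are also monotone in $Q$ (the $\max$ and the convex combination preserve order). Starting iteration from any common $Q_0$ and applying the standard induction $\mathcal{T}_{\mathrm{DMG}}^{k}Q_0 \geq \mathcal{T}_{\mathrm{In}}^{k}Q_0$, then passing to the limit via the contraction guarantees of Lemma~\ref{lem:in Contraction} and Theorem~\ref{prop:Contraction}, yields $Q^*_{\mathrm{DMG}}(s,a) \geq Q^*_{\mathrm{In}}(s,a)$ on the in-sample region $\hat\beta(a|s)>0$.

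Next I would convert this Q-level inequality into the desired V-level one. For $s\in\mathcal{D}$ the in-sample optimal policy satisfies $V^{\pi^*_{\mathrm{In}}}(s) = \max_{a\sim\hat\beta(\cdot|s)} Q^*_{\mathrm{In}}(s,a)$ by definition of $\pi^*_{\mathrm{In}}$. For the DMG side, the policy induced by $Q^*_{\mathrm{DMG}}$ is the $\lambda$-mixture of the greedy choices from $\tilde\beta$ and $\hat\beta$; under Assumption~\ref{ass:Oracle generalization} the fixed-point identity $Q^*_{\mathrm{DMG}}=\mathcal{T}_{\mathrm{DMG}}Q^*_{\mathrm{DMG}}$ coincides with the true policy-evaluation Bellman equation for this mixture policy, so
\begin{equation*}
V^{\pi^*_{\mathrm{DMG}}}(s) \;=\; \lambda \max_{a\sim\tilde\beta(\cdot|s)} Q^*_{\mathrm{DMG}}(s,a) \;+\; (1-\lambda)\max_{a\sim\hat\beta(\cdot|s)} Q^*_{\mathrm{DMG}}(s,a).
\end{equation*}
Using $\mathrm{supp}(\hat\beta)\subseteq\mathrm{supp}(\tilde\beta)$ once more, both terms on the right are at least $\max_{a\sim\hat\beta(\cdot|s)} Q^*_{\mathrm{DMG}}(s,a)$, which by Step~1 is at least $\max_{a\sim\hat\beta(\cdot|s)} Q^*_{\mathrm{In}}(s,a) = V^{\pi^*_{\mathrm{In}}}(s)$. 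Chaining the inequalities finishes the proof.

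The routine part is the monotone-contraction step; the main obstacle I anticipate is pinning down what ``induced policy'' means for $\mathcal{T}_{\mathrm{DMG}}$ so that the V-identity above is legitimate, and checking that Assumption~\ref{ass:Oracle generalization} is strong enough to equate the algorithmic fixed point $Q^*_{\mathrm{DMG}}$ with the true $Q^{\pi^*_{\mathrm{DMG}}}$ at the queried $(s,a)$ pairs (otherwise the step $V^{\pi^*_{\mathrm{DMG}}}(s) = \lambda\max_{\tilde\beta}Q^*_{\mathrm{DMG}} + (1-\lambda)\max_{\hat\beta}Q^*_{\mathrm{DMG}}$ could fail because $\tilde\beta$-actions lie outside $\mathcal{D}$). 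Once this identification is granted, the argument is essentially a two-line dominance chain.
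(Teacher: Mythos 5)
Your first step---the pointwise dominance $\mathcal{T}_{\mathrm{DMG}} f \geq \mathcal{T}_{\mathrm{In}} f$ from the support inclusion, monotonicity of both operators, induction on $k$, and passage to the fixed points via the two contraction results---is exactly the paper's argument, and it correctly yields $Q^*_{\mathrm{DMG}}(s,a)\geq Q^*_{\mathrm{In}}(s,a)$ on the in-sample region $\hat\beta(a|s)>0$. The only place you diverge is the final conversion to values, which is precisely the point you flagged as uncertain. The paper does not treat $\pi^*_{\mathrm{DMG}}$ as a $\lambda$-mixture of two greedy choices; it defines $\pi^*_{\mathrm{DMG}}(s):=\argmax_{a\sim\tilde\beta(\cdot|s)}Q^*_{\mathrm{DMG}}(s,a)$, a deterministic greedy policy over $\mathrm{supp}(\tilde\beta)$, and closes with the chain $V^{\pi^*_{\mathrm{DMG}}}(s)=Q^*_{\mathrm{DMG}}(s,\pi^*_{\mathrm{DMG}}(s))\geq Q^*_{\mathrm{DMG}}(s,\pi^*_{\mathrm{In}}(s))\geq Q^*_{\mathrm{In}}(s,\pi^*_{\mathrm{In}}(s))=V^{\pi^*_{\mathrm{In}}}(s)$, where the first inequality uses $\pi^*_{\mathrm{In}}(s)\in\mathrm{supp}(\tilde\beta(\cdot|s))$. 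This sidesteps the mixture-policy value identity you would otherwise have to justify; your version still closes (both terms of your mixture dominate $\max_{a\sim\hat\beta(\cdot|s)}Q^*_{\mathrm{DMG}}(s,a)$), but as written it bounds the value of a different induced policy than the one the theorem names, so you should adopt the paper's greedy definition of $\pi^*_{\mathrm{DMG}}$ to match the statement. In substance the two proofs are the same.
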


\cref{thm:Performance} indicates that the policy learned by \algo can achieve better performance than the in-sample optimal policy under the oracle generalization condition.

\subsection{Worst-case Generalization}
\label{sec:Worst-case generalization}
This section turns to the analyses in the worst-case generalization scenario, where the learned value functions may exhibit poor generalization in the mild generalization area $\tilde \beta(a|s)>0$. In other words, this section considers that $\mathcal{T}_{\mathrm{\algo}}$ is only defined in the in-sample area $\hat \beta(a|s)>0$ and the learned value functions may have any generalization error at other state-action pairs. In this case, we use the notation $\hat{\mathcal{T}}_{\mathrm{\algo}}$ to tell the difference. 

We make continuity assumptions about the learned Q function and the transition dynamics.

\begin{assumption}[Lipschitz $Q$]
\label{ass:lQ}
The learned Q function is $K_Q$-Lipschitz. $\forall s \sim \Dcal$, $\forall a_1, a_2 \sim \Acal$, $|Q(s,a_1)-Q(s,a_2)| \leq K_Q \|a_1-a_2\|$
\end{assumption}

\begin{assumption}[Lipschitz $P$]
\label{ass:lP}
The transition dynamics $P$ is $K_P$-Lipschitz. $\forall s,s' \sim \Scal$, $\forall a_1,a_2 \sim \Acal$, $|P(s'|s,a_1)-P(s'|s,a_2)| \leq K_P \|a_1-a_2\|$
\end{assumption}

For \cref{ass:lQ}, a continuous learned Q function is particularly necessary for analyzing value function generalization and can be relatively easily satisfied using neural networks or linear models~\cite{gouk2021regularisation}. \cref{ass:lP} is also a common assumption in theoretical studies of RL~\cite{dufour2013finite,xiong2022deterministic,ran2023policy}.

Now we consider the iteration starting from arbitrary function $Q^0$: $\hat Q^k_{\mathrm{\algo}}=\hat{\mathcal{T}}_{\mathrm{\algo}} \hat Q^{k-1}_{\mathrm{\algo}}$ and $Q^k_{\mathrm{In}}=\mathcal{T}_{\mathrm{In}} Q^{k-1}_{\mathrm{In}}$, $\forall k \in \mathbb{Z}^+$. The possible value of $\hat Q^k_{\mathrm{\algo}}$ is bounded by the following results.
\begin{theorem}[Limited overestimation]
\label{prop:Limited over-estimation}
Under \cref{ass:lQ}, the learned Q function of \algo by iterating $\hat{\mathcal{T}}_{\mathrm{\algo}}$ satisfies the following inequality
\begin{equation}
\label{eq:over-e}
Q^k_{\mathrm{In}}(s,a) \leq \hat Q^k_{\mathrm{\algo}}(s,a) \leq Q^k_{\mathrm{In}}(s,a) + \frac{\lambda\epsilon_a K_Q \gamma}{1-\gamma}(1-\gamma^k), ~\forall s,a \sim \Dcal, ~\forall k \in \mathbb{Z}^+.
\end{equation}
\end{theorem}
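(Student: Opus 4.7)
The plan is to prove both inequalities jointly by induction on $k$, exploiting the structural relationship between the two operators. Observe that $\hat{\mathcal T}_{\mathrm{\algo}}$ and $\mathcal T_{\mathrm{In}}$ share the same reward term, discount $\gamma$, expectation over $s'$, and the $(1-\lambda)\max_{a'\sim\hat\beta}Q(s',a')$ piece; they differ only by the extra $\lambda\max_{a'\sim\tilde\beta}Q(s',a')$ term in the Bellman target. So the entire proof reduces to controlling this single term from above and below for a fixed $Q$, then threading the resulting one-step deviations through the recursion.

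For the lower bound, \cref{def:mg} gives $\mathrm{supp}(\hat\beta(\cdot|s'))\subseteq\mathrm{supp}(\tilde\beta(\cdot|s'))$, so $\max_{a'\sim\tilde\beta}Q(s',a')\geq\max_{a'\sim\hat\beta}Q(s',a')$ for any $Q$, which yields $\hat{\mathcal T}_{\mathrm{\algo}}Q\geq\mathcal T_{\mathrm{In}}Q$ pointwise on $\mathcal D$. Starting from the common initializer $\hat Q^0_{\mathrm{\algo}}=Q^0_{\mathrm{In}}=Q^0$ and invoking the monotonicity of $\mathcal T_{\mathrm{In}}$ together with the inductive hypothesis produces $\hat Q^k_{\mathrm{\algo}}\geq\mathcal T_{\mathrm{In}}\hat Q^{k-1}_{\mathrm{\algo}}\geq\mathcal T_{\mathrm{In}}Q^{k-1}_{\mathrm{In}}=Q^k_{\mathrm{In}}$, closing that half.

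For the upper bound, the key step is to uniformly bound the gap $\max_{a'\sim\tilde\beta}Q(s',a')-\max_{a'\sim\hat\beta}Q(s',a')$. By \cref{def:mg}, every $a_1\in\mathrm{supp}(\tilde\beta(\cdot|s'))$ admits some $a_2\in\mathrm{supp}(\hat\beta(\cdot|s'))$ with $\|a_1-a_2\|\leq\epsilon_a$; \cref{ass:lQ} then gives $Q(s',a_1)\leq Q(s',a_2)+K_Q\epsilon_a$, and taking suprema on both sides bounds the gap by $K_Q\epsilon_a$. Hence a single application of $\hat{\mathcal T}_{\mathrm{\algo}}$ on a fixed $Q$ exceeds $\mathcal T_{\mathrm{In}}Q$ by at most $\lambda\gamma K_Q\epsilon_a$. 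Feeding in the inductive hypothesis $\hat Q^{k-1}_{\mathrm{\algo}}\leq Q^{k-1}_{\mathrm{In}}+\frac{\lambda\epsilon_a K_Q\gamma}{1-\gamma}(1-\gamma^{k-1})$ multiplies the accumulated error by an additional $\gamma$ through the operator, and the total
\[
\gamma\cdot\frac{\lambda\epsilon_a K_Q\gamma(1-\gamma^{k-1})}{1-\gamma}+\lambda\gamma K_Q\epsilon_a=\frac{\lambda\epsilon_a K_Q\gamma(1-\gamma^k)}{1-\gamma}
\]
collapses algebraically to the stated bound, closing the induction.

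The main obstacle is mostly careful bookkeeping rather than substance: one must clearly separate the single-step slippage $\lambda\gamma K_Q\epsilon_a$ (bounding $\hat{\mathcal T}_{\mathrm{\algo}}Q-\mathcal T_{\mathrm{In}}Q$ for a \emph{fixed} argument) from the compounded error accumulated across iterations on different iterates, so that the geometric series $\lambda\gamma K_Q\epsilon_a\sum_{j=0}^{k-1}\gamma^j$ reproduces the factor $\frac{1-\gamma^k}{1-\gamma}$ exactly. A minor subtlety is that \cref{ass:lQ} is stated for $s\sim\mathcal D$ while the Lipschitz argument is applied at successor states $s'\sim P(\cdot|s,a)$; this is harmless since $\hat{\mathcal T}_{\mathrm{\algo}}$ only queries $Q$ at in-sample $(s',a')$ pairs recorded in $\mathcal D$, so \cref{ass:lQ} applies directly at every state where the gap must be controlled.
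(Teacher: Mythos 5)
Your proposal is correct and follows essentially the same route as the paper's proof: the paper likewise isolates the one-step gap $\hat{\mathcal T}_{\mathrm{DMG}}f-\mathcal T_{\mathrm{In}}f\in[0,\gamma\lambda\epsilon_a K_Q]$ via a lemma bounding $\max_{a\sim\tilde\beta}f-\max_{a\sim\hat\beta}f$ by $\epsilon_a K_Q$ (using \cref{def:mg} and \cref{ass:lQ}), and then closes both inequalities by induction on $k$ using the monotonicity of $\mathcal T_{\mathrm{In}}$, with the same geometric-series bookkeeping. Your remark about applying the Lipschitz bound at successor states matches how the paper implicitly uses its lemma at $s'\sim P(\cdot|s,a)$ for in-dataset transitions.
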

Since in-sample training eliminates the extrapolation error~\cite{kostrikov2022offline,zhang2023insample}, $Q^k_{\mathrm{In}}$ can be considered a relatively accurate estimate~\cite{kostrikov2022offline}. Therefore, \cref{prop:Limited over-estimation} suggests that \algo exhibits limited value overestimation under the worst-case generalization scenario. Moreover, the bound becomes tighter as $\epsilon_a$ decreases (milder action generalization) and $\lambda$ decreases (milder generalization propagation). This is consistent with our intuitions in \cref{sec:Doubly mild generalization}.

Finally, we show in \cref{prop:Performance lower bound} that even under worst-case generalization, \algo guarantees to output a safe policy with a performance lower bound.

\begin{theorem}[Performance lower bound]
\label{prop:Performance lower bound}
Let $\hat\pi_{\mathrm{\algo}}$ be the learned policy of \algo by iterating $\hat{\mathcal{T}}_{\mathrm{\algo}}$, $\pi^*$ be the optimal policy, and $\epsilon_{\Dcal}$ be the inherent performance gap of the in-sample optimal policy $\epsilon_{\Dcal}:=J(\pi^*)-J(\pi^*_{\mathrm{In}})$. Under Assumptions \ref{ass:lQ} and \ref{ass:lP}, for sufficiently small $\epsilon_a$, we have
\begin{equation}
J(\hat{\pi}_{\mathrm{\algo}}) \geq J(\pi^*) - \frac{CK_P\Rmax}{1-\gamma}\epsilon_a - \epsilon_{\Dcal}.
\end{equation}
where $C$ is a positive constant.
\end{theorem}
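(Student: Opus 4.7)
The plan is to reduce the bound via the triangle-like decomposition
\begin{equation*}
J(\pi^*) - J(\hat\pi_{\mathrm{\algo}}) = \bigl[J(\pi^*) - J(\pi^*_{\mathrm{In}})\bigr] + \bigl[J(\pi^*_{\mathrm{In}}) - J(\hat\pi_{\mathrm{\algo}})\bigr] = \epsilon_{\Dcal} + \bigl[J(\pi^*_{\mathrm{In}}) - J(\hat\pi_{\mathrm{\algo}})\bigr],
\end{equation*}
so it suffices to bound $J(\pi^*_{\mathrm{In}}) - J(\hat\pi_{\mathrm{\algo}}) \leq \tfrac{CK_P\Rmax}{1-\gamma}\epsilon_a$. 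Intuitively, $\epsilon_{\Dcal}$ absorbs the intrinsic dataset limitation while the $\epsilon_a$-term should absorb how far \algo\ strays beyond in-sample behaviour.

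First I would compare the learned value $\hat Q^*_{\mathrm{\algo}}$ (the limit of $\hat{\mathcal{T}}_{\mathrm{\algo}}$ iteration) with $V^{\pi^*_{\mathrm{In}}}$ on the dataset. Because $\mathrm{supp}(\hat\beta(\cdot|s)) \subseteq \mathrm{supp}(\tilde\beta(\cdot|s))$ and $\hat\pi_{\mathrm{\algo}}$ is the greedy policy of $\hat Q^*_{\mathrm{\algo}}$ over $\tilde\beta$, we automatically have $\hat Q^*_{\mathrm{\algo}}(s,\hat\pi_{\mathrm{\algo}}(s)) \geq \hat Q^*_{\mathrm{\algo}}(s,\pi^*_{\mathrm{In}}(s))$. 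The lower bound of \cref{prop:Limited over-estimation} then gives $\hat Q^*_{\mathrm{\algo}}(s,\pi^*_{\mathrm{In}}(s)) \geq Q^*_{\mathrm{In}}(s,\pi^*_{\mathrm{In}}(s)) = V^{\pi^*_{\mathrm{In}}}(s)$ for $s \in \Dcal$. In other words, $\hat\pi_{\mathrm{\algo}}$ looks at least as good as $\pi^*_{\mathrm{In}}$ in the eyes of its own critic.

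The hard part will be translating this critic-level inequality into a bound on the true return $J(\hat\pi_{\mathrm{\algo}})$, because $\hat\pi_{\mathrm{\algo}}(s)$ may land outside $\mathrm{supp}(\hat\beta(\cdot|s))$, where $\hat Q^*_{\mathrm{\algo}}$ is pinned only by arbitrary worst-case generalisation. I plan to close this gap with a coupling/simulation argument. By \cref{def:mg}, every selected action $\hat\pi_{\mathrm{\algo}}(s)$ is within $\epsilon_a$ of some in-sample action $\tilde a(s)$. \cref{ass:lQ} then bounds the single-step critic extrapolation error by $\mathcal{O}(K_Q\epsilon_a)$, and \cref{ass:lP} bounds the single-step total-variation shift between the trajectory of $\hat\pi_{\mathrm{\algo}}$ and a ``shadow'' in-sample rollout through $\tilde a(\cdot)$ by $\mathcal{O}(K_P\epsilon_a)$. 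Rolling this recursion forward with discount $\gamma$ and bounding rewards by $\Rmax$ accumulates to a return deviation of $\mathcal{O}\!\bigl(\tfrac{K_P\Rmax}{1-\gamma}\epsilon_a\bigr)$.

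Putting the pieces together, the lower bound of \cref{prop:Limited over-estimation} (giving $\hat Q^*_{\mathrm{\algo}} \geq V^{\pi^*_{\mathrm{In}}}$ pointwise on $\Dcal$), combined with the upper bound of the same theorem to control the critic's optimism at out-of-support queries, and with the simulation bound above to convert that controlled optimism into a true-return statement, yields $J(\pi^*_{\mathrm{In}}) - J(\hat\pi_{\mathrm{\algo}}) \leq \tfrac{CK_P\Rmax}{1-\gamma}\epsilon_a$. The ``sufficiently small $\epsilon_a$'' hypothesis is invoked precisely to keep the first-order Lipschitz remainders negligible and to guarantee that the shadow trajectory stays within the regime where \cref{prop:Limited over-estimation} applies. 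Adding back $\epsilon_{\Dcal}$ from the initial decomposition then produces the stated lower bound.
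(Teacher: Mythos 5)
Your high-level skeleton matches the paper's: the same decomposition through $\epsilon_{\Dcal}$, the same reduction of $J(\pi^*_{\mathrm{In}})-J(\hat{\pi}_{\mathrm{\algo}})$ to a total-variation bound between occupancy measures via \cref{ass:lP} (the paper invokes a lemma giving $\mathrm{TV}(d^{\pi_1}\|d^{\pi_2}) \leq CK_P\max_s\|\pi_1(s)-\pi_2(s)\|$), and the same use of \cref{prop:Limited over-estimation} together with \cref{ass:lQ}. But there is a genuine gap at the heart of your argument. Your coupling step compares the rollout of $\hat{\pi}_{\mathrm{\algo}}$ against a ``shadow'' policy built from \emph{some} in-sample action $\tilde a(s)$ within $\epsilon_a$ of $\hat{\pi}_{\mathrm{\algo}}(s)$ (which exists by \cref{def:mg}). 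That only bounds $|J(\hat{\pi}_{\mathrm{\algo}})-J(\tilde a(\cdot))|$; it says nothing about $J(\pi^*_{\mathrm{In}})-J(\hat{\pi}_{\mathrm{\algo}})$ unless the shadow policy is $\pi^*_{\mathrm{In}}$ itself. The nearest dataset action to the greedy action could a priori be a poor action, and ``$\hat{\pi}_{\mathrm{\algo}}$ looks at least as good as $\pi^*_{\mathrm{In}}$ to its own critic'' does not by itself transfer to true returns, precisely because the critic may be arbitrarily wrong off-support in the worst-case regime.

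The missing step — and the actual content of the paper's proof — is to show that for sufficiently small $\epsilon_a$ the nearest in-sample neighbor $a_{\mathrm{in}}$ of $\hat{\pi}_{\mathrm{\algo}}(s)$ \emph{must equal} $\pi^*_{\mathrm{In}}(s)$. This is done by contradiction: if $a_{\mathrm{in}}$ were suboptimal with gap $\epsilon_{Q^*_{\mathrm{In}}}(s) := Q^*_{\mathrm{In}}(s,\pi^*_{\mathrm{In}}(s)) - Q^*_{\mathrm{In}}(s,a_{\mathrm{in}}) > 0$, then chaining $\hat Q_{\mathrm{\algo}}(s,\hat{\pi}_{\mathrm{\algo}}(s)) \leq \hat Q_{\mathrm{\algo}}(s,a_{\mathrm{in}}) + K_Q\epsilon_a$ (Lipschitz $Q$) with the two-sided bound of \cref{prop:Limited over-estimation} at the in-sample points $a_{\mathrm{in}}$ and $\pi^*_{\mathrm{In}}(s)$ yields $\hat Q_{\mathrm{\algo}}(s,\pi^*_{\mathrm{In}}(s)) - \hat Q_{\mathrm{\algo}}(s,\hat{\pi}_{\mathrm{\algo}}(s)) \geq \epsilon_{Q^*_{\mathrm{In}}}(s) - \frac{\lambda\epsilon_a K_Q\gamma}{1-\gamma} - K_Q\epsilon_a$, which is positive once $\epsilon_a < \frac{(1-\gamma)\epsilon_{Q^*_{\mathrm{In}}}(s)}{K_Q(1-\gamma+\lambda\gamma)}$, contradicting greediness of $\hat{\pi}_{\mathrm{\algo}}$ over $\tilde\beta$. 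This pins $\|\hat{\pi}_{\mathrm{\algo}}(s)-\pi^*_{\mathrm{In}}(s)\|\leq\epsilon_a$ and makes the TV bound apply to the correct pair of policies. Note also that ``sufficiently small $\epsilon_a$'' is not about keeping Lipschitz remainders negligible, as you suggest; it is exactly the gap condition above. You have all the ingredients on the table, but without this identification step the proof does not close.
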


\subsection{Practical Algorithm}
\label{sec:Practical algorithm}
This section puts \algo into implementation and presents a simple yet effective practical algorithm. The algorithm comprises the following networks: policy $\pi_\phi$, target policy $\pi_{\phi'}$, $Q$ network $Q_\theta$, target $Q$ network $Q_{\theta'}$, and $V$ network $V_\psi$.
\paragraph{Policy learning.}
Practically, we expect \algo to exhibit a tendency towards mild generalization around good actions in the dataset. To this end, we first consider reshaping the empirical behavior policy $\hat \beta$ to be skewed towards actions with high advantage values $\hat\beta^*(a|s) \propto \hat\beta(a|s)\exp(A(s,a)) $. Then we enforce the proximity between the trained policy and the reshaped behavior policy to constrain the generalization area. We define the generalization set $\Pi_G$ as follows.
\begin{equation}
\Pi_G = \{ \pi ~|~ \mathrm{KL}(\hat\beta^*(\cdot|s) \| \pi(\cdot|s)) \leq \epsilon \}
\end{equation}
Note that forward KL allows $\pi$ to select actions outside the support of $\hat\beta^*$, enabling $\Pi_G$ to generalize beyond the actions in the dataset. With $\Pi_G$ defined, the next step is to compute the maximal $Q$ within $\Pi_G$. To accomplish this, we adopt Actor-Critic style training~\cite{sutton2018reinforcement} for this part.
\begin{equation}
\max_\phi \mathbb{E}_{s\sim \mathcal{D}, a\sim \pi_\phi(\cdot|s)} Q_\theta(s,a), \quad s.t. ~  \pi_\phi \in \Pi_G
\end{equation}
By treating the constraint term as a penalty, we maximize the following objective.
\begin{equation}
\label{eq:KLreg}
\max_\phi \mathbb{E}_{s\sim \mathcal{D}, a\sim \pi_\phi(\cdot|s)} Q_\theta(s,a) - \nu \mathbb{E}_{s\sim \mathcal{D}} \mathrm{KL}(\hat\beta^*(\cdot|s) \| \pi_\phi(\cdot|s))
\end{equation}
Through straightforward derivations, Eq.~\eqref{eq:KLreg} is equivalent to the following policy training objective.
\begin{equation}
\label{eq:pi}
J_{\pi}(\phi) = \mathbb{E}_{s\sim \mathcal{D}, a\sim \pi_\phi(\cdot|s)} Q_\theta(s,a) - \nu \mathbb{E}_{(s,a)\sim \mathcal{D}} \left[\exp (\alpha(Q_{\theta'}(s,a) - V_\psi(s))) \log \pi_\phi(a|s) \right]
\end{equation}
where $\alpha$ is an inverse temperature and $Q_{\theta'}(s,a) - V_\psi(s)$ computes the advantage function $A(s,a)$.

\begin{wrapfigure}{R}{0.466\textwidth}
\vspace{-0.7cm}
\begin{minipage}{0.466\textwidth}
\begin{algorithm}[H]
\caption{\algo}
\label{alg}
\begin{algorithmic}[1] 
\STATE Initialize $\pi_\phi$, $\pi_{\phi'}$, $Q_\theta$, $Q_{\theta'}$, and $V_\psi$.
\FOR{each gradient step}
\STATE Update $\psi$ by minimizing Eq.~(\ref{eq:V})
\STATE Update $\theta$ by minimizing Eq.~(\ref{eq:Q})
\STATE Update $\phi$ by maximizing Eq.~(\ref{eq:pi})
\STATE Update target networks: $\theta' \leftarrow (1-\xi){\theta'} + \xi\theta$, $\phi' \leftarrow (1-\xi){\phi'} + \xi\phi$
\ENDFOR
\end{algorithmic}
\end{algorithm}
\end{minipage}
\vspace{-0.1cm}
\end{wrapfigure}

\paragraph{Value learning.}
Now we turn to the implementation of the $\mathcal{T}_{\mathrm{\algo}}$ operator for training value functions. By introducing the aforementioned policy, we can substitute $\max_{a\sim \tilde \beta}$ in $\mathcal{T}_{\mathrm{\algo}}$ with $\mathbb{E}_{a\sim \pi}$. Regarding $\max_{a\sim \hat \beta}$ in $\mathcal{T}_{\mathrm{\algo}}$, any in-sample learning techniques can be employed to compute the in-sample maximum~\cite{kostrikov2022offline,xu2023offline,xiao2023the,garg2023extreme}. In particular, based on IQL~\cite{kostrikov2022offline}, we perform expectile regression.
\begin{equation}
\label{eq:V}
L_{V}(\psi) = \underset{(s,a)\sim \mathcal{D}}{\mathbb{E}} \left[L^\tau_2 \left(Q_{\theta'}(s,a) - V_\psi(s) \right) \right]
\end{equation}
where $L_2^\tau(u) = |\tau-\mathbbm{1}(u < 0)|u^2$ and $\tau \in (0,1)$. For $\tau \approx 1$, $V_\psi$ can capture the in-sample maximal $Q$~\cite{kostrikov2022offline}. Finally, we have the following value training loss.
\begin{equation}
\label{eq:Q}
L_{Q}(\theta) = \mathbb{E}_{(s,a,s')\sim \mathcal{D}}\left[\left(Q_\theta(s,a) - R(s,a) - \gamma \lambda \mathbb{E}_{a'\sim \pi_{\phi'}}Q_{\theta'}(s',a') - \gamma(1-\lambda)V_\psi(s')\right)^2\right]
\end{equation}

\paragraph{Overall algorithm.}
Integrating all components, we present our practical algorithm in \cref{alg}.

\section{Discussions and Related Work}
\paragraph{Summary of offline RL work from a generalization perspective.}
As analyzed above, \algo is featured in both mild action generalization and mild generalization propagation. Within the actor-critic framework upon which most offline RL algorithms are built, these two aspects correspond to the policy and value training phases, respectively. Action generalization concerns whether the policy training intentionally selects actions beyond the dataset to maximize Q values, while generalization propagation involves whether value training propagates generalization through bootstrapping. \cref{tab:summary} presents a clear comparison of offline RL works in this generalization view. The table shows one representative method of each category and we elaborate on others as follows.

\begin{table}[htbp]
\caption{Comparison of offline RL work from the generalization perspective.}
\vspace{-2mm}
\label{tab:summary}
\begin{center}
\begin{tabular}{lccccc}
\toprule
 &IQL&AWAC&TD3BC&TD3&\algo~(Ours)      \\ 
\midrule
Action generalization&\textit{none}&\textit{none}&\textit{mild}&\textit{full}& \textit{mild}  \\
\midrule
Generalization propagation&\textit{none}&\textit{full}&\textit{full}&\textit{full}& \textit{mild} \\
\bottomrule
\end{tabular}
\end{center}
\vspace{-2mm}
\end{table}

Concerning policy learning, AWR~\cite{peng2019advantage}, AWAC~\cite{nair2020awac}, CRR~\cite{wang2020critic}, $10\%$ BC~\cite{chen2021decision}, IQL~\cite{kostrikov2022offline}, and other works such as \cite{wang2018exponentially,chen2020bail,Siegel2020Keep,garg2023extreme,xu2023offline} extract policies through weighted or filtered behavior cloning, thereby lacking intentional action generalization to maximize Q values beyond the dataset. Typical policy-regularized offline RL methods like TD3BC~\cite{fujimoto2021minimalist}, BRAC~\cite{wu2019behavior}, BEAR~\cite{kumar2019stabilizing}, SPOT~\cite{wu2022supported}, and others such as \cite{wang2023diffusion,ran2023policy,tarasov2024revisiting} introduce regularization terms to Q maximization objectives to regularize the trained policy towards the behavior policy and allows mild action generalization. Online RL algorithms like TD3~\cite{fujimoto2018addressing} and SAC~\cite{haarnoja2018soft} have no constraints and maximize Q values in the entire action space, corresponding to full action generalization.
Regarding value training, in-sample learning methods including OneStep RL~\cite{brandfonbrener2021offline}, IQL~\cite{kostrikov2022offline}, InAC~\cite{xiao2023the}, IAC~\cite{zhang2023insample}, $\mathcal{X}$QL~\cite{garg2023extreme}, and SQL~\cite{xu2023offline} completely avoid generalization propagation and accumulation via bootstrapping, whereas typical offline and online RL approaches allow full generalization propagation through bootstrapping. In the proposed approach \algo, generalization is mild in both aspects.

Recently, \citet{ma2023reining} have also drawn attention to generalization in offline RL and the issue of over-generalization. They mitigate over-generalization from a representation perspective, differentiating between the representations of in-sample and OOD state-action pairs. \citet{lyu2022mildly} argue that conventional value penalization like CQL~\cite{kumar2020conservative} tends to harm the generalization of value functions and hinder performance improvement. They propose mild value penalization to mitigate the detrimental effects of value penalization on generalization.

\paragraph{Connection to heuristic blending approaches.}
Our approach also relates to the framework of blending heuristics into bootstrapping~\cite{cheng2021heuristic,wilcox2022monte,sutton2016emphatic,imani2018off,wright2013exploiting,geng2024improving}. In offline RL, HUBL~\cite{geng2024improving} blends Monte-Carlo returns into bootstrapping and acts as a data relabeling step, which reduces the degree of bootstrapping and thereby increases its performance. In contrast, \algo blends the in-sample maximal values into the bootstrapping operator. \algo does not reduce the discount for RL learning but reduces the discount for generalization propagation. 

For extended discussions on related work, please refer to \cref{app_sec:related work}.

\section{Experiments}
In this section, we conduct several experiments to justify the validity of the proposed method \algo. 
Experimental details and extended results are provided in \cref{app_sec:experimental_details,app_sec:experimental_results}, respectively.

\begin{table}[t]
  \caption{Averaged normalized scores on Gym locomotion and Antmaze tasks over five random seeds. 
  m = medium, m-r = medium-replay, m-e = medium-expert, e = expert, r = random; u = umaze, u-d = umaze-diverse, m-p = medium-play, m-d = medium-diverse, l-p= large-play, l-d = large-diverse.
  }
  \vspace{-1mm}
  \label{tab:d4rl}
  \small
    \footnotesize
  \centering
  \setlength{\tabcolsep}{3.7pt}
  \begin{adjustbox}{max width=390pt}
  \begin{tabular}{@{}l rrrrrrrrr r@{}}
    \toprule
    Dataset-v2 & BC & BCQ & BEAR & DT & AWAC & OneStep & TD3BC & CQL & IQL & \algo~(Ours) \\ \midrule
    halfcheetah-m & 42.0 & 46.6 & 43.0 & 42.6 & 47.9 & 50.4 & 48.3 & 47.0 & 47.4 & \textbf{54.9$\pm$0.2} \\ 
    hopper-m & 56.2 & 59.4 & 51.8 & 67.6 & 59.8 & 87.5 & 59.3 & 53.0 & 66.2 & \textbf{100.6$\pm$1.9} \\ 
    walker2d-m & 71.0 & 71.8 & -0.2 & 74.0 & 83.1 & 84.8 & 83.7 & 73.3 & 78.3 & \textbf{92.4$\pm$2.7} \\ 
    halfcheetah-m-r & 36.4 & 42.2 & 36.3 & 36.6 & 44.8 & 42.7 & 44.6 & 45.5 & 44.2 & \textbf{51.4$\pm$0.3} \\ 
    hopper-m-r & 21.8 & 60.9 & 52.2 & 82.7 & 69.8 & 98.5 & 60.9 & 88.7 & 94.7 & \textbf{101.9$\pm$1.4} \\ 
    walker2d-m-r & 24.9 & 57.0 & 7.0 & 66.6 & 78.1 & 61.7 & 81.8 & 81.8 & 73.8 & \textbf{89.7$\pm$5.0} \\ 
    halfcheetah-m-e & 59.6 & \textbf{95.4} & 46.0 & 86.8 & 64.9 & 75.1 & 90.7 & 75.6 & 86.7 & 91.1$\pm$4.2 \\ 
    hopper-m-e & 51.7 & 106.9 & 50.6 & 107.6 & 100.1 & 108.6 & 98.0 & 105.6 & 91.5 & \textbf{110.4$\pm$3.4} \\ 
    walker2d-m-e & 101.2 & 107.7 & 22.1 & 108.1 & 110.0 & 111.3 & 110.1 & 107.9 & 109.6 & \textbf{114.4$\pm$0.7} \\ 
    halfcheetah-e & 92.9 & 89.9 & 92.7 & 87.7 & 81.7 & 88.2 & \textbf{96.7} & \textbf{96.3} & \textbf{95.0} & \textbf{95.9$\pm$0.3} \\ 
    hopper-e & \textbf{110.9} & 109.0 & 54.6 & 94.2 & \textbf{109.5} & 106.9 & 107.8 & 96.5 & \textbf{109.4} & \textbf{111.5$\pm$2.2} \\ 
    walker2d-e & 107.7 & 106.3 & 106.6 & 108.3 & 110.1 & 110.7 & 110.2 & 108.5 & 109.9 & \textbf{114.7$\pm$0.4} \\ 
    halfcheetah-r & 2.6 & 2.2 & 2.3 & 2.2 & 6.1 & 2.3 & 11.0 & 17.5 & 13.1 & \textbf{28.8$\pm$1.3} \\ 
    hopper-r & 4.1 & 7.8 & 3.9 & 5.4 & 9.2 & 5.6 & 8.5 & 7.9 & 7.9 & \textbf{20.4$\pm$10.4} \\ 
    walker2d-r & 1.2 & 4.9 & \textbf{12.8} & 2.2 & 0.2 & 6.9 & 1.6 & 5.1 & 5.4 & 4.8$\pm$2.2 \\ \midrule
    locomotion total & 784.2 & 968.0 & 581.7 & 972.6 & 975.3 & 1041.2 & 1013.2 & 1010.2 & 1033.1 & \textbf{1182.8} \\ 
    \midrule
    antmaze-u & 66.8 & 78.9 & 73.0 & 54.2 & 80.0 & 54.0 & 73.0 & 82.6 & 89.6 & \textbf{92.4$\pm$1.8} \\ 
    antmaze-u-d & 56.8 & 55.0 & 61.0 & 41.2 & 52.0 & 57.8 & 47.0 & 10.2 & 65.6 & \textbf{75.4$\pm$8.1} \\ 
    antmaze-m-p & 0.0 & 0.0 & 0.0 & 0.0 & 0.0 & 0.0 & 0.0 & 59.0 & 76.4 & \textbf{80.2$\pm$5.1} \\ 
    antmaze-m-d & 0.0 & 0.0 & 8.0 & 0.0 & 0.2 & 0.6 & 0.2 & 46.6 & 72.8 & \textbf{77.2$\pm$6.1} \\ 
    antmaze-l-p & 0.0 & 6.7 & 0.0 & 0.0 & 0.0 & 0.0 & 0.0 & 16.4 & 42.0 & \textbf{55.4$\pm$6.2} \\ 
    antmaze-l-d & 0.0 & 2.2 & 0.0 & 0.0 & 0.0 & 0.2 & 0.0 & 3.2 & 46.0 & \textbf{58.8$\pm$4.5} \\ \midrule
    antmaze total & 123.6 & 142.8 & 142.0 & 95.4 & 132.2 & 112.6 & 120.2 & 218.0 & 392.4 & \textbf{439.4} \\ 
    \bottomrule
  \end{tabular}
  \end{adjustbox}
  \vspace{-1.5mm}
\end{table}

\subsection{Main Results on Offline RL Benchmarks}

\paragraph{Tasks.} We evaluate the proposed approach on Gym-MuJoCo locomotion domains and challenging AntMaze domains in D4RL~\cite{fu2020d4rl}. The latter involves sparse-reward tasks and necessitates “stitching” fragments of suboptimal trajectories traveling undirectedly to find a path to the goal of the maze.

\paragraph{Baselines.} Our offline RL baselines include both typical bootstrapping methods and in-sample learning approaches. For the former, we compare to BCQ~\cite{fujimoto2019off}, BEAR~\cite{kumar2019stabilizing}, AWAC~\cite{nair2020awac}, TD3BC~\cite{fujimoto2021minimalist}, and CQL~\cite{kumar2020conservative}. For the latter, we compare to BC~\cite{pomerleau1988alvinn}, OneStep RL~\cite{brandfonbrener2021offline}, IQL~\cite{kostrikov2022offline}, $\mathcal{X}$QL~\cite{garg2023extreme}, and SQL~\cite{xu2023offline}. We also include the sequence-modeling method Decision Transformer~(DT)~\cite{chen2021decision}.

\paragraph{Comparison with baselines.} Aggregated results are displayed in \cref{tab:d4rl}. On the Gym locomotion tasks, \algo outperforms prior methods on most tasks and achieves the highest total score. On the much more challenging AntMaze tasks, \algo outperforms all the baselines by a large margin, especially in the most difficult large mazes. For detailed learning curves, please refer to \cref{app_sec:offline curves}.
According to \citep{patterson2023empirical}, we also report the results of \algo over more random seeds in \cref{app_sec:10 seeds}.

\paragraph{Runtime.} 
We test the runtime of \algo and other baselines on a GeForce RTX 3090. As shown in \cref{app_sec:computational cost}, the runtime of \algo is comparable to that of the fastest offline RL algorithm TD3BC.

\subsection{Performance Improvement over In-sample Learning Approaches}

\begin{wraptable}{r}{0.53\textwidth}
\vspace{-6.5mm}
\begin{center}
\begin{small}
\setlength{\tabcolsep}{7.0pt}
\caption{\algo combined with various in-sample approaches, showing averaged scores over 5 seeds.}
\label{tab:in-sample}
\vspace{2mm}
\begin{tabular}{lcc}
\toprule
Dataset-v2 & $\mathcal{X}$QL (+\algo) & SQL(+\algo) \\ \midrule
halfcheetah-m & 47.7 $\rightarrow$ \textbf{55.3} & 48.3 $\rightarrow$ \textbf{54.5} \\
hopper-m & 71.1 $\rightarrow$ \textbf{90.1} & 75.5 $\rightarrow$ \textbf{97.7} \\
walker2d-m & 81.5 $\rightarrow$ \textbf{88.7} & 84.2 $\rightarrow$ \textbf{89.8} \\
halfcheetah-m-r & 44.8 $\rightarrow$ \textbf{51.1} & 44.8 $\rightarrow$ \textbf{51.8} \\
hopper-m-r & 97.3 $\rightarrow$ \textbf{102.5} & \textbf{101.7} $\rightarrow$ \textbf{101.8} \\
walker2d-m-r & 75.9 $\rightarrow$ \textbf{90.0} & 77.2 $\rightarrow$ \textbf{95.2} \\
halfcheetah-m-e & 89.8 $\rightarrow$ \textbf{92.5} & \textbf{94.0} $\rightarrow$ \textbf{93.5} \\
hopper-m-e & 107.1 $\rightarrow$ \textbf{111.1} & \textbf{111.8} $\rightarrow$ \textbf{110.4} \\
walker2d-m-e & 110.1 $\rightarrow$ \textbf{111.3} & \textbf{110.0} $\rightarrow$ \textbf{109.6} \\
\midrule
total & 725.3 $\rightarrow$ \textbf{792.7} & 747.5 $\rightarrow$ \textbf{804.2} \\
\bottomrule
\end{tabular}
\end{small}
\end{center}
\vspace{-3mm}
\end{wraptable}

\algo can be combined with various in-sample learning approaches. Besides IQL~\cite{kostrikov2022offline}, we also apply \algo to two recent state-of-the-art in-sample algorithms, $\mathcal{X}$QL~\cite{garg2023extreme} and SQL~\cite{xu2023offline}. As shown in \cref{tab:in-sample} (and \cref{tab:d4rl}), \algo consistently and substantially improves upon these in-sample methods, particularly on sub-optimal datasets where generalization plays a crucial role in the pursuit of a better policy. This provides compelling empirical evidence that the performance of in-sample methods is largely confined by eschewing generalization beyond the dataset, while \algo effectively exploits generalization, achieving significantly improved performance across tasks.

\subsection{Ablation Study for Performance and Value Estimation}

\begin{figure}[t]
    \centering
    \includegraphics[width=\textwidth]{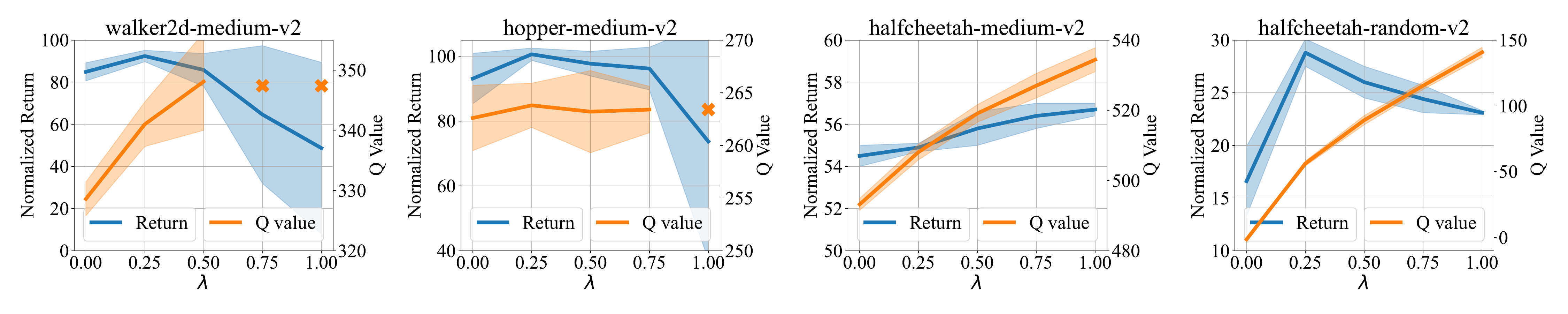}
    \vspace{-6mm}
    \caption{
    Performance and Q values of \algo with varying mixture coefficient $\lambda$ over 5 random seeds. The crosses $\times$ mean that the value functions diverge in several seeds.
    As $\lambda$ increases, \algo enables stronger generalization propagation, resulting in higher and probably divergent learned Q values. Mild generalization propagation plays a crucial role in achieving strong performance. 
    }
    \vspace{-2mm}
    \label{fig:lam}
\end{figure}

\begin{figure}[t]
    \centering
    \includegraphics[width=\textwidth]{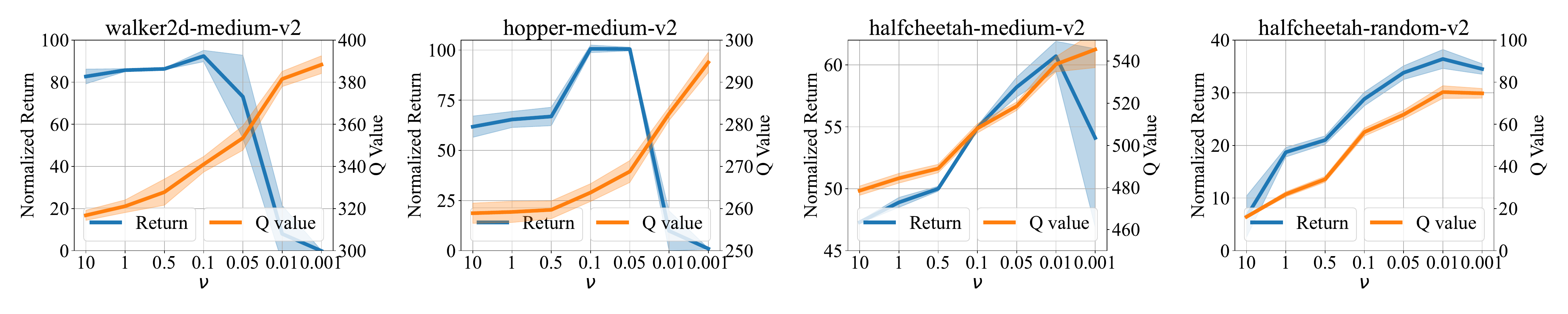}
    \vspace{-6mm}
    \caption{
    Performance and Q values of \algo with varying penalty coefficient $\nu$ over 5 random seeds.
    As $\nu$ decreases, \algo allows broader action generalization, leading to larger learned Q values. Mild action generalization is also critical for attaining superior performance.
    }
    \vspace{-2mm}
    \label{fig:nu}
\end{figure}

\paragraph{Mixture coefficient $\lambda$.} The mixture coefficient $\lambda$ controls the extent of generalization propagation. We fix $\nu=0.1$ and vary $\lambda \in [0,1]$, presenting the learned Q values and performance on several tasks in \cref{fig:lam}. As $\lambda$ increases, \algo enables increased generalization propagation through bootstrapping, and the learned Q values become larger and probably diverge. A moderate $\lambda$ (mild generalization propagation) is crucial for achieving strong performance across datasets.
Under the same degree of action generalization, mild generalization propagation effectively suppresses value overestimation, facilitating more stable policy learning.

\paragraph{Penalty coefficient $\nu$.} The penalty coefficient $\nu$ regulates the degree of action generalization. We fix $\lambda=0.25$ and vary $\nu$. The results are shown in \cref{fig:nu}. As $\nu$ decreases, \algo allows broader action generalization beyond the dataset, which results in higher learned values. Regarding performance, a moderate $\nu$ (mild action generalization) is also crucial for achieving superior performance.

\subsection{Online Fine-tuning after Offline RL}

\begin{wraptable}{r}{0.55\textwidth}
\vspace{-7mm}
\begin{center}
\begin{small}
\setlength{\tabcolsep}{4.5pt}
\caption{Online fine-tuning results on AntMaze tasks, showing normalized scores of offline training and 1M steps online fine-tuning, averaged over 5 seeds.}
\label{tab:online finetune}
\vspace{2mm}
\begin{tabular}{lccc}
\toprule
Dataset-v2 & TD3 & IQL & \algo~(Ours) \\ \midrule
antmaze-u & 0.0 & 89.6 $\rightarrow$ 96.2 & 92.4 $\rightarrow$ \textbf{98.4} \\
antmaze-u-d & 0.0 & 65.6 $\rightarrow$ 62.2 & 75.4 $\rightarrow$ \textbf{89.2} \\
antmaze-m-p & 0.0 & 76.4 $\rightarrow$ 89.8 & 80.2 $\rightarrow$ \textbf{96.8} \\
antmaze-m-d & 0.0 & 72.8 $\rightarrow$ 90.2 & 77.2 $\rightarrow$ \textbf{96.2} \\
antmaze-l-p & 0.0 & 42.0 $\rightarrow$ 78.6 & 55.4 $\rightarrow$ \textbf{86.8} \\
antmaze-l-d & 0.0 & 46.0 $\rightarrow$ 73.4 & 58.8 $\rightarrow$ \textbf{89.0} \\
\midrule
antmaze total & 0.0 & 392.4 $\rightarrow$ 490.4 & 439.4 $\rightarrow$ \textbf{556.4} \\
\bottomrule
\end{tabular}
\end{small}
\end{center}
\vspace{-5pt}
\end{wraptable}

Benefiting from its flexibility in both generalization aspects, \algo enjoys a seamless transition from offline to online learning. This is accomplished through a gradual enhancement of both action generalization and generalization propagation.
Since IQL~\cite{kostrikov2022offline} has demonstrated superior online fine-tuning performance compared to previous methods~\cite{nair2020awac,kumar2020conservative} in its paper, we follow the experimental setup of IQL and compare to IQL. We also train online RL algorithm TD3~\cite{fujimoto2018addressing} from scratch for comparison. We use the challenging AntMaze domains~\cite{fu2020d4rl}, given \algo's already high offline performance in Gym locomotion domains. Results are presented in \cref{tab:online finetune}. While online training from scratch fails in the challenging sparse reward AntMaze tasks, \algo initialized with offline pretraining succeeds in learning near-optimal policies, outperforming IQL by a significant margin. Please refer to \cref{app_sec:experimental_details online} for experimental details, and to \cref{app_sec:online curves} for learning curves.

\section{Conclusion and Limitations}
\label{sec:Conclusion and Limitations}

This work scrutinizes offline RL through the lens of generalization and proposes \algo, comprising mild action generalization and mild generalization propagation, to exploit generalization in offline RL appropriately. 
We theoretically analyze \algo in oracle and worst-case generalization scenarios, and empirically demonstrate its SOTA performance in offline training and online fine-tuning experiments.

While our work contributes valuable insights, it also has limitations. The \algo principle is shown to be effective across most scenarios. However, when the function approximator employed is highly compatible with a specific task setting, the learned value functions may generalize well in the entire action space. In such case, \algo may underperform full generalization methods due to conservatism.

\section*{Acknowledgment}
We thank the anonymous reviewers for feedback on an early version of this paper.
This work was supported by the National Key R\&D Program of China under Grant 2018AAA0102801, National Natural Science Foundation of China under Grant 61827804.

\bibliographystyle{plainnat}
\bibliography{neurips_2024.bib}

\clearpage

\appendix

\newpage

\section{Extended Related Work}
\label{app_sec:related work}
\paragraph{Model-free offline RL.}
In offline RL, a fixed dataset is provided and no further interactions are allowed~\cite{lange2012batch,levine2020offline}. As a result, conventional off-policy RL algorithms suffer from the extrapolation error due to OOD actions and exhibit poor performance~\cite{fujimoto2019off}.
To address this challenge, various offline RL algorithms have been developed, primarily categorized into model-free and model-based approaches.
In model-free solutions, value regularization methods introduce conservatism in value estimation through direct penalization~\cite{kumar2020conservative,kostrikov2021offline,ma2021conservative,xie2021bellman,cheng2022adversarially,shao2023counterfactual,mao2024supported}, or via value ensembles~\cite{an2021uncertainty,bai2022pessimistic,yang2022rorl}. Policy constraint approaches enforce proximity between the trained policy and the behavior policy, either explicitly via divergence penalties~\cite{wu2019behavior,kumar2019stabilizing,jaques2019way,fujimoto2021minimalist,wu2022supported}, implicitly by weighted behavior cloning~\cite{chen2020bail,peng2019advantage,nair2020awac,wang2020critic,mao2023supported}, or directly through specific parameterization of the policy~\cite{fujimoto2019off,ghasemipour2021emaq,zhou2021plas}.
Some recent efforts focus on learning the optimal policy within the dataset's support (known as in-support or in-sample optimal policy) in a theoretically sound manner~\citep{mao2023supported,mao2024supported,wu2022supported}. These approaches are less influenced by the the dataset's average quality.
Another popular branch of algorithms opts for in-sample learning, which formulates the Bellman target without querying the values of any unseen actions~\cite{brandfonbrener2021offline,ma2021offline,kostrikov2022offline,xiao2023the,zhang2023insample,xu2023offline,garg2023extreme}.
Among these, OneStep RL~\cite{brandfonbrener2021offline} evaluates the behavior policy via SARSA~\cite{sutton2018reinforcement} and performs only one step of constrained policy improvement without off-policy evaluation. 
IQL~\cite{kostrikov2022offline} modifies the SARSA update, using expectile regression to approximate an upper expectile of the value distribution and enables multi-step dynamic programming. Following IQL, several recent works such as InAC~\cite{xiao2023the}, IAC~\citep{zhang2023insample}, $\mathcal{X}$QL~\cite{garg2023extreme}, and SQL~\cite{xu2023offline} have developed different in-sample learning frameworks, further enhancing the performance of in-sample learning approaches.
However, this work shows that the performance of in-sample approaches is confined by eschewing generalization beyond the offline dataset. In contrast, the proposed approach \algo utilizes doubly mild generalization to appropriately exploit generalization and achieves significantly stronger performance across tasks.

\paragraph{Model-based offline RL.}
Model-based offline RL methods involve training an environmental dynamics model, from which synthetic data is generated to facilitate policy optimization~\citep{sutton1991dyna, janner2019trust, kaiser2019model}. In the context of offline RL, algorithms such as MOPO~\citep{yu2020mopo} and MOReL~\citep{kidambi2020morel} propose to estimate the uncertainty within the trained model and subsequently impose penalties or constraints on state-action pairs characterized by high uncertainty levels, thus achieving conservatism in the learning process. Some model-based approaches incorporate conservatism in a similar way to those model-free ones. For example, COMBO~\citep{yu2021combo} leverages value penalization, while BREMEN~\citep{matsushima2021deploymentefficient} employs behavior regularization. More recently, MOBILE~\citep{sun2023model} introduces uncertainty quantification via the inconsistency of Bellman estimations within a learned dynamics ensemble. SCAS~\citep{mao2024offline} proposes a generic model-based regularizer that unifies OOD state correction and OOD action suppression in offline RL. However, typical model-based methods often involve heavy computational overhead~\citep{janner2019trust}, and their effectiveness hinges on the accuracy of the trained dynamics model~\citep{moerland2023model}.

Recently, \citet{bose2024offline} explores multi-task offline RL from the perspective of representation learning and introduced a notion of neighborhood occupancy density. The neighborhood occupancy density at a given stata-action pair in the dataset for a source task is defined as the fraction of points in the dataset within a certain distance from that stata-action pair in the representation space. \citet{bose2024offline} use this concept to bound the representational transfer error in the downstream target task.
In contrast, DMG is a wildly compatible idea in offline RL and provides insights into many offline RL methods. DMG balances the need for generalization with the risk of over-generalization in offline RL. Generalization to stata-action pairs in the neighborhood of the dataset corresponds to mild action generalization in the DMG framework.

\section{Proofs}
\label{app_sec:proofs}
In this section, we provide the proofs of all the theories in the paper.

\subsection{Proof of \cref{prop:gen}}
This section presents the formal theorem for the \cref{prop:gen} in the main paper, along with its proof.

We first make several common continuity assumptions for \cref{prop:gen}.
\begin{assumption}[Lipschitz $Q$]
\label{app_ass:lQ}
The learned value function $Q_\theta$ is $K_Q$-Lipschitz and is upper bounded by $\Qmax$. $\forall s \sim \Dcal$, $\forall a_1, a_2 \sim \Acal$, $|Q_\theta(s,a_1)-Q_\theta(s,a_2)| \leq K_Q \|a_1-a_2\|$.
\end{assumption}
\begin{assumption}[Lipschitz $Q$ gradient]
\label{app_ass:lQg}
The learned value function $Q_\theta$ is smooth, i.e, has a $K_g$-Lipschitz continuous gradient. $\forall s \sim \Dcal$, $\forall a_1, a_2 \sim \Acal$, $\|\nabla_\theta Q_\theta(s,a_1)- \nabla_\theta Q_\theta(s,a_2)\| \leq K_g \|a_1-a_2\|$.
\end{assumption}

\begin{assumption}[Bounded $Q$ and $Q$ gradient]
\label{app_ass:BQ}
$\forall s,a$, $|Q_\theta(s,a)| \leq \Qmax$ and $\|\nabla_\theta Q_\theta(s,a)\| \leq g_\mathrm{max}$.
\end{assumption}

\begin{assumption}[Lipschitz $P$]
\label{app_ass:lP}
The transition dynamics $P$ is $K_P$-Lipschitz. $\forall s,s' \sim \Scal$, $\forall a_1,a_2 \sim \Acal$, $|P(s'|s,a_1)-P(s'|s,a_2)| \leq K_P \|a_1-a_2\|$.
\end{assumption}

\begin{assumption}[Lipschitz $R$]
\label{app_ass:lR}
The reward function $R$ is $K_R$-Lipschitz. $\forall s \sim \Scal$, $\forall a_1,a_2 \sim \Acal$, $|R(s|a_1)-R(s,a_2)| \leq K_R \|a_1-a_2\|$.
\end{assumption}

A continuous learned Q function is particularly necessary for the analysis of value function generalization. Since we often use neural networks or linear models to parameterize the value function $Q_\theta$, Assumptions \ref{app_ass:lQ} and \ref{app_ass:lQg} can be relatively easily satisfied~\cite{gouk2021regularisation}. Assumptions \ref{app_ass:BQ}, \ref{app_ass:lP}, and \ref{app_ass:lR} are also common in the theoretical studies of RL~\cite{dufour2013finite,xiong2022deterministic,ran2023policy} and optimization~\cite{boyd2004convex}.

Before we start the proof of \cref{prop:gen}, we prove the following lemma.
\begin{lemma}
\label{app_lem:lT}
$\forall s \sim \Dcal$, $\forall a_1, a_2 \sim \Acal$, $|\mathcal{T}_u Q_\theta(s,a_1)-\mathcal{T}_u Q_\theta(s,a_2)| \leq K_\mathcal{T} \|a_1-a_2\|$.
where $K_\mathcal{T}$ is a positive bounded constant.
\end{lemma}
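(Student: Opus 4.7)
The plan is to decompose the difference $|\mathcal{T}_u Q_\theta(s,a_1)-\mathcal{T}_u Q_\theta(s,a_2)|$ into a reward term and a transition-expectation term, and then bound each term separately using the Lipschitz assumptions.

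First, by the definition of $\mathcal{T}_u$ and the triangle inequality,
\begin{equation*}
|\mathcal{T}_u Q_\theta(s,a_1)-\mathcal{T}_u Q_\theta(s,a_2)| \leq |R(s,a_1)-R(s,a_2)| + \gamma \left|\int \bigl(P(s'|s,a_1)-P(s'|s,a_2)\bigr) M(s')\,ds'\right|,
\end{equation*}
where I introduce the shorthand $M(s'):=\max_{a'\sim u(\cdot|s')} Q_\theta(s',a')$. Assumption~\ref{app_ass:lR} immediately handles the first term by $K_R\|a_1-a_2\|$.

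For the transition-expectation term, I would first observe that $|M(s')| \leq \Qmax$ for every $s'$ by Assumption~\ref{app_ass:BQ}, since $M$ is a supremum of values that are each bounded by $\Qmax$. Then I would bound the integral by $\Qmax \int |P(s'|s,a_1)-P(s'|s,a_2)|\,ds'$, and apply the Lipschitz assumption on $P$ (Assumption~\ref{app_ass:lP}) pointwise so that the integrand is at most $K_P\|a_1-a_2\|$. Collecting constants gives an upper bound of the form $(K_R + \gamma \Qmax K_P \cdot C_\Scal)\|a_1-a_2\|$, where $C_\Scal$ absorbs the (assumed finite) measure of the relevant state region, so setting $K_\mathcal{T}:=K_R + \gamma \Qmax K_P C_\Scal$ yields the claim.

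The only subtle step is the treatment of the $\max$ operator inside the expectation: although $M(s')$ is not itself Lipschitz in any obvious way, the argument only needs $M$ to be measurable and uniformly bounded, which follows from Assumption~\ref{app_ass:BQ}. The main obstacle—if any—is handling the pointwise Lipschitz assumption on $P$ when the state space is not compact; I would state the result with the understanding that $\int |P(s'|s,a_1)-P(s'|s,a_2)|\,ds'$ is finite (e.g., through a total-variation-type refinement of Assumption~\ref{app_ass:lP}), which is standard in RL theory under bounded-state conditions.
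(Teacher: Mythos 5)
Your proposal is correct and follows essentially the same route as the paper: split off the reward term via the triangle inequality, bound it with the Lipschitz-$R$ assumption, and bound the transition term by pulling $\Qmax$ out of the max and applying the pointwise Lipschitz condition on $P$ summed (or integrated) over the state space. The paper works with a finite state space, writing the transition term as a sum over $s'$ so that your constant $C_\Scal$ becomes $|\mathcal{S}|$ and $K_\mathcal{T}=K_R+\gamma K_P|\mathcal{S}|\Qmax$; your remark about needing finiteness of $\int|P(s'|s,a_1)-P(s'|s,a_2)|\,ds'$ is exactly the point that the paper's finite-$\mathcal{S}$ setting resolves implicitly.
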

\begin{proof}
$\forall s \sim \Dcal$, $\forall a_1, a_2 \sim \Acal$,
\begin{align*}
&\left| \mathcal{T}_u Q_\theta(s,a_1)-\mathcal{T}_u Q_\theta(s,a_2) \right| \\
=&\left| R(s,a_1)- R(s,a_2) + \gamma \E_{s'\sim P(\cdot|s,a_1)}\left[\max_{a'\sim u(\cdot|s')} Q(s',a')\right] - \gamma \E_{s'\sim P(\cdot|s,a_2)}\left[\max_{a'\sim u(\cdot|s')} Q(s',a')\right]\right| \\
\leq &\left| R(s,a_1)- R(s,a_2)\right| + \gamma \left| \E_{s'\sim P(\cdot|s,a_1)}\left[\max_{a'\sim u(\cdot|s')} Q(s',a')\right] - \E_{s'\sim P(\cdot|s,a_2)}\left[\max_{a'\sim u(\cdot|s')} Q(s',a')\right]\right| \\
= &\left| R(s,a_1)- R(s,a_2)\right| + \gamma \left| \sum_{s'}\left(P(s'|s,a_1)-P(s'|s,a_2)\right) \max_{a'\sim u(\cdot|s')} Q(s',a') \right| \\
\leq &\left| R(s,a_1)- R(s,a_2)\right| + \gamma  \sum_{s'}\left|\left(P(s'|s,a_1)-P(s'|s,a_2)\right) \right| \left|\max_{a'\sim u(\cdot|s')} Q(s',a') \right| \\
\leq & K_R \|a_1-a_2\| + \gamma  \sum_{s'}K_P \|a_1-a_2\| \Qmax \\
= & (K_R+ \gamma K_P |\mathcal{S}| \Qmax ) \|a_1-a_2\|
\end{align*}
where the last inequality holds by Assumptions \ref{app_ass:BQ}, \ref{app_ass:lP}, and \ref{app_ass:lR}.

Therefore, for any $s \sim \Dcal$, $a_1, a_2 \sim \Acal$, it holds that 
\begin{equation}
|\mathcal{T}_u Q_\theta(s,a_1)-\mathcal{T}_u Q_\theta(s,a_2)| \leq K_\mathcal{T} \|a_1-a_2\|,
\end{equation}
where $K_\mathcal{T} := K_R+ \gamma K_P |\mathcal{S}| \Qmax$ is a positive bounded constant.
\end{proof}

We restate the scenario analyzed in \cref{prop:gen}: $Q_\theta$ is updated to $Q_{\theta'}$ by one gradient step on a single state-action pair $(s,a)\in \Dcal$, which affects the Q-value of an arbitrary state-action pair $(s,\tilde a)\notin \Dcal$. The parameter update is
\begin{equation}
\label{app_eq:theta}
\theta' = \theta + \alpha(\mathcal{T}_{u}Q_\theta(s,a)-Q_\theta(s,a)) \nabla_\theta Q_\theta(s,a)
\end{equation}
where $\alpha$ is the learning rate. 

Now we start the proof of \cref{prop:gen} in the main paper.
\begin{theorem}[\cref{prop:gen}]
\label{app_prop:gen}
Under Assumptions~\ref{app_ass:lQ} to \ref{app_ass:lR}, the following equation holds when the learning rate $\alpha$ is sufficiently small and $\tilde a$ is sufficiently close to $a$:
\begin{equation}
\label{app_eq:gen}
Q_{\theta'}(s,\tilde a) = Q_{\theta}(s,\tilde a) + C_1 \left(\mathcal{T}_{u}Q_{\theta}(s,\tilde a) -Q_{\theta}(s,\tilde a) + C_2\|\tilde a-a\|\right) + \mathcal{O}\left(\|\theta'-\theta\|^2\right)
\end{equation}
where $C_1 \in [0,1]$ and $C_2 \in [- K_Q-K_R- \gamma K_P |\mathcal{S}| \Qmax, K_Q+K_R+ \gamma K_P |\mathcal{S}| \Qmax]$.
\end{theorem}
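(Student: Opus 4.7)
The plan is to Taylor expand $Q_{\theta'}(s,\tilde a)$ around $\theta$ to first order, substitute the explicit form of the gradient update \eqref{app_eq:theta}, and then rearrange the result so that the Bellman residual at the training point $(s,a)$ becomes a Bellman residual at the evaluation point $(s,\tilde a)$ plus a controllable Lipschitz error proportional to $\|\tilde a - a\|$.

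First I would write
\begin{equation*}
Q_{\theta'}(s,\tilde a) = Q_{\theta}(s,\tilde a) + \nabla_\theta Q_\theta(s,\tilde a)^\top (\theta'-\theta) + \mathcal{O}(\|\theta'-\theta\|^2),
\end{equation*}
which is justified by the smoothness in \cref{app_ass:lQg}. Substituting the update rule gives
\begin{equation*}
Q_{\theta'}(s,\tilde a) = Q_{\theta}(s,\tilde a) + \alpha \bigl(\nabla_\theta Q_\theta(s,\tilde a)^\top \nabla_\theta Q_\theta(s,a)\bigr) \bigl(\mathcal{T}_u Q_\theta(s,a) - Q_\theta(s,a)\bigr) + \mathcal{O}(\|\theta'-\theta\|^2).
\end{equation*}
I would then define $C_1 := \alpha \nabla_\theta Q_\theta(s,\tilde a)^\top \nabla_\theta Q_\theta(s,a)$.

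Next I would convert the training-point residual into an evaluation-point residual. Adding and subtracting $\mathcal{T}_u Q_\theta(s,\tilde a) - Q_\theta(s,\tilde a)$ produces a remainder
\begin{equation*}
\delta := \bigl[\mathcal{T}_u Q_\theta(s,a) - \mathcal{T}_u Q_\theta(s,\tilde a)\bigr] - \bigl[Q_\theta(s,a) - Q_\theta(s,\tilde a)\bigr].
\end{equation*}
By \cref{app_ass:lQ} together with \cref{app_lem:lT}, $|\delta| \le (K_Q + K_{\mathcal{T}}) \|\tilde a - a\|$, with $K_{\mathcal{T}} = K_R + \gamma K_P |\mathcal{S}| \Qmax$. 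Setting $C_2 := \delta / \|\tilde a - a\|$ gives $|C_2| \le K_Q + K_R + \gamma K_P |\mathcal{S}| \Qmax$, matching the stated bound on $C_2$, and yields exactly the desired form of \eqref{app_eq:gen}.

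The main obstacle is verifying $C_1 \in [0,1]$. The upper bound is immediate: by \cref{app_ass:BQ}, $|C_1| \le \alpha g_{\max}^2$, so $C_1 \le 1$ once $\alpha \le 1/g_{\max}^2$. Non-negativity is more delicate: I would write
\begin{equation*}
\nabla_\theta Q_\theta(s,\tilde a)^\top \nabla_\theta Q_\theta(s,a) = \|\nabla_\theta Q_\theta(s,a)\|^2 + \bigl(\nabla_\theta Q_\theta(s,\tilde a) - \nabla_\theta Q_\theta(s,a)\bigr)^\top \nabla_\theta Q_\theta(s,a),
\end{equation*}
and use Cauchy--Schwarz together with \cref{app_ass:lQg} to bound the cross term by $K_g g_{\max} \|\tilde a - a\|$. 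Hence whenever $\|\tilde a - a\| \le \|\nabla_\theta Q_\theta(s,a)\|^2 / (K_g g_{\max})$ the inner product is non-negative, which is exactly the ``sufficiently close'' condition in the statement. (The degenerate case $\nabla_\theta Q_\theta(s,a) = 0$ trivially makes $\theta' = \theta$ and the claim holds.) Collecting these observations and absorbing the explicit constants into $C_1$ and $C_2$ completes the proof.
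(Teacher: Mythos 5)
Your proposal is correct and follows essentially the same route as the paper's proof: first-order Taylor expansion, substitution of the gradient update, the residual-transfer step via \cref{app_ass:lQ} and \cref{app_lem:lT} to define $C_2$, and the decomposition $\nabla_\theta Q_\theta(s,\tilde a) = \nabla_\theta Q_\theta(s,a) + v$ with the smoothness assumption to establish $C_1 \in [0,1]$. The only cosmetic difference is that you bound the cross term by $K_g g_{\max}\|\tilde a - a\|$ rather than $K_g\|\tilde a-a\|\,\|\nabla_\theta Q_\theta(s,a)\|$, yielding a slightly more restrictive but equally valid closeness condition.
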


\begin{proof}
We formalize $Q_{\theta'}(s,\tilde a)$ by Taylor expansion at the parameter $\theta$:
\begin{equation}
\label{app_eq:taylor}
Q_{\theta'}(s,\tilde a) = Q_{\theta}(s,\tilde a) + \nabla_\theta Q_\theta(s,\tilde a)^\top (\theta'-\theta) + \mathcal{O}\left(\|\theta'-\theta\|^2\right)
\end{equation}

By plugging Eq.~\eqref{app_eq:theta} into Eq.~\eqref{app_eq:taylor}, we have
\begin{equation}
\label{app_eq:after tayler1}
Q_{\theta'}(s,\tilde a) = Q_{\theta}(s,\tilde a) + \alpha \nabla_\theta Q_\theta(s,\tilde a)^\top \nabla_\theta Q_\theta(s,a)\left(\mathcal{T}_{u}Q_\theta(s,a)-Q_\theta(s,a)\right) + \mathcal{O}\left(\|\theta'-\theta\|^2\right)
\end{equation}

According to \cref{app_ass:lQ} and \cref{app_lem:lT}, it holds that
\begin{equation}
|Q_\theta(s,\tilde a)-Q_\theta(s,a)| \leq K_Q \|\tilde a-a\|
\end{equation}
\begin{equation}
|\mathcal{T}_u Q_\theta(s,\tilde a)-\mathcal{T}_u Q_\theta(s,a)| \leq K_\mathcal{T} \|\tilde a-a\|
\end{equation}
where $K_\mathcal{T} := K_R+ \gamma K_P |\mathcal{S}| \Qmax$ is a positive bounded constant.

Therefore, 
\begin{align*}
& \left|(\mathcal{T}_u Q_\theta(s,\tilde a)-Q_\theta(s,\tilde a)) - (\mathcal{T}_u Q_\theta(s, a)-Q_\theta(s, a))\right| \\
=& \left|(\mathcal{T}_u Q_\theta(s,\tilde a)- \mathcal{T}_u Q_\theta(s, a)) +(Q_\theta(s, a) - Q_\theta(s,\tilde a))\right| \\
\leq & \left|(\mathcal{T}_u Q_\theta(s,\tilde a)- \mathcal{T}_u Q_\theta(s, a))\right| +\left|(Q_\theta(s, a) - Q_\theta(s,\tilde a))\right| \\
\leq & ~K_\mathcal{T} \|\tilde a-a\| + K_Q \|\tilde a-a\|
\end{align*}

As a result,
\begin{equation*}
\mathcal{T}_u Q_\theta(s, a)-Q_\theta(s, a) \leq \mathcal{T}_u Q_\theta(s,\tilde a)-Q_\theta(s,\tilde a) + (K_Q+K_\mathcal{T}) \|\tilde a-a\|
\end{equation*}
\begin{equation*}
\mathcal{T}_u Q_\theta(s, a)-Q_\theta(s, a) \geq \mathcal{T}_u Q_\theta(s,\tilde a)-Q_\theta(s,\tilde a) - (K_Q+K_\mathcal{T}) \|\tilde a-a\|
\end{equation*}

Thus we can let
\begin{equation}
\label{app_eq:C_2}
\mathcal{T}_u Q_\theta(s, a)-Q_\theta(s, a) = \mathcal{T}_u Q_\theta(s,\tilde a)-Q_\theta(s,\tilde a) + C_2 \|\tilde a-a\|, 
\end{equation}
where $C_2 \in [- K_Q-K_\mathcal{T}, K_Q+K_\mathcal{T}]$ is a bounded constant.

Now we shift our focus to $\alpha \nabla_\theta Q_\theta(s,\tilde a)^\top \nabla_\theta Q_\theta(s,a)$.
Let $v=\nabla_\theta Q_\theta(s,\tilde a)- \nabla_\theta Q_\theta(s,a)$. According to the smoothness of $Q_\theta$ in \cref{app_ass:lQg}, it holds that
\begin{equation}
\|v\| = \|\nabla_\theta Q_\theta(s,\tilde a)- \nabla_\theta Q_\theta(s,a)\| \leq K_g \|\tilde a-a\|.
\end{equation}

Therefore, 
\begin{align*}
& \nabla_\theta Q_\theta(s,\tilde a)^\top \nabla_\theta Q_\theta(s,a) \\
= & (\nabla_\theta Q_\theta(s, a)+v)^\top \nabla_\theta Q_\theta(s,a) \\
= & \|\nabla_\theta Q_\theta(s,a)\|^2 + v^\top \nabla_\theta Q_\theta(s,a) \\
\geq & \|\nabla_\theta Q_\theta(s,a)\|^2 - \|v\| \|\nabla_\theta Q_\theta(s,a)\| \\
\geq & \|\nabla_\theta Q_\theta(s,a)\|^2 - K_g \|\tilde a-a\| \|\nabla_\theta Q_\theta(s,a)\|
\end{align*}

Therefore, for sufficiently close $\tilde a$ and $a$ such that $\|\tilde a-a\| \leq \|\nabla_\theta Q_\theta(s,a)\|/K_g$, it holds that $\alpha \nabla_\theta Q_\theta(s,\tilde a)^\top \nabla_\theta Q_\theta(s,a) \geq 0$. 

On the other hand, because $\|\nabla_\theta Q_\theta\|$ is bounded by $g_\mathrm{max}$ according to \cref{app_ass:BQ}, it holds that
\begin{align*}
&\alpha \nabla_\theta Q_\theta(s,\tilde a)^\top \nabla_\theta Q_\theta(s,a) \leq \alpha  g_\mathrm{max}^2
\end{align*}

By choosing a small learning rate $\alpha$ such that $\alpha \leq 1/g_\mathrm{max}^2$,
\begin{align*}
&\alpha \nabla_\theta Q_\theta(s,\tilde a)^\top \nabla_\theta Q_\theta(s,a) \leq 1
\end{align*}

In such cases (sufficiently close $\tilde a$ and $a$, and sufficiently small $\alpha$), let 
\begin{equation}
\label{app_eq:C_1}
C_1:=\alpha \nabla_\theta Q_\theta(s,\tilde a)^\top \nabla_\theta Q_\theta(s,a)
\end{equation}
We have $C_1 \in [0,1]$.

By plugging \cref{app_eq:C_2,app_eq:C_1} into \cref{app_eq:after tayler1}, the following equation holds.
\begin{equation}
Q_{\theta'}(s,\tilde a) = Q_{\theta}(s,\tilde a) + C_1 \left(\mathcal{T}_{u}Q_{\theta}(s,\tilde a) -Q_{\theta}(s,\tilde a) + C_2\|\tilde a-a\|\right) + \mathcal{O}\left(\|\theta'-\theta\|^2\right)
\end{equation}
where $C_1 \in [0,1]$, $C_2 \in [- K_Q-K_\mathcal{T}, K_Q+K_\mathcal{T}]$, and $K_\mathcal{T} = K_R+ \gamma K_P |\mathcal{S}| \Qmax$. 

This concludes the proof.

\end{proof}

\subsection{Proofs under Oracle Generalization}
We first restate the several definitions in the main paper.
\begin{definition}[Mildly generalized policy, \cref{def:mg}]
\label{app_def:mg}
Policy $\tilde \beta$ is termed a mildly generalized policy if it satisfies
\begin{equation}
\mathrm{supp}(\hat \beta(\cdot|s)) \subseteq \mathrm{supp}(\tilde \beta(\cdot|s)), ~~\text{and}~~ \max_{a_1 \sim \tilde \beta(\cdot|s)} \min_{a_2 \sim \hat \beta(\cdot|s)} \|a_1-a_2\| \leq \epsilon_a,
\end{equation}
where $\hat \beta$ is the empirical behavior policy in the offline dataset.
\end{definition}

\begin{definition}[\cref{def:algo operator}]
\label{app_def:algo operator}
The Doubly Mildly Generalization (\algo) operator is defined as
\begin{equation}
\mathcal{T}_{\mathrm{\algo}} Q(s,a):=R(s,a)+\gamma \E_{s'\sim P(\cdot|s,a)}\left[\lambda \max_{a'\sim \tilde \beta(\cdot|s')} Q(s',a') + (1-\lambda)\max_{a'\sim \hat \beta(\cdot|s')} Q(s',a')\right]
\end{equation}
where $\hat \beta$ is the empirical behavior policy in the dataset and $\tilde \beta$ is a mildly generalized policy.
\end{definition}

\begin{definition}[\cref{def:in operator}]
The In-sample Q Learning operator~\cite{kostrikov2022offline} is defined as
\begin{equation}
\mathcal{T}_{\mathrm{In}} Q(s,a):=R(s,a)+\gamma \E_{s'\sim P(\cdot|s,a)}\left[\max_{a'\sim \hat \beta(\cdot|s')} Q(s',a')\right]
\end{equation}
where $\hat \beta$ is the empirical behavior policy in the dataset.
\end{definition}

In this subsection, we assume that the learned value function can make oracle generalization in the mild generalization area $\tilde \beta(a|s)>0$, which is formally defined as follows. 
\begin{assumption}[Oracle generalization, \cref{ass:Oracle generalization}]
\label{app_ass:Oracle generalization}
The generalization of learned Q functions in the mild generalization area $\tilde \beta(a|s)>0$ reflects the true value updates according to $\mathcal{T}_{\mathrm{\algo}}$. In other words, $\mathcal{T}_{\mathrm{\algo}}$ is well defined in the mild generalization area $\tilde \beta(a|s)>0$.
\end{assumption}
This assumption can be considered reasonable according to the results presented in \cref{app_prop:gen} above.
In such cases, we can analyze the dynamic programming properties of operators $\mathcal{T}_{\mathrm{In}}$ and $\mathcal{T}_{\mathrm{\algo}}$.

Before we start the proofs of \cref{lem:in Contraction,prop:Contraction} in the main paper, we prove a lemma.
\begin{lemma}
\label{app_lem:max ineq}
For any function $f_1$, $f_2$, any variant $x \in \mathcal{X}$, the following inequality holds:
\begin{equation}
\label{app_eq:max ineq}
\left|\max_{x\in \mathcal{X}}f_1(x)-\max_{x\in \mathcal{X}}f_2(x)\right| \leq \max_{x\in \mathcal{X}} \left|f_1(x)-f_2(x)\right|.
\end{equation}
\end{lemma}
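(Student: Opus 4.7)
The plan is to argue by cases on the sign of $\max_{x} f_1(x) - \max_{x} f_2(x)$. Since the inequality is symmetric in $f_1$ and $f_2$, it suffices to handle the case $\max_{x} f_1(x) \geq \max_{x} f_2(x)$; the other case follows by swapping the roles of $f_1$ and $f_2$.

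In that case, I would let $x_1^* \in \argmax_{x\in\mathcal{X}} f_1(x)$, so that the left-hand side equals $f_1(x_1^*) - \max_{x} f_2(x)$. The key step is to use the fact that $\max_{x} f_2(x) \geq f_2(x_1^*)$, which gives
\begin{equation}
\left|\max_{x} f_1(x) - \max_{x} f_2(x)\right| = f_1(x_1^*) - \max_{x} f_2(x) \leq f_1(x_1^*) - f_2(x_1^*).
\end{equation}
Then I would bound this by $|f_1(x_1^*) - f_2(x_1^*)|$ and in turn by the pointwise supremum $\max_{x\in\mathcal{X}}|f_1(x)-f_2(x)|$, yielding the claim.

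There is essentially no main obstacle here — this is a standard one-line argument. The only small care needed is to ensure that the argmaxes exist (which I would note implicitly by treating the suprema as attained, as is implicit throughout the paper) and to write the symmetric case cleanly so that the absolute value on the left is properly handled.
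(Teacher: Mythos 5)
Your proof is correct and is essentially the same argument as the paper's: both evaluate at the argmax points and reduce to a pointwise difference bounded by $\max_{x}|f_1(x)-f_2(x)|$. The only cosmetic difference is that you dispatch the absolute value via a WLOG symmetry case split, whereas the paper sandwiches $f_1(x_1)-f_2(x_2)$ between the two pointwise differences $f_1(x_2)-f_2(x_2)$ and $f_1(x_1)-f_2(x_1)$ and takes the larger in magnitude.
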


\begin{proof}
Define $x_1:=\argmax_{x\in \mathcal{X}}f_1(x)$ and $x_2:=\argmax_{x\in \mathcal{X}}f_2(x)$.

According to the definition, the following inequality holds:
\begin{equation}
f_1(x_2) -f_2(x_2) \leq f_1(x_1) -f_2(x_2) \leq f_1(x_1) -f_2(x_1)
\end{equation}

Therefore,
\begin{align*}
&\left|\max_{x\in \mathcal{X}}f_1(x)-\max_{x\in \mathcal{X}}f_2(x)\right| \\
=&\left|f_1(x_1) -f_2(x_2)\right| \\
\leq &\max \left\{\left|f_1(x_2) -f_2(x_2)\right|, \left|f_1(x_1) -f_2(x_1)\right| \right\}\\
\leq &\max_{x\in \mathcal{X}} \left|f_1(x) -f_2(x)\right|
\end{align*}

This concludes the proof of \cref{app_lem:max ineq}.
\end{proof}

\begin{lemma}[\cref{lem:in Contraction}]
\label{app_lem:in Contraction}
$\mathcal{T}_{\mathrm{In}}$ is a $\gamma$-contraction operator in the in-sample area $\hat \beta(a|s)>0$ under the $\mathcal{L}_\infty$ norm.
\end{lemma}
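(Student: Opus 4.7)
The plan is the standard Bellman-contraction argument, with one modest bookkeeping twist: the $\mathcal{L}_\infty$ norm is restricted to the in-sample region $\{(s,a) : \hat\beta(a|s) > 0\}$. I want to show that for any two functions $Q_1, Q_2$ defined on that region,
\[
\sup_{(s,a):\hat\beta(a|s)>0} \bigl|\mathcal{T}_{\mathrm{In}} Q_1(s,a) - \mathcal{T}_{\mathrm{In}} Q_2(s,a)\bigr| \;\leq\; \gamma \sup_{(s,a):\hat\beta(a|s)>0} \bigl|Q_1(s,a) - Q_2(s,a)\bigr|.
\]
The key observation justifying the restriction is that the Bellman target in $\mathcal{T}_{\mathrm{In}}$ only queries Q-values at pairs $(s',a')$ with $a' \sim \hat\beta(\cdot|s')$, i.e., entirely within the in-sample region, so the operator maps functions on this region to functions on this region.

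First, I would fix an arbitrary in-sample pair $(s,a)$ and write out the difference. The rewards cancel, leaving a factor $\gamma$ times an expectation over $s' \sim P(\cdot|s,a)$ of the difference of two in-sample maxima. Next I would push the absolute value inside the expectation by the triangle inequality (or Jensen) and apply \cref{app_lem:max ineq} pointwise in $s'$ to get
\[
\Bigl|\max_{a'\sim\hat\beta(\cdot|s')} Q_1(s',a') - \max_{a'\sim\hat\beta(\cdot|s')} Q_2(s',a')\Bigr| \;\leq\; \max_{a'\sim\hat\beta(\cdot|s')} \bigl|Q_1(s',a') - Q_2(s',a')\bigr|.
\]
Every $a'$ appearing in this last max satisfies $\hat\beta(a'|s') > 0$, so the bound is at most the in-sample sup-norm $\|Q_1 - Q_2\|_\infty$. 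Taking the expectation over $s'$ and then the sup over in-sample $(s,a)$ on the left completes the contraction inequality.

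The main obstacle, such as it is, is purely a matter of care rather than mathematics: one must be precise that the sup-norm is defined over the in-sample region, and verify that both sides of the inequality stay within that region (so the claim is well posed). Because the max inside $\mathcal{T}_{\mathrm{In}}$ is constrained to the support of $\hat\beta(\cdot|s')$, this self-consistency is automatic. No auxiliary assumptions about $Q$-regularity or the transition kernel are needed beyond boundedness of the functions, so the proof should be short and essentially immediate from \cref{app_lem:max ineq}.
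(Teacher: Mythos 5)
Your proposal is correct and follows essentially the same route as the paper's proof: cancel the rewards, move the absolute value inside the expectation, apply \cref{app_lem:max ineq} pointwise in $s'$, and bound by the in-sample sup-norm since the inner max only queries pairs with $\hat\beta(a'|s')>0$. No gaps.
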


\begin{proof}
Let $f_1$ and $f_2$ be two arbitrary functions.

For all $(s,a) ~s.t. ~\hat \beta(a|s)>0$, we have
\begin{align*}
& \left| \mathcal{T}_{\mathrm{In}}f_1(s,a)-\mathcal{T}_{\mathrm{In}}f_2(s,a) \right| \\
= & \left|R(s,a)+\gamma \E_{s'\sim P(\cdot|s,a)}\left[\max_{a'\sim \hat \beta(\cdot|s')} f_1(s',a')\right]  - R(s,a)-\gamma \E_{s'\sim P(\cdot|s,a)}\left[\max_{a'\sim \hat \beta(\cdot|s')} f_2(s',a')\right]\right| \\
=&  \gamma \left|\mathbb{E}_{s'\sim P(\cdot|s,a)}\left[\max_{a'\sim \hat \beta(\cdot|s')}f_1(s',a')- \max_{a'\sim \hat \beta(\cdot|s')}f_2(s',a')\right]\right| \\
\leq&  \gamma \mathbb{E}_{s'\sim P(\cdot|s,a)}\left[\left|\max_{a'\sim \hat \beta(\cdot|s')}f_1(s',a')- \max_{a'\sim \hat \beta(\cdot|s')}f_2(s',a')\right|\right] \\
\leq&  \gamma \mathbb{E}_{s'\sim P(\cdot|s,a)}\left[\max_{a'\sim \hat \beta(\cdot|s')} \left|f_1(s',a')-f_2(s',a')\right|\right] \\
\leq &\gamma \max_{(s,a): \hat\beta(a|s)>0} |f_1(s,a)-f_2(s,a)|
\end{align*}
where the second inequality holds by \cref{app_lem:max ineq}.

Therefore, in the in-sample area $\tilde \beta(a|s)>0$, $\mathcal{T}_{\mathrm{In}}$ is a $\gamma$-contraction operator under the $\mathcal{L}_\infty$ norm. This concludes the proof for $\mathcal{T}_{\mathrm{In}}$.
\end{proof}

Thus, by repeatedly applying $\mathcal{T}_{\mathrm{In}}$, any initial Q function can converge to the unique fixed point $Q^*_{\mathrm{In}}$. We denote its induced policy by $\pi^*_{\mathrm{In}}$:
\begin{equation}
Q^*_{\mathrm{In}}(s,a)=R(s,a)+\gamma \E_{s'\sim P(\cdot|s,a)}\left[\max_{a'\sim \hat \beta(\cdot|s')} Q^*_{\mathrm{In}}(s',a')\right], \quad \hat\beta(a|s)>0,
\end{equation}
\begin{equation}
\pi^*_{\mathrm{In}}(s) := \argmax_{a\sim \hat\beta(\cdot|s)} Q^*_{\mathrm{In}}(s,a).
\end{equation}

Here, $Q^*_{\mathrm{In}}$ is known as the in-sample optimal value function~\cite{kumar2019stabilizing,kostrikov2022offline}, which is the value function of the in-sample optimal policy $\pi^*_{\mathrm{In}}$. We refer readers to \cite{wu2022supported,kostrikov2022offline,mao2023supported,mao2024supported} for more discussions on the in-sample or in-support optimality.

Now we start the proof of \cref{prop:Contraction} in the main paper.
\begin{theorem}[Contraction, \cref{prop:Contraction}]
\label{app_thm:Contraction}
Under \cref{app_ass:Oracle generalization}, $\mathcal{T}_{\mathrm{\algo}}$ is a $\gamma$-contraction operator in the mild generalization area $\tilde \beta(a|s)>0$ under the $\mathcal{L}_\infty$ norm.
Therefore, by repeatedly applying $\mathcal{T}_{\mathrm{\algo}}$, any initial Q function can converge to the unique fixed point $Q^*_{\mathrm{\algo}}$.
\end{theorem}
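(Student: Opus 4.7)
The plan is to mimic the proof of Lemma~\ref{app_lem:in Contraction} (in-sample contraction), but now handle the convex combination of two max-terms inside the Bellman target. Since Assumption~\ref{app_ass:Oracle generalization} guarantees $\mathcal{T}_{\mathrm{DMG}}$ is well-defined on the mild generalization area $\tilde\beta(a|s)>0$, I can freely apply the operator there and compare values pointwise with the $\mathcal{L}_\infty$ norm taken over this same set.

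Fix any two functions $Q_1,Q_2$. For any $(s,a)$ with $\tilde\beta(a|s)>0$, I would expand $|\mathcal{T}_{\mathrm{DMG}}Q_1(s,a)-\mathcal{T}_{\mathrm{DMG}}Q_2(s,a)|$: the reward terms cancel, leaving a discount $\gamma$ times an expectation over $s'$ of the difference of two convex combinations. Pulling the absolute value inside the expectation and using the triangle inequality, the bound splits as $\lambda$ times $|\max_{a'\sim\tilde\beta}Q_1(s',a')-\max_{a'\sim\tilde\beta}Q_2(s',a')|$ plus $(1-\lambda)$ times $|\max_{a'\sim\hat\beta}Q_1(s',a')-\max_{a'\sim\hat\beta}Q_2(s',a')|$.

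To each of these two terms I apply Lemma~\ref{app_lem:max ineq}, yielding $\max_{a'\sim\tilde\beta(\cdot|s')}|Q_1(s',a')-Q_2(s',a')|$ and $\max_{a'\sim\hat\beta(\cdot|s')}|Q_1(s',a')-Q_2(s',a')|$ respectively. The key structural observation is Definition~\ref{app_def:mg}: $\mathrm{supp}(\hat\beta(\cdot|s'))\subseteq\mathrm{supp}(\tilde\beta(\cdot|s'))$, so the in-sample max is dominated by the sup over the mild generalization area. Both maxes are therefore bounded above by $\sup_{(s',a'):\tilde\beta(a'|s')>0}|Q_1(s',a')-Q_2(s',a')|=\|Q_1-Q_2\|_\infty$ (with the norm restricted to the mild generalization area). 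Since $\lambda+(1-\lambda)=1$, the convex combination collapses, leaving the clean bound
\begin{equation*}
|\mathcal{T}_{\mathrm{DMG}}Q_1(s,a)-\mathcal{T}_{\mathrm{DMG}}Q_2(s,a)| \leq \gamma \|Q_1-Q_2\|_\infty.
\end{equation*}
Taking the sup over $(s,a)$ in the mild generalization area on the left yields the $\gamma$-contraction. Existence and uniqueness of the fixed point $Q^*_{\mathrm{DMG}}$, and convergence of iterates from any initial $Q$, then follow from the Banach fixed-point theorem on the complete metric space of bounded functions on $\{(s,a):\tilde\beta(a|s)>0\}$ equipped with $\mathcal{L}_\infty$.

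There is no real obstacle here; the only subtle step is ensuring that the domain is consistent throughout — namely, that the $\hat\beta$-max inside the target only queries $Q$ on a subset of the domain where we are measuring contraction, which is precisely what the support inclusion in Definition~\ref{app_def:mg} guarantees. Without that inclusion, the in-sample max could in principle reference points outside the mild generalization area, and the contraction argument would not close.
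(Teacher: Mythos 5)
Your proof is correct and follows essentially the same route as the paper's: expand the difference, cancel rewards, apply the triangle inequality and Lemma~\ref{app_lem:max ineq} to each of the two max-terms, use the support inclusion $\mathrm{supp}(\hat\beta)\subseteq\mathrm{supp}(\tilde\beta)$ to dominate both by the sup over the mild generalization area, and collapse the convex combination to get the $\gamma$-contraction. Your explicit remark on why the support inclusion is needed to keep the domain consistent is a point the paper leaves implicit, but the argument is the same.
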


\begin{proof}
By the oracle generalization assumption (\cref{app_ass:Oracle generalization}), $\mathcal{T}_{\mathrm{\algo}}$ is well defined in the mild generalization area $\tilde \beta(a|s)>0$.

Let $f_1$ and $f_2$ be two arbitrary functions. For all $(s,a) ~s.t. ~\tilde \beta(a|s)>0$, we have
\begin{align*}
& \mathcal{T}_{\mathrm{\algo}}f_1(s,a)-\mathcal{T}_{\mathrm{\algo}}f_2(s,a) \\
= & R(s,a)+\gamma \E_{s'\sim P(\cdot|s,a)}\left[\lambda \max_{a'\sim \tilde \beta(\cdot|s')} f_1(s',a') + (1-\lambda)\max_{a'\sim \hat \beta(\cdot|s')} f_1(s',a')\right] \\
& - R(s,a)-\gamma \E_{s'\sim P(\cdot|s,a)}\left[\lambda \max_{a'\sim \tilde \beta(\cdot|s')} f_2(s',a') + (1-\lambda)\max_{a'\sim \hat \beta(\cdot|s')} f_2(s',a')\right] \\
=& \gamma \lambda \mathbb{E}_{s'\sim P(\cdot|s,a)}\left[\max_{a'\sim \tilde \beta(\cdot|s')}f_1(s',a')- \max_{a'\sim \tilde \beta(\cdot|s')}f_2(s',a')\right] \\
& + \gamma(1-\lambda) \mathbb{E}_{s'\sim P(\cdot|s,a)}\left[\max_{a'\sim \hat \beta(\cdot|s')}f_1(s',a')- \max_{a'\sim \hat \beta(\cdot|s')}f_2(s',a')\right]
\end{align*}

Therefore, for all $(s,a) ~s.t. ~\tilde \beta(a|s)>0$,
\begin{align*}
&|\mathcal{T}_{\mathrm{\algo}}f_1(s,a)-\mathcal{T}_{\mathrm{\algo}}f_2(s,a)| \\
\leq & \left|\gamma \lambda \mathbb{E}_{s'\sim P(\cdot|s,a)}\left[\max_{a'\sim \tilde \beta(\cdot|s')}f_1(s',a')- \max_{a'\sim \tilde \beta(\cdot|s')}f_2(s',a')\right]\right| \\
& + \left|\gamma(1-\lambda) \mathbb{E}_{s'\sim P(\cdot|s,a)}\left[\max_{a'\sim \hat \beta(\cdot|s')}f_1(s',a')- \max_{a'\sim \hat \beta(\cdot|s')}f_2(s',a')\right]\right| \\
\leq & \gamma \lambda \mathbb{E}_{s'\sim P(\cdot|s,a)}\left[\left|\max_{a'\sim \tilde \beta(\cdot|s')}f_1(s',a')- \max_{a'\sim \tilde \beta(\cdot|s')}f_2(s',a')\right|\right] \\
& + \gamma(1-\lambda) \mathbb{E}_{s'\sim P(\cdot|s,a)}\left[\left|\max_{a'\sim \hat \beta(\cdot|s')}f_1(s',a')- \max_{a'\sim \hat \beta(\cdot|s')}f_2(s',a')\right|\right] \\
\leq & \gamma \lambda \mathbb{E}_{s'\sim P(\cdot|s,a)}\left[\max_{a'\sim \tilde \beta(\cdot|s')} \left|f_1(s',a')-f_2(s',a')\right|\right] \\
& + \gamma(1-\lambda) \mathbb{E}_{s'\sim P(\cdot|s,a)}\left[\max_{a'\sim \hat \beta(\cdot|s')} \left|f_1(s',a')-f_2(s',a')\right|\right] \\
\leq & \gamma \lambda \mathbb{E}_{s'\sim P(\cdot|s,a)} \max_{(s,a): \tilde\beta(a|s)>0} |f_1(s,a)-f_2(s,a)| \\
& + \gamma(1-\lambda) \mathbb{E}_{s'\sim P(\cdot|s,a)} \max_{(s,a): \tilde\beta(a|s)>0} |f_1(s,a)-f_2(s,a)| \\
= &\gamma \max_{(s,a): \tilde\beta(a|s)>0} |f_1(s,a)-f_2(s,a)|
\end{align*}
where the third inequality holds by \cref{app_lem:max ineq}. 

Therefore, in the mild generalization area $\tilde \beta(a|s)>0$, $\mathcal{T}_{\mathrm{\algo}}$ is a $\gamma$-contraction operator under the $\mathcal{L}_\infty$ norm. 
This concludes the proof.
\end{proof}

As a result, by repeatedly applying $\mathcal{T}_{\mathrm{\algo}}$, any initial Q function can converge to the unique fixed point $Q^*_{\mathrm{\algo}}$. We denote the induced policy of $Q^*_{\mathrm{\algo}}$ by $\pi^*_{\mathrm{\algo}}$.
\begin{equation}
Q^*_{\mathrm{\algo}}(s,a)=R(s,a)+\gamma \E_{s'\sim P(\cdot|s,a)}\left[\max_{a'\sim \tilde \beta(\cdot|s')} Q^*_{\mathrm{\algo}}(s',a')\right], \quad \tilde\beta(a|s)>0,
\end{equation}
\begin{equation}
\pi^*_{\mathrm{\algo}}(s) := \argmax_{a\sim \tilde\beta(\cdot|s)} Q^*_{\mathrm{\algo}}(s,a).
\end{equation}

Before we start the proof of \cref{thm:Performance}, we prove two lemmas.
\begin{lemma}
\label{app_lem:T ineq1}
Under \cref{app_ass:Oracle generalization}, for any function $f$, the following inequality holds:
\begin{equation}
\mathcal{T}_{\mathrm{\algo}} f(s,a) \geq \mathcal{T}_{\mathrm{In}} f(s,a), ~~\forall (s,a) ~s.t. ~\tilde \beta(a|s)>0.
\end{equation}
\end{lemma}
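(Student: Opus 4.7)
The plan is to reduce the inequality to a pointwise comparison of the Bellman targets, which then follows immediately from the support-inclusion clause of \cref{app_def:mg}. Specifically, after subtracting the common reward term $R(s,a)$ and pulling $\gamma$ and the expectation over $s'\sim P(\cdot|s,a)$ outside, it suffices to show, for every $s'$ in the support of $P(\cdot|s,a)$, that
\begin{equation}
\lambda \max_{a'\sim \tilde \beta(\cdot|s')} f(s',a') + (1-\lambda)\max_{a'\sim \hat \beta(\cdot|s')} f(s',a') \;\geq\; \max_{a'\sim \hat \beta(\cdot|s')} f(s',a').
\end{equation}

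Next, I would invoke \cref{app_def:mg}, which guarantees $\mathrm{supp}(\hat\beta(\cdot|s')) \subseteq \mathrm{supp}(\tilde\beta(\cdot|s'))$. Maximizing a function over a larger set can only increase the maximum, so
\begin{equation}
\max_{a'\sim \tilde \beta(\cdot|s')} f(s',a') \;\geq\; \max_{a'\sim \hat \beta(\cdot|s')} f(s',a'),
\end{equation}
and multiplying by $\lambda \in [0,1]$ and adding $(1-\lambda)\max_{a'\sim \hat\beta(\cdot|s')} f(s',a')$ to both sides yields the desired pointwise inequality. Taking the expectation over $s'$ and multiplying by $\gamma$ then completes the argument.

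The only subtlety is ensuring the max over $\tilde\beta$ is meaningful at state-action pairs that may lie outside $\Dcal$; this is precisely what \cref{app_ass:Oracle generalization} provides, since it guarantees that $\mathcal{T}_{\mathrm{\algo}}$ (and hence the inner maximization over the mildly generalized support) is well defined on the set $\tilde\beta(a|s)>0$. Accordingly, I expect no significant obstacle: the proof is essentially a one-line monotonicity argument combined with the support-inclusion property, with the oracle generalization assumption handling well-definedness.
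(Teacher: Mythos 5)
Your proposal is correct and follows essentially the same route as the paper's proof: both reduce the claim to the pointwise inequality $\max_{a'\sim\tilde\beta}f \geq \max_{a'\sim\hat\beta}f$, which follows from the support inclusion in \cref{app_def:mg}, and both invoke \cref{app_ass:Oracle generalization} only to ensure the operators are well defined on the mild generalization area. No gaps.
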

\begin{proof}
The oracle generalization assumption (\cref{app_ass:Oracle generalization}) implies that $\mathcal{T}_{\mathrm{In}}$ is also well defined in the mild generalization area $\tilde \beta(a|s)>0$.
Because $\mathrm{supp}(\hat \beta(\cdot|s)) \subseteq \mathrm{supp}(\tilde \beta(\cdot|s))$, $\mathcal{T}_{\mathrm{In}}$ requires less information than $\mathcal{T}_{\mathrm{\algo}}$. Therefore, $\mathcal{T}_{\mathrm{\algo}}$ being well defined in the mild generalization area implies $\mathcal{T}_{\mathrm{In}}$ also being well defined in that area.

According to the definitions, for all $(s,a) ~s.t. ~\tilde \beta(a|s)>0$,
\begin{equation}
\mathcal{T}_{\mathrm{\algo}} f(s,a)=R(s,a)+\gamma \E_{s'\sim P(\cdot|s,a)}\left[\lambda \max_{a'\sim \tilde \beta(\cdot|s')} f(s',a') + (1-\lambda)\max_{a'\sim \hat \beta(\cdot|s')} f(s',a')\right]
\end{equation}
\begin{equation}
\mathcal{T}_{\mathrm{In}} f(s,a)=R(s,a)+\gamma \E_{s'\sim P(\cdot|s,a)}\left[\max_{a'\sim \hat \beta(\cdot|s')} f(s',a')\right]
\end{equation}

Therefore, for all $(s,a) ~s.t. ~\tilde \beta(a|s)>0$, we have
\begin{align*}
& \mathcal{T}_{\mathrm{\algo}}f(s,a)-\mathcal{T}_{\mathrm{In}}f(s,a) \\
= & \gamma \E_{s'\sim P(\cdot|s,a)}\left[\lambda \max_{a'\sim \tilde \beta(\cdot|s')} f(s',a') -\lambda \max_{a'\sim \hat \beta(\cdot|s')} f(s',a')\right] \\
\geq & 0
\end{align*}
where the last inequality holds because $\tilde \beta$ has a wider support than $\hat \beta$.
\end{proof}

\begin{lemma}
\label{app_lem:T ineq2}
Under \cref{app_ass:Oracle generalization}, for any function $f_1,f_2$ such that $f_1(s,a) \geq f_2(s,a)$, $\forall (s,a) ~s.t. ~\tilde \beta(a|s)>0$, the following inequality holds:
\begin{equation}
\mathcal{T}_{\mathrm{\algo}} f_1(s,a) \geq \mathcal{T}_{\mathrm{\algo}} f_2(s,a), ~~\forall (s,a) ~s.t. ~\tilde \beta(a|s)>0
\end{equation}
\end{lemma}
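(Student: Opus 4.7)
The plan is to prove the monotonicity of $\mathcal{T}_{\mathrm{DMG}}$ directly from its definition by a term-by-term comparison. First I would invoke Assumption~\ref{app_ass:Oracle generalization} to ensure that $\mathcal{T}_{\mathrm{DMG}} f_1$ and $\mathcal{T}_{\mathrm{DMG}} f_2$ are both well defined on the mild generalization area $\tilde\beta(a|s)>0$, so the statement to be proved is meaningful.

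Next I would fix any $(s,a)$ with $\tilde\beta(a|s)>0$ and form the difference $\mathcal{T}_{\mathrm{DMG}} f_1(s,a) - \mathcal{T}_{\mathrm{DMG}} f_2(s,a)$. The reward terms cancel, leaving a convex combination with weights $\lambda$ and $1-\lambda$ of the two quantities
\begin{equation}
\mathbb{E}_{s'\sim P(\cdot|s,a)}\left[\max_{a'\sim \tilde\beta(\cdot|s')} f_1(s',a') - \max_{a'\sim \tilde\beta(\cdot|s')} f_2(s',a')\right]
\end{equation}
and the analogous one with $\hat\beta$ in place of $\tilde\beta$. It therefore suffices to show each inner difference of maxima is nonnegative for every reachable $s'$.

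For the $\tilde\beta$ term, this is immediate: the hypothesis gives $f_1(s',a')\ge f_2(s',a')$ pointwise on $\{a':\tilde\beta(a'|s')>0\}$, so the maximum of $f_1$ over that support dominates the maximum of $f_2$. For the $\hat\beta$ term, I would use the inclusion $\mathrm{supp}(\hat\beta(\cdot|s'))\subseteq \mathrm{supp}(\tilde\beta(\cdot|s'))$ from Definition~\ref{app_def:mg}: the pointwise inequality $f_1\ge f_2$ on the larger support automatically restricts to the smaller one, so the same pointwise-to-max argument applies. Taking expectation over $s'\sim P(\cdot|s,a)$ preserves nonnegativity, and combining with nonnegative weights $\lambda,(1-\lambda)\ge 0$ yields the claim.

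There is no real obstacle here; the only subtle ingredient is to notice that the support inclusion in Definition~\ref{app_def:mg} is precisely what propagates the hypothesis from the $\tilde\beta$-support to the $\hat\beta$-support, so both components of the blended operator inherit monotonicity. The result can thus be written in a few lines without any additional machinery beyond \cref{app_lem:max ineq}-style elementary reasoning about maxima.
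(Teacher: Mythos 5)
Your proposal is correct and follows essentially the same route as the paper: subtract the two operator applications, note the reward terms cancel, and observe that each of the $\lambda$-weighted and $(1-\lambda)$-weighted differences of maxima is nonnegative because $f_1 \geq f_2$ pointwise on the relevant supports. Your explicit use of the support inclusion $\mathrm{supp}(\hat\beta(\cdot|s'))\subseteq \mathrm{supp}(\tilde\beta(\cdot|s'))$ to transfer the hypothesis to the $\hat\beta$-term is a detail the paper leaves implicit, but it is the same argument.
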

\begin{proof}
By \cref{app_ass:Oracle generalization}, $\mathcal{T}_{\mathrm{\algo}}$ is well defined in the mild generalization area $\tilde \beta(a|s)>0$.

According to the definition, for all $(s,a) ~s.t. ~\tilde \beta(a|s)>0$,
\begin{equation}
\mathcal{T}_{\mathrm{\algo}} f(s,a)=R(s,a)+\gamma \E_{s'\sim P(\cdot|s,a)}\left[\lambda \max_{a'\sim \tilde \beta(\cdot|s')} f(s',a') + (1-\lambda)\max_{a'\sim \hat \beta(\cdot|s')} f(s',a')\right]
\end{equation}

$f_1$ and $f_2$ satisfy
\begin{equation}
f_1(s,a) \geq f_2(s,a), \forall (s,a) ~~s.t. ~\tilde \beta(a|s)>0.
\end{equation}

Therefore, for all $(s,a) ~s.t. ~\tilde \beta(a|s)>0$,
\begin{align*}
& \mathcal{T}_{\mathrm{\algo}}f_1(s,a)-\mathcal{T}_{\mathrm{\algo}}f_2(s,a) \\
= & \gamma \E_{s'\sim P(\cdot|s,a)}\left[\lambda \max_{a'\sim \tilde \beta(\cdot|s')} f_1(s',a') -\lambda \max_{a'\sim \tilde \beta(\cdot|s')} f_2(s',a')\right] \\
& + \gamma \E_{s'\sim P(\cdot|s,a)}\left[(1-\lambda) \max_{a'\sim \hat \beta(\cdot|s')} f_1(s',a') -(1-\lambda) \max_{a'\sim \hat \beta(\cdot|s')} f_2(s',a')\right]\\
\geq & 0
\end{align*}
\end{proof}

Now we start the proof of \cref{thm:Performance} in the main paper.
\begin{theorem}[Performance, \cref{thm:Performance}]
\label{app_thm:Performance}
Under \cref{app_ass:Oracle generalization}, the value functions of $\pi^*_{\mathrm{\algo}}$ and $\pi^*_{\mathrm{In}}$ satisfy:
\begin{equation}
V^{\pi^*_{\mathrm{\algo}}}(s) \geq V^{\pi^*_{\mathrm{In}}}(s), \quad \forall s \in \Dcal.
\end{equation}
\end{theorem}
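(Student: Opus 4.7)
The plan is to first establish the pointwise inequality $Q^*_{\mathrm{\algo}}(s,a) \geq Q^*_{\mathrm{In}}(s,a)$ on the mild generalization area, and then convert this value-function comparison into the desired performance comparison by identifying $Q^*_{\mathrm{\algo}}$ with the Q function of an explicit stochastic policy and invoking the policy improvement theorem.

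For the first step, I would combine Lemmas \ref{app_lem:T ineq1} and \ref{app_lem:T ineq2} in a standard monotone-iteration argument. Applying Lemma \ref{app_lem:T ineq1} to $Q^*_{\mathrm{In}}$ gives $\mathcal{T}_{\mathrm{\algo}} Q^*_{\mathrm{In}} \geq \mathcal{T}_{\mathrm{In}} Q^*_{\mathrm{In}} = Q^*_{\mathrm{In}}$ on $\{\tilde\beta(a|s)>0\}$. Monotonicity of $\mathcal{T}_{\mathrm{\algo}}$ (Lemma \ref{app_lem:T ineq2}) then propagates the inequality under further applications, so by induction $\mathcal{T}_{\mathrm{\algo}}^{k} Q^*_{\mathrm{In}} \geq Q^*_{\mathrm{In}}$ for every $k \geq 0$. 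Since $\mathcal{T}_{\mathrm{\algo}}$ is a $\gamma$-contraction on this area (Theorem \ref{app_thm:Contraction}), sending $k \to \infty$ yields $Q^*_{\mathrm{\algo}}(s,a) \geq Q^*_{\mathrm{In}}(s,a)$ wherever $\tilde\beta(a|s) > 0$, and in particular on the in-sample area $\hat\beta(a|s) > 0$.

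For the second step, I would observe that the fixed-point equation for $Q^*_{\mathrm{\algo}}$ is exactly the Bellman evaluation equation of the mixture policy $\pi_{\mathrm{mix}}$ that, at each $s$, plays $\argmax_{a \sim \tilde\beta(\cdot|s)} Q^*_{\mathrm{\algo}}(s,a)$ with probability $\lambda$ and $\argmax_{a \sim \hat\beta(\cdot|s)} Q^*_{\mathrm{\algo}}(s,a)$ with probability $1-\lambda$. Hence $Q^{\pi_{\mathrm{mix}}} = Q^*_{\mathrm{\algo}}$ on the support. Because $\pi^*_{\mathrm{\algo}}$ is greedy w.r.t.\ $Q^*_{\mathrm{\algo}}$ on $\tilde\beta$, we have $\E_{a \sim \pi^*_{\mathrm{\algo}}(\cdot|s)} Q^{\pi_{\mathrm{mix}}}(s,a) = \max_{a \sim \tilde\beta(\cdot|s)} Q^*_{\mathrm{\algo}}(s,a) \geq \E_{a \sim \pi_{\mathrm{mix}}(\cdot|s)} Q^{\pi_{\mathrm{mix}}}(s,a)$, so the standard policy improvement theorem gives $V^{\pi^*_{\mathrm{\algo}}}(s) \geq V^{\pi_{\mathrm{mix}}}(s)$ for all $s$.

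To close the chain, I would compute, for $s \in \Dcal$,
\begin{equation*}
V^{\pi_{\mathrm{mix}}}(s) = \lambda \max_{a \sim \tilde\beta(\cdot|s)} Q^*_{\mathrm{\algo}}(s,a) + (1-\lambda)\max_{a \sim \hat\beta(\cdot|s)} Q^*_{\mathrm{\algo}}(s,a) \geq \max_{a \sim \hat\beta(\cdot|s)} Q^*_{\mathrm{\algo}}(s,a) \geq \max_{a \sim \hat\beta(\cdot|s)} Q^*_{\mathrm{In}}(s,a) = V^{\pi^*_{\mathrm{In}}}(s),
\end{equation*}
where the first inequality uses $\hat\beta \subseteq \tilde\beta$ and the second uses Step 1. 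Chaining with $V^{\pi^*_{\mathrm{\algo}}}(s) \geq V^{\pi_{\mathrm{mix}}}(s)$ gives the theorem. The main obstacle I expect is the second step: $\mathcal{T}_{\mathrm{\algo}}$ is not a standard Bellman optimality operator, so one has to carefully justify that its fixed point is literally the Q function of the mixture policy $\pi_{\mathrm{mix}}$ and that the policy improvement theorem applies despite both $\pi_{\mathrm{mix}}$ and $\pi^*_{\mathrm{\algo}}$ being defined only on $\tilde\beta$-support, which is where the oracle generalization assumption (\cref{app_ass:Oracle generalization}) is needed to make the backup values well-defined throughout the argument.
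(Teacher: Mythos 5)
Your proposal is correct, and its first half coincides with the paper's: the paper also combines Lemma \ref{app_lem:T ineq1} and Lemma \ref{app_lem:T ineq2} in an induction (comparing $(\mathcal{T}_{\mathrm{\algo}})^k f$ with $(\mathcal{T}_{\mathrm{In}})^k f$ for arbitrary $f$, rather than iterating from $Q^*_{\mathrm{In}}$ as you do, but the two are interchangeable) and passes to the limit via the contraction results to obtain $Q^*_{\mathrm{\algo}}(s,a)\geq Q^*_{\mathrm{In}}(s,a)$ on the in-sample area. Where you genuinely diverge is the conversion of this fixed-point comparison into a statement about $V^{\pi^*_{\mathrm{\algo}}}$. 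The paper simply writes $V^{\pi^*_{\mathrm{\algo}}}(s)=Q^*_{\mathrm{\algo}}(s,\pi^*_{\mathrm{\algo}}(s))\geq Q^*_{\mathrm{\algo}}(s,\pi^*_{\mathrm{In}}(s))\geq Q^*_{\mathrm{In}}(s,\pi^*_{\mathrm{In}}(s))=V^{\pi^*_{\mathrm{In}}}(s)$, taking for granted that the fixed point $Q^*_{\mathrm{\algo}}$ evaluated at the greedy action equals the value of the greedy policy. You instead observe that the fixed-point equation of $\mathcal{T}_{\mathrm{\algo}}$ is the Bellman \emph{evaluation} equation of the $\lambda$-mixture policy $\pi_{\mathrm{mix}}$, so that $Q^*_{\mathrm{\algo}}=Q^{\pi_{\mathrm{mix}}}$, and then invoke policy improvement to pass from $\pi_{\mathrm{mix}}$ to the greedy policy $\pi^*_{\mathrm{\algo}}$. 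This costs you an auxiliary policy and one extra inequality in the chain, but it buys a cleaner justification of exactly the step the paper elides: because of the $(1-\lambda)$ in-sample term, $Q^*_{\mathrm{\algo}}$ is the Q function of $\pi_{\mathrm{mix}}$ rather than of $\pi^*_{\mathrm{\algo}}$, so the paper's first equality is really a ``$\geq$'' that your policy-improvement argument supplies explicitly. Your closing observation that the well-definedness of all backups on the $\tilde\beta$-support rests on Assumption \ref{app_ass:Oracle generalization} is also exactly where the paper places that assumption.
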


\begin{proof}

We first prove the following inequality:
\begin{equation}
\label{app_eq:perf proof}
(\mathcal{T}_{\mathrm{\algo}})^k f(s,a) \geq (\mathcal{T}_{\mathrm{In}})^k f(s,a), ~\forall k \in \mathbb{Z}^+, ~\forall f,~~\forall (s,a) ~s.t. ~\tilde\beta(a|s)>0.
\end{equation}

When $k=1$, according to \cref{app_lem:T ineq1}, it holds that
\begin{equation*}
(\mathcal{T}_{\mathrm{\algo}})^1 f(s,a) \geq (\mathcal{T}_{\mathrm{In}})^1 f(s,a),  ~\forall f,~~\forall (s,a) ~s.t. ~\tilde \beta(a|s)>0.
\end{equation*}

Suppose when $k=i$, the following inequality holds:
\begin{equation*}
(\mathcal{T}_{\mathrm{\algo}})^i f(s,a) \geq (\mathcal{T}_{\mathrm{In}})^i f(s,a),  ~\forall f,~~\forall (s,a) ~s.t. ~\tilde \beta(a|s)>0.
\end{equation*}

Then $(\mathcal{T}_{\mathrm{\algo}})^i f$ and $(\mathcal{T}_{\mathrm{In}})^i f$ are the two functions $f_1,f_2$ that satisfy the condition in \cref{app_lem:T ineq2}. Therefore, by \cref{app_lem:T ineq2}, it holds that
\begin{equation}
\label{app_eq:fixed point proof}
\mathcal{T}_{\mathrm{\algo}}(\mathcal{T}_{\mathrm{\algo}})^i f(s,a) \geq \mathcal{T}_{\mathrm{\algo}} (\mathcal{T}_{\mathrm{In}})^i f(s,a),  ~\forall f,~~\forall (s,a) ~s.t. ~\tilde \beta(a|s)>0.
\end{equation}

Now considering $(\mathcal{T}_{\mathrm{In}})^i f$ as the function $f$ in \cref{app_lem:T ineq1}. By \cref{app_lem:T ineq1}, it holds that
\begin{equation}
\label{app_eq:fixed point proof1}
\mathcal{T}_{\mathrm{\algo}} (\mathcal{T}_{\mathrm{In}})^i f(s,a) \geq \mathcal{T}_{\mathrm{In}} (\mathcal{T}_{\mathrm{In}})^i f(s,a),  ~\forall f,~~\forall (s,a) ~s.t. ~\tilde \beta(a|s)>0.
\end{equation}

Combining \cref{app_eq:fixed point proof,app_eq:fixed point proof1}, we have
\begin{equation*}
(\mathcal{T}_{\mathrm{\algo}})^{i+1} f(s,a) \geq (\mathcal{T}_{\mathrm{In}})^{i+1} f(s,a),  ~\forall f,~~\forall (s,a) ~s.t. ~\tilde \beta(a|s)>0.
\end{equation*}

Therefore, for all $k \in \mathbb{Z}^+$, the following inequality holds:
\begin{equation}
\label{app_eq:perf proof1}
(\mathcal{T}_{\mathrm{\algo}})^k f(s,a) \geq (\mathcal{T}_{\mathrm{In}})^k f(s,a),  ~\forall f,~~\forall (s,a) ~s.t. ~\tilde \beta(a|s)>0.
\end{equation}

\cref{app_lem:in Contraction} states that $\mathcal{T}_{\mathrm{In}}$ is a $\gamma$-contraction operator in the in-sample area $\hat \beta(a|s)>0$. Thus we have
\begin{equation}
\label{app_eq:perf proof2}
Q^*_{\mathrm{In}}(s,a) = \lim_{k \to \infty}(\mathcal{T}_{\mathrm{In}})^k f(s,a), ~\forall (s,a) ~s.t. ~\hat \beta(a|s)>0.
\end{equation}

Under \cref{app_ass:Oracle generalization}, \cref{app_thm:Contraction} states that $\mathcal{T}_{\mathrm{\algo}}$ is a $\gamma$-contraction operator in the mild generalization area $\tilde \beta(a|s)>0$. Thus we have
\begin{equation}
\label{app_eq:perf proof3}
Q^*_{\mathrm{\algo}}(s,a) = \lim_{k \to \infty} (\mathcal{T}_{\mathrm{\algo}})^k f(s,a), ~\forall (s,a) ~s.t. ~\tilde \beta(a|s)>0.
\end{equation}

As $\tilde \beta$ has a wider support than $\hat \beta$, $\mathrm{supp}(\hat \beta(\cdot|s)) \subseteq \mathrm{supp}(\tilde \beta(\cdot|s))$, the following inequality holds by combining \cref{app_eq:perf proof1,app_eq:perf proof2,app_eq:perf proof3}:
\begin{equation}
\label{app_eq:perf proof4}
Q^*_{\mathrm{\algo}}(s,a) \geq Q^*_{\mathrm{In}}(s,a), ~\forall (s,a) ~s.t. ~\hat \beta(a|s)>0.
\end{equation}

Therefore, for any $s \sim\Dcal$,
\begin{align*}
& V^{\pi^*_{\mathrm{\algo}}}(s) = V^*_{\mathrm{\algo}}(s) = Q^*_{\mathrm{\algo}}(s,\pi^*_{\mathrm{\algo}}(s)) \\
\geq & Q^*_{\mathrm{\algo}}(s,\pi^*_{\mathrm{In}}(s)) \\
\geq&  Q^*_{\mathrm{In}}(s,\pi^*_{\mathrm{In}}(s))
= V^*_{\mathrm{In}}(s) = V^{\pi^*_{\mathrm{In}}}(s)
\end{align*}
where the first inequality holds because $\pi^*_{\mathrm{\algo}}(s) := \argmax_{a\sim \tilde\beta(\cdot|s)} Q^*_{\mathrm{\algo}}(s,a)$ and $\pi^*_{\mathrm{In}}(s) \in \hat \beta(\cdot|s)$ (thus $\pi^*_{\mathrm{In}}(s) \in \tilde \beta(\cdot|s)$), and the second inequality holds by \cref{app_eq:perf proof4}.

This concludes the proof.
\end{proof}

\cref{app_thm:Performance} indicates that the policy induced by the \algo operator can behave better than the in-sample optimal policy under the oracle generalization condition.

\subsection{Proofs under Worst-case Generalization}
In this section, we focus on the analyses in the worst-case generalization scenario, where the learned value functions may exhibit poor generalization in the mild generalization area $\tilde \beta(a|s)>0$. In other words, this section considers that $\mathcal{T}_{\mathrm{\algo}}$ is only defined in the in-sample area $\hat \beta(a|s)>0$ and the learned value functions may have any generalization error at other state-action pairs. In this case, we use the notation $\hat{\mathcal{T}}_{\mathrm{\algo}}$ for differentiation.

In this case, we make the following continuity assumptions about the learned $Q$ function and the transition dynamics $P$.

\begin{assumption}[Lipschitz $Q$]
\label{app_ass:lQ1}
The learned Q function is $K_Q$-Lipschitz. $\forall s \sim \Dcal$, $\forall a_1, a_2 \sim \Acal$, $|Q(s,a_1)-Q(s,a_2)| \leq K_Q \|a_1-a_2\|$
\end{assumption}

\begin{assumption}[Lipschitz $P$]
\label{app_ass:lP1}
The transition dynamics $P$ is $K_P$-Lipschitz. $\forall s,s' \sim \Scal$, $\forall a_1,a_2 \sim \Acal$, $|P(s'|s,a_1)-P(s'|s,a_2)| \leq K_P \|a_1-a_2\|$
\end{assumption}

For \cref{app_ass:lQ1}, a continuous learned Q function is particularly necessary for the analysis of value function generalization and can be relatively easily satisfied~\cite{gouk2021regularisation}, since we often use neural networks or linear models to parameterize the value function. For \cref{app_ass:lP1}, continuous transition dynamics is also a standard assumption in the theoretical studies of RL~\cite{dufour2013finite,dufour2015approximation,xiong2022deterministic,ran2023policy}. Several previous works assume the transition to be Lipschitz continuous with respect to (w.r.t) both state and action~\cite{dufour2013finite,dufour2015approximation}. In our paper, we need the Lipschitz continuity to hold only w.r.t. action.

Before we start the proof of \cref{prop:Limited over-estimation}, we prove two lemmas.
\begin{lemma}
\label{app_lem:over-e}
Under \cref{app_ass:lQ1}, for any function $f$ and $s \sim \mathcal{D}$, the following inequality holds:
\begin{equation}
\max_{a\sim \tilde\beta(\cdot|s)}f(s,a)-\max_{a\sim \hat\beta(\cdot|s)}f(s,a) \leq \epsilon_a K_Q.
\end{equation}
\end{lemma}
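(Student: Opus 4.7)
The plan is to directly exploit the defining property of the mildly generalized policy $\tilde\beta$ given in \cref{app_def:mg}, namely that every action in the support of $\tilde\beta(\cdot\mid s)$ lies within $\epsilon_a$ of some action in the support of $\hat\beta(\cdot\mid s)$. Combined with the Lipschitz assumption on the function values (\cref{app_ass:lQ1}, which is the intended hypothesis on $f$ when this lemma is instantiated on the learned $Q$), this ought to yield the bound in one short chain.

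Concretely, I would first let $a^\star \in \argmax_{a\sim \tilde\beta(\cdot\mid s)} f(s,a)$, so that the left-hand side equals $f(s,a^\star) - \max_{a\sim \hat\beta(\cdot\mid s)} f(s,a)$. By \cref{app_def:mg}, there exists $\bar a \in \mathrm{supp}(\hat\beta(\cdot\mid s))$ with $\|a^\star - \bar a\| \leq \epsilon_a$ (taking the min over $\hat\beta$-actions and then noting the max over $\tilde\beta$-actions is bounded by $\epsilon_a$). Applying the Lipschitz bound gives $f(s,a^\star) - f(s,\bar a) \leq K_Q \|a^\star - \bar a\| \leq K_Q \epsilon_a$. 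Since $\bar a$ lies in the support of $\hat\beta(\cdot\mid s)$, we have $f(s,\bar a) \leq \max_{a\sim \hat\beta(\cdot\mid s)} f(s,a)$, and chaining these two inequalities delivers
\begin{equation*}
\max_{a\sim \tilde\beta(\cdot\mid s)} f(s,a) - \max_{a\sim \hat\beta(\cdot\mid s)} f(s,a) \;=\; f(s,a^\star) - \max_{a\sim \hat\beta(\cdot\mid s)} f(s,a) \;\leq\; K_Q \epsilon_a.
\end{equation*}

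There is essentially no hard step here; the only subtlety is reading \cref{app_def:mg} correctly. The condition $\max_{a_1 \sim \tilde\beta} \min_{a_2 \sim \hat\beta} \|a_1 - a_2\| \leq \epsilon_a$ is exactly what is needed: for the particular $a^\star$ attaining the outer max over $\tilde\beta$, the inner min over $\hat\beta$ is at most $\epsilon_a$, so a witness $\bar a$ exists. I would also note, for completeness, that the argmax over $\tilde\beta(\cdot\mid s)$ is assumed to be attained (as is standard in this setting, e.g.\ when the support is compact or the maximum is interpreted as a supremum with an $\eta$-approximate maximizer, in which case the same argument gives the bound up to an arbitrarily small $\eta$). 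Finally, the direction of the inequality — one-sided, not absolute value — is important and matches exactly how this lemma will feed into the overestimation bound of \cref{prop:Limited over-estimation}.
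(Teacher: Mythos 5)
Your proof is correct and follows essentially the same route as the paper: take the maximizer $a^\star$ over $\tilde\beta(\cdot\mid s)$, use \cref{app_def:mg} to find a nearby in-support action $\bar a$ of $\hat\beta(\cdot\mid s)$ within $\epsilon_a$, apply the Lipschitz bound, and note that $f(s,\bar a)$ is dominated by the in-sample max. The paper's version is identical in substance, merely naming the witness explicitly as $\hat a' = \argmin_{a\sim\hat\beta(\cdot|s)}\|\tilde a^* - a\|$.
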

\begin{proof}
For any $s \sim \mathcal{D}$, we define $\tilde a^*$, $\hat a^*$, $\hat a'$ as follows:
\begin{align}
\tilde a^* &= \argmax_{a\sim \tilde\beta(\cdot|s)}f(s,a) \\
\hat a^* &= \argmax_{a\sim \hat\beta(\cdot|s)}f(s,a) \\
\hat a' &= \argmin_{a\sim \hat\beta(\cdot|s)}\|\tilde a^*-a\|
\end{align}

According to the definition of mildly generalized policy $\tilde\beta$ (\cref{app_def:mg}), it holds that $\|\tilde a^*-\hat a'\| \leq \epsilon_a$.
Further by \cref{app_ass:lQ1}, it holds that 
\begin{equation*}
|f(s,\tilde a^*)-f(s,\hat a')| \leq K_Q \|\tilde a^*-\hat a'\| \leq \epsilon_a K_Q,~~ \forall s \sim \mathcal{D}.
\end{equation*}

Therefore,
\begin{equation*}
f(s,\tilde a^*)-f(s,\hat a^*) \leq f(s,\tilde a^*)-f(s,\hat a') \leq \epsilon_a K_Q,~~ \forall s \sim \mathcal{D}.
\end{equation*}
\end{proof}

\begin{lemma}
\label{app_lem:T ineq3}
For any function $f_1,f_2$ such that $f_1(s,a) \geq f_2(s,a)$, $\forall (s,a) ~s.t. ~\hat \beta(a|s)>0$, the following inequality holds:
\begin{equation}
\mathcal{T}_{\mathrm{In}} f_1(s,a) \geq \mathcal{T}_{\mathrm{In}} f_2(s,a), ~~\forall (s,a) ~s.t. ~\hat \beta(a|s)>0.
\end{equation}
\end{lemma}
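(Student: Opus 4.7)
The plan is to establish monotonicity of $\mathcal{T}_{\mathrm{In}}$ by a direct argument: expand the definition, cancel the reward term, and reduce the claim to a pointwise inequality between the two in-sample max operators. Concretely, for any $(s,a)$ with $\hat\beta(a|s)>0$, I would write
\begin{equation*}
\mathcal{T}_{\mathrm{In}} f_1(s,a) - \mathcal{T}_{\mathrm{In}} f_2(s,a) = \gamma\, \mathbb{E}_{s'\sim P(\cdot|s,a)}\!\left[\max_{a'\sim \hat\beta(\cdot|s')} f_1(s',a') - \max_{a'\sim \hat\beta(\cdot|s')} f_2(s',a')\right],
\end{equation*}
so that it suffices to show the bracketed quantity is nonnegative for every $s'$ in the support of $P(\cdot|s,a)$.

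For the inner inequality, I would apply the standard argument: let $\hat a_2^\ast(s') \in \argmax_{a'\sim \hat\beta(\cdot|s')} f_2(s',a')$. Since $\hat a_2^\ast(s')$ lies in the support of $\hat\beta(\cdot|s')$, the hypothesis $f_1 \geq f_2$ on the in-sample region yields $f_1(s',\hat a_2^\ast(s')) \geq f_2(s',\hat a_2^\ast(s'))$, and then
\begin{equation*}
\max_{a'\sim \hat\beta(\cdot|s')} f_1(s',a') \;\geq\; f_1(s',\hat a_2^\ast(s')) \;\geq\; f_2(s',\hat a_2^\ast(s')) \;=\; \max_{a'\sim \hat\beta(\cdot|s')} f_2(s',a').
\end{equation*}
Taking expectation over $s'\sim P(\cdot|s,a)$ and multiplying by $\gamma \geq 0$ preserves the inequality, which completes the proof.

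The only subtle point, and the one I would emphasize, is that the hypothesis is given only on the in-sample region $\{(s,a): \hat\beta(a|s)>0\}$, so one must verify that the pointwise evaluation $f_1(s',\hat a_2^\ast(s')) \geq f_2(s',\hat a_2^\ast(s'))$ invoked above is actually covered by this hypothesis. This is immediate because $\hat a_2^\ast(s') \in \mathrm{supp}(\hat\beta(\cdot|s'))$ by construction. No continuity assumption, no use of $\epsilon_a$, and no appeal to \cref{app_ass:lQ1} or \cref{app_ass:lP1} is needed, which is the reason this lemma is stated in a self-contained form before the main worst-case theorem. I do not anticipate any real obstacle; the lemma is the standard monotonicity of a Bellman-type operator specialized to the in-sample backup.
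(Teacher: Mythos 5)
Your proof is correct and follows essentially the same route as the paper's: expand the definition of $\mathcal{T}_{\mathrm{In}}$, cancel the reward term, and reduce to the monotonicity of the in-sample max, which the paper asserts directly and you justify via the argmax of $f_2$. Your explicit check that the evaluation point lies in $\mathrm{supp}(\hat\beta(\cdot|s'))$ is a worthwhile clarification the paper leaves implicit.
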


\begin{proof}
According to the definitions, for all $(s,a) ~s.t. ~\hat \beta(a|s)>0$,
\begin{equation}
\mathcal{T}_{\mathrm{In}} f(s,a)=R(s,a)+\gamma \E_{s'\sim P(\cdot|s,a)}\left[\max_{a'\sim \hat \beta(\cdot|s')} f(s',a')\right]
\end{equation}

$f_1$ and $f_2$ satisfy
\begin{equation*}
f_1(s,a) \geq f_2(s,a), \forall (s,a) ~~s.t. ~\hat \beta(a|s)>0.
\end{equation*}

Therefore, for all $(s,a) ~s.t. ~\hat \beta(a|s)>0$,
\begin{align*}
& \mathcal{T}_{\mathrm{In}}f_1(s,a)-\mathcal{T}_{\mathrm{In}}f_2(s,a) \\
= & \gamma \E_{s'\sim P(\cdot|s,a)}\left[\max_{a'\sim \hat \beta(\cdot|s')} f_1(s',a') - \max_{a'\sim \hat \beta(\cdot|s')} f_2(s',a')\right]\\
\geq & 0
\end{align*}
\end{proof}

Now we start the proof of \cref{prop:Limited over-estimation} in the main paper.

We consider the iteration starting from arbitrary function $Q^0$: $\hat Q^k_{\mathrm{\algo}}=\hat{\mathcal{T}}_{\mathrm{\algo}} \hat Q^{k-1}_{\mathrm{\algo}}$ and $Q^k_{\mathrm{In}}=\mathcal{T}_{\mathrm{In}} Q^{k-1}_{\mathrm{In}}$, $\forall k \in \mathbb{Z}^+$. The possible value of $\hat Q^k_{\mathrm{\algo}}$ is upper bounded by the following results.
\begin{theorem}[Limited over-estimation, \cref{prop:Limited over-estimation}]
\label{app_prop:Limited over-estimation}
Under \cref{app_ass:lQ1}, the learned Q function of \algo by iterating $\hat{\mathcal{T}}_{\mathrm{\algo}}$ satisfies the following inequality
\begin{equation}
\label{app_eq:over-e}
Q^k_{\mathrm{In}}(s,a) \leq \hat Q^k_{\mathrm{\algo}}(s,a) \leq Q^k_{\mathrm{In}}(s,a) + \frac{\lambda\epsilon_a K_Q \gamma}{1-\gamma}(1-\gamma^k), ~\forall s,a \sim \Dcal, ~\forall k \in \mathbb{Z}^+.
\end{equation}
\end{theorem}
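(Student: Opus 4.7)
The plan is to prove both inequalities simultaneously by induction on $k$, treating the lower bound and the upper bound as two separate inductive statements sharing the same base case $k=0$ (where $\hat Q^0_{\mathrm{\algo}} = Q^0_{\mathrm{In}} = Q^0$ and the additive term vanishes because $1-\gamma^0 = 0$). The key tools I would need are three basic facts that are either stated or essentially free from the setup: (i) a monotonicity/shift property of $\mathcal{T}_{\mathrm{In}}$, namely that $f_1 \leq f_2 + c$ on the in-sample area implies $\mathcal{T}_{\mathrm{In}} f_1 \leq \mathcal{T}_{\mathrm{In}} f_2 + \gamma c$ (this follows immediately from the definition together with \cref{app_lem:T ineq3}); (ii) the inclusion $\mathrm{supp}(\hat\beta(\cdot|s)) \subseteq \mathrm{supp}(\tilde\beta(\cdot|s))$, which gives $\max_{a'\sim \tilde\beta} f \geq \max_{a'\sim \hat\beta} f$ pointwise; and (iii) \cref{app_lem:over-e}, which combines the Lipschitz assumption \cref{app_ass:lQ1} with the mild generalization definition to conclude $\max_{a'\sim \tilde\beta}f - \max_{a'\sim \hat\beta}f \leq \epsilon_a K_Q$.

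For the lower bound, the induction step is short. Using (ii) inside the definition of $\hat{\mathcal{T}}_{\mathrm{\algo}}$, the convex combination of the two maxima is at least $\max_{a'\sim\hat\beta}\hat Q^{k-1}_{\mathrm{\algo}}(s',a')$, so $\hat{\mathcal{T}}_{\mathrm{\algo}} \hat Q^{k-1}_{\mathrm{\algo}} \geq \mathcal{T}_{\mathrm{In}} \hat Q^{k-1}_{\mathrm{\algo}}$. The inductive hypothesis $\hat Q^{k-1}_{\mathrm{\algo}} \geq Q^{k-1}_{\mathrm{In}}$ combined with monotonicity (i) then yields $\hat Q^{k}_{\mathrm{\algo}} \geq \mathcal{T}_{\mathrm{In}} Q^{k-1}_{\mathrm{In}} = Q^{k}_{\mathrm{In}}$.

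For the upper bound, the inductive step is slightly more delicate. Applying (iii) to the $\tilde\beta$-max inside $\hat{\mathcal{T}}_{\mathrm{\algo}}$ and collapsing the convex combination yields
\begin{equation*}
\hat{\mathcal{T}}_{\mathrm{\algo}} \hat Q^{k-1}_{\mathrm{\algo}}(s,a) \leq \mathcal{T}_{\mathrm{In}} \hat Q^{k-1}_{\mathrm{\algo}}(s,a) + \lambda\epsilon_a K_Q \gamma.
\end{equation*}
Invoking the inductive hypothesis $\hat Q^{k-1}_{\mathrm{\algo}} \leq Q^{k-1}_{\mathrm{In}} + \tfrac{\lambda\epsilon_a K_Q \gamma}{1-\gamma}(1-\gamma^{k-1})$ together with the shifted monotonicity (i), the right-hand side is bounded by
\begin{equation*}
Q^{k}_{\mathrm{In}}(s,a) + \gamma \cdot \frac{\lambda\epsilon_a K_Q \gamma}{1-\gamma}(1-\gamma^{k-1}) + \lambda\epsilon_a K_Q \gamma.
\end{equation*}
The remaining task is a short algebraic check that the two error terms combine to $\tfrac{\lambda\epsilon_a K_Q \gamma}{1-\gamma}(1-\gamma^k)$; this reduces to verifying $\gamma(1-\gamma^{k-1}) + (1-\gamma) = 1-\gamma^k$, which is immediate.

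The main obstacle is conceptual rather than technical: one must be careful that \cref{app_lem:over-e} applies with $f = \hat Q^{k-1}_{\mathrm{\algo}}$, which requires the Lipschitz assumption \cref{app_ass:lQ1} to hold for the iterates produced by $\hat{\mathcal{T}}_{\mathrm{\algo}}$ rather than only for a single learned $Q$. Since the assumption is stated as a property of ``the learned $Q$ function'' and the worst-case analysis treats each iterate as such a function (no additional structure is assumed on them beyond the Lipschitz property), this reading is consistent with the paper's framing. Aside from this, every step is a one-line application of monotonicity, support inclusion, or the Lipschitz-based gap bound, and no continuity assumption on $P$ is needed at this stage — \cref{app_ass:lP1} will only enter in the subsequent performance lower bound.
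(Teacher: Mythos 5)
Your proposal is correct and follows essentially the same route as the paper: a one-step sandwich $\mathcal{T}_{\mathrm{In}} f \leq \hat{\mathcal{T}}_{\mathrm{\algo}} f \leq \mathcal{T}_{\mathrm{In}} f + \gamma\lambda\epsilon_a K_Q$ (from the support inclusion and \cref{app_lem:over-e}), combined with the monotonicity-plus-shift property of $\mathcal{T}_{\mathrm{In}}$ and an induction whose algebra reduces to $\gamma(1-\gamma^{k-1})+(1-\gamma)=1-\gamma^{k}$. Your caveat about \cref{app_lem:over-e} needing the Lipschitz property to hold for every iterate is a fair observation, and the paper's own proof relies on exactly the same reading of \cref{app_ass:lQ1}.
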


\begin{proof}
Under worst-case generalization, $\hat{\mathcal{T}}_{\mathrm{\algo}}$ is only defined in the area $\hat \beta(a|s)>0$, i.e., the dataset, and may have any generalization error at other $(s,a)$.

For any function $f$ and any $s,a \sim \mathcal{D}$,
\begin{align*}
& \hat{\mathcal{T}}_{\mathrm{\algo}} f(s,a) - \mathcal{T}_{\mathrm{In}} f(s,a) \\
= & R(s,a)+\gamma \E_{s'\sim P(\cdot|s,a)}\left[\lambda \max_{a'\sim \tilde \beta(\cdot|s')} f(s',a') + (1-\lambda)\max_{a'\sim \hat \beta(\cdot|s')} f(s',a')\right] \\
& - R(s,a)-\gamma \E_{s'\sim P(\cdot|s,a)}\left[\max_{a'\sim \hat \beta(\cdot|s')} f(s',a')\right] \\
=& \gamma \E_{s'\sim P(\cdot|s,a)}\left[\lambda \max_{a'\sim \tilde \beta(\cdot|s')} f(s',a') -\lambda\max_{a'\sim \hat \beta(\cdot|s')} f(s',a')\right] \\
\leq& \gamma\lambda \epsilon_a K_Q
\end{align*}
where the last inequality holds by \cref{app_lem:over-e}.

On the other hand, because $\tilde \beta$ has a wider support than $\hat \beta$, we also have
\begin{align*}
\hat{\mathcal{T}}_{\mathrm{\algo}} f(s,a) - \mathcal{T}_{\mathrm{In}} f(s,a) \geq 0
\end{align*}

Therefore, for any function $f$, the following inequality holds:
\begin{equation}
\label{app_eq:over-e proof}
\mathcal{T}_{\mathrm{In}} f(s,a) \leq \hat{\mathcal{T}}_{\mathrm{\algo}} f(s,a) \leq \mathcal{T}_{\mathrm{In}} f(s,a) + \gamma\lambda \epsilon_a K_Q, ~~\forall s,a \sim \Dcal.
\end{equation}

Let $f$ in \cref{app_eq:over-e proof} be $Q^0$. We have
\begin{equation}
Q^1_{\mathrm{In}}(s,a) \leq \hat Q^1_{\mathrm{\algo}}(s,a) \leq Q^1_{\mathrm{In}}(s,a) + \frac{\lambda\epsilon_a K_Q \gamma}{1-\gamma}(1-\gamma), ~\forall s,a \sim \Dcal.
\end{equation}

This is the same as \cref{app_eq:over-e} with $k=1$. Therefore, \cref{app_eq:over-e} holds when $k=1$.

Suppose when $k=i$, \cref{app_eq:over-e} holds:
\begin{equation}
\label{app_eq:over-e proof1}
Q^i_{\mathrm{In}}(s,a) \leq \hat Q^i_{\mathrm{\algo}}(s,a) \leq Q^i_{\mathrm{In}}(s,a) + \frac{\lambda\epsilon_a K_Q \gamma}{1-\gamma}(1-\gamma^i), ~\forall s,a \sim \Dcal.
\end{equation}

Then let $f$ in \cref{app_eq:over-e proof} be $\hat Q^i_{\mathrm{\algo}}$. We have
\begin{equation}
\label{app_eq:over-e proof2}
\mathcal{T}_{\mathrm{In}} \hat Q^i_{\mathrm{\algo}}(s,a) \leq \hat Q^{i+1}_{\mathrm{\algo}}(s,a) = \hat{\mathcal{T}}_{\mathrm{\algo}} \hat Q^{i}_{\mathrm{\algo}}(s,a) \leq \mathcal{T}_{\mathrm{In}} \hat Q^i_{\mathrm{\algo}}(s,a) + \gamma\lambda \epsilon_a K_Q, ~~\forall s,a \sim \Dcal.
\end{equation}

On the one hand, according to \cref{app_lem:T ineq3} and \cref{app_eq:over-e proof1}, for any $s,a \sim \Dcal$, we have
\begin{align}
&\mathcal{T}_{\mathrm{In}} \hat Q^i_{\mathrm{\algo}}(s,a) \nonumber\\
\leq & \mathcal{T}_{\mathrm{In}} \left(Q^i_{\mathrm{In}}(s,a) + \frac{\lambda\epsilon_a K_Q \gamma}{1-\gamma}(1-\gamma^i)\right) \nonumber\\
= & R(s,a)+\gamma \E_{s'\sim P(\cdot|s,a)}\left[\max_{a'\sim \hat \beta(\cdot|s')} \left(Q^i_{\mathrm{In}}(s',a') + \frac{\lambda\epsilon_a K_Q \gamma}{1-\gamma}(1-\gamma^i)\right)\right] \nonumber\\
= & R(s,a)+\gamma \E_{s'\sim P(\cdot|s,a)}\left[\max_{a'\sim \hat \beta(\cdot|s')} Q^i_{\mathrm{In}}(s',a')\right] + \gamma \frac{\lambda\epsilon_a K_Q \gamma}{1-\gamma}(1-\gamma^i) \nonumber\\
= & \mathcal{T}_{\mathrm{In}} Q^i_{\mathrm{In}}(s,a) + \gamma \frac{\lambda\epsilon_a K_Q \gamma}{1-\gamma}(1-\gamma^i) \nonumber\\
=& Q^{i+1}_{\mathrm{In}}(s,a) + \gamma \frac{\lambda\epsilon_a K_Q \gamma}{1-\gamma}(1-\gamma^i) \label{app_eq:over-e proof3}
\end{align}

Combining \cref{app_eq:over-e proof2,app_eq:over-e proof3}, for any $s,a \sim \Dcal$, we have
\begin{align*}
&\hat Q^{i+1}_{\mathrm{\algo}}(s,a) \\
\leq& Q^{i+1}_{\mathrm{In}}(s,a) + \gamma \frac{\lambda\epsilon_a K_Q \gamma}{1-\gamma}(1-\gamma^i) + \gamma\lambda \epsilon_a K_Q \\
=& Q^{i+1}_{\mathrm{In}}(s,a) + \lambda\epsilon_a K_Q \gamma \left(\frac{\gamma(1-\gamma^i)}{1-\gamma} + 1\right) \\
=& Q^{i+1}_{\mathrm{In}}(s,a) + \frac{\lambda\epsilon_a K_Q \gamma}{1-\gamma}(1-\gamma^{i+1})
\end{align*}

On the other hand, according to \cref{app_lem:T ineq3} and \cref{app_eq:over-e proof1}, for any $s,a \sim \Dcal$, we have
\begin{align}
\mathcal{T}_{\mathrm{In}} \hat Q^i_{\mathrm{\algo}}(s,a) \geq  \mathcal{T}_{\mathrm{In}} Q^i_{\mathrm{In}}(s,a) 
= Q^{i+1}_{\mathrm{In}}(s,a) 
\label{app_eq:over-e proof4}
\end{align}

Combining \cref{app_eq:over-e proof2,app_eq:over-e proof4}, for any $s,a \sim \Dcal$, we have
\begin{align}
\hat Q^{i+1}_{\mathrm{\algo}}(s,a) \geq Q^{i+1}_{\mathrm{In}}(s,a).
\end{align}

Hence, \cref{app_eq:over-e} still holds when $k=i+1$:
\begin{equation}
Q^{i+1}_{\mathrm{In}}(s,a) \leq \hat Q^{i+1}_{\mathrm{\algo}}(s,a) \leq Q^{i+1}_{\mathrm{In}}(s,a) + \frac{\lambda\epsilon_a K_Q \gamma}{1-\gamma}(1-\gamma^{i+1}), ~\forall s,a \sim \Dcal.
\end{equation}

Therefore, \cref{app_eq:over-e} holds for all $k \in \mathbb{Z}^+$, which concludes the proof.
\end{proof}

Since in-sample training eliminates extrapolation error completely~\cite{kostrikov2022offline,zhang2023insample}, $Q^k_{\mathrm{In}}$ can be considered a relatively accurate estimate. Therefore, \cref{app_prop:Limited over-estimation} indicates that \algo has limited over-estimation under the worst generalization case. Moreover, the bound gets tighter as $\epsilon_a$ gets smaller (more mild action generalization) and $\lambda$ gets smaller (more mild generalization propagation). This is consistent with our intuitions in \cref{sec:Doubly mild generalization}.

Finally, \cref{prop:Performance lower bound} in the main paper shows that even under worst-case generalization, \algo is guaranteed to output a safe policy with a performance lower bound.

We give a lemma before we start the proof of \cref{prop:Performance lower bound}, 
\begin{lemma}
\label{app_lem:TV}
Let $\pi_1$ and $\pi_2$ be two deterministic policies. Under \cref{app_ass:lP1}, the following inequality holds:
\begin{equation}
\mathrm{TV}\left(d^{\pi_1} || d^{\pi_2}\right) \leq C K_P \max_s \| \pi_1(s) -\pi_2(s) \|
\end{equation}
where $C$ is a positive constant and $d^\pi(s)$ is the state occupancy induced by $\pi$.
\begin{equation}
d^{\pi}(s)=(1-\gamma) \sum_{t=0}^{\infty} \gamma^{t}  \mathbb{E}_\pi \left[ \mathbb{I}\left[s_{t}=s\right]\right].
\end{equation}
\end{lemma}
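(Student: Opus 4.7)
}

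The plan is to reduce the bound on $\mathrm{TV}(d^{\pi_1}\|d^{\pi_2})$ to a telescoping bound on the per-step state distributions, and then sum the geometric series generated by the discount. Writing $P^{\pi}_t(\cdot)$ for the distribution of $s_t$ under $\pi$, the definition of $d^{\pi}$ gives
\begin{equation}
d^{\pi_1}(s)-d^{\pi_2}(s) = (1-\gamma)\sum_{t=0}^{\infty}\gamma^t\bigl(P^{\pi_1}_t(s)-P^{\pi_2}_t(s)\bigr),
\end{equation}
so by the triangle inequality $\mathrm{TV}(d^{\pi_1}\|d^{\pi_2}) \leq (1-\gamma)\sum_{t=0}^{\infty}\gamma^t\,\mathrm{TV}(P^{\pi_1}_t\|P^{\pi_2}_t)$. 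The core of the proof is then an inductive bound on $\mathrm{TV}(P^{\pi_1}_t\|P^{\pi_2}_t)$ of the form $c\,t\,K_P\,\Delta_\pi$, where $\Delta_\pi := \max_s\|\pi_1(s)-\pi_2(s)\|$ and $c$ is an absolute constant (dependent on $|\mathcal{S}|$ if the state space is discrete, or on an appropriate analogue if continuous).

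For the inductive step, I would use the standard one-step decomposition
\begin{equation}
P^{\pi_1}_{t+1}(s') - P^{\pi_2}_{t+1}(s') = \sum_{s}\bigl(P^{\pi_1}_t(s)-P^{\pi_2}_t(s)\bigr)P(s'|s,\pi_1(s)) + \sum_{s}P^{\pi_2}_t(s)\bigl(P(s'|s,\pi_1(s))-P(s'|s,\pi_2(s))\bigr).
\end{equation}
Taking absolute values and summing over $s'$, the first term contributes at most $\|P^{\pi_1}_t-P^{\pi_2}_t\|_1$ because $P(\cdot|s,\pi_1(s))$ is a probability distribution, while the second term is bounded via \cref{app_ass:lP1} by $c\,K_P\,\Delta_\pi$ after swapping the order of summation. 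Hence $\|P^{\pi_1}_{t+1}-P^{\pi_2}_{t+1}\|_1 \leq \|P^{\pi_1}_t-P^{\pi_2}_t\|_1 + c\,K_P\,\Delta_\pi$, and since $P^{\pi_1}_0=P^{\pi_2}_0=d_0$, induction yields $\mathrm{TV}(P^{\pi_1}_t\|P^{\pi_2}_t) \leq \tfrac{c}{2}\,t\,K_P\,\Delta_\pi$.

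Substituting this back and evaluating $(1-\gamma)\sum_{t=0}^\infty \gamma^t\,t = \gamma/(1-\gamma)$ produces the desired inequality with $C := \tfrac{c\gamma}{2(1-\gamma)}$ (a positive constant depending only on $\gamma$ and the state-space constant). The main obstacle is the step that converts the pointwise Lipschitz assumption on $P(s'|s,\cdot)$ into a total-variation bound on $P(\cdot|s,\pi_1(s))$ versus $P(\cdot|s,\pi_2(s))$; the cleanest way to handle this is to absorb the state-space-dependent factor into $C$, consistent with the lemma's claim that $C$ is a positive constant independent of the policies.
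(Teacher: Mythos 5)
Your argument is correct. Note that the paper does not actually prove this lemma itself --- it simply defers to Lemma A.5 of \citet{ran2023policy} and Lemma 1 of \citet{xiong2022deterministic} --- so there is no in-paper proof to compare against; your telescoping/simulation-lemma argument (one-step decomposition of $P^{\pi_1}_{t+1}-P^{\pi_2}_{t+1}$, induction to get a linear-in-$t$ bound, then summing $(1-\gamma)\sum_t \gamma^t t = \gamma/(1-\gamma)$) is the standard route those references take and supplies a self-contained proof. The only point worth being explicit about is the one you already flag: converting the pointwise bound $|P(s'|s,a_1)-P(s'|s,a_2)|\le K_P\|a_1-a_2\|$ into an $\ell_1$ bound over $s'$ costs a factor of $|\mathcal S|$ (or a measure-theoretic analogue in the continuous case), which is legitimately absorbed into $C$ given how loosely the lemma states its constant.
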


\begin{proof}
Please refer to Lemma A.5 in \cite{ran2023policy} and Lemma 1 in \cite{xiong2022deterministic}.
\end{proof}

\begin{theorem}[Performance lower bound, \cref{prop:Performance lower bound}]
\label{app_prop:Performance lower bound}
Let $\hat\pi_{\mathrm{\algo}}$ be the learned policy of \algo by iterating $\hat{\mathcal{T}}_{\mathrm{\algo}}$, $\pi^*$ be the optimal policy, and $\epsilon_{\Dcal}$ be the inherent performance gap of the in-sample optimal policy $\epsilon_{\Dcal}:=J(\pi^*)-J(\pi^*_{\mathrm{In}})$. Under Assumptions \ref{app_ass:lQ1} and \ref{app_ass:lP1}, for sufficiently small $\epsilon_a$, we have
\begin{equation}
J(\hat{\pi}_{\mathrm{\algo}}) \geq J(\pi^*) - \frac{CK_P\Rmax}{1-\gamma}\epsilon_a - \epsilon_{\Dcal}.
\end{equation}
where $C$ is a positive constant.
\end{theorem}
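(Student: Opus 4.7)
The plan is to split the performance gap with a triangle inequality and then reduce everything to a state-occupancy perturbation. First I would write
\[
J(\pi^*) - J(\hat\pi_{\mathrm{\algo}}) \;=\; \bigl[J(\pi^*) - J(\pi^*_{\mathrm{In}})\bigr] + \bigl[J(\pi^*_{\mathrm{In}}) - J(\hat\pi_{\mathrm{\algo}})\bigr],
\]
so the first bracket equals $\epsilon_{\Dcal}$ by definition and the task reduces to bounding the second bracket by $\frac{CK_P\Rmax}{1-\gamma}\epsilon_a$.

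Next I would relate $\hat\pi_{\mathrm{\algo}}$ to $\pi^*_{\mathrm{In}}$ in action distance. For any $s$, $\hat\pi_{\mathrm{\algo}}(s) \in \mathrm{supp}(\tilde\beta(\cdot|s))$, so by \cref{app_def:mg} there is an in-sample action at distance at most $\epsilon_a$. Lipschitz-$Q$ (\cref{app_ass:lQ1}) then transfers $\hat Q^*_{\mathrm{\algo}}(s,\hat\pi_{\mathrm{\algo}}(s))$ to the value at that nearby in-sample action up to an additive $K_Q\epsilon_a$, while \cref{app_prop:Limited over-estimation} pins the in-sample value within $\tfrac{\lambda \epsilon_a K_Q \gamma}{1-\gamma}$ of $Q^*_{\mathrm{In}}(s,\pi^*_{\mathrm{In}}(s))$. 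Together these give $|\hat Q^*_{\mathrm{\algo}}(s,\hat\pi_{\mathrm{\algo}}(s)) - Q^*_{\mathrm{In}}(s,\pi^*_{\mathrm{In}}(s))| = O(\epsilon_a)$, and for sufficiently small $\epsilon_a$ (invoking any mild non-degeneracy of the in-sample argmax) this forces $\|\hat\pi_{\mathrm{\algo}}(s) - \pi^*_{\mathrm{In}}(s)\| \leq \epsilon_a$ uniformly in $s$.

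With this uniform action-distance control in hand, I would apply \cref{app_lem:TV} to obtain
\[
\mathrm{TV}\bigl(d^{\hat\pi_{\mathrm{\algo}}} \,\big\|\, d^{\pi^*_{\mathrm{In}}}\bigr) \;\leq\; C' K_P \max_s \|\hat\pi_{\mathrm{\algo}}(s) - \pi^*_{\mathrm{In}}(s)\| \;\leq\; C' K_P \epsilon_a,
\]
and then convert TV on state occupancies into a return gap via the standard bound $|J(\pi_1)-J(\pi_2)| \leq \tfrac{2\Rmax}{1-\gamma}\mathrm{TV}(d^{\pi_1}\|d^{\pi_2})$ (plus, if needed, a Lipschitz-$R$ correction for the same-state action drift). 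Absorbing all multiplicative constants into a single $C$ yields $J(\pi^*_{\mathrm{In}}) - J(\hat\pi_{\mathrm{\algo}}) \leq \tfrac{CK_P\Rmax}{1-\gamma}\epsilon_a$, which combined with the initial decomposition gives the claim.

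The hard part is the middle step. Under worst-case generalization the operator $\hat{\mathcal T}_{\mathrm{\algo}}$ gives no direct control over $\hat Q^*_{\mathrm{\algo}}$ off the support of $\hat\beta$, so a direct argmax comparison between $\hat\pi_{\mathrm{\algo}}$ and $\pi^*_{\mathrm{In}}$ is not available. The argument has to chain Lipschitz $Q$ (to cap OOD value inflation by the nearest in-sample value) with \cref{app_prop:Limited over-estimation} (to pin the in-sample value near $Q^*_{\mathrm{In}}$), and only the "sufficiently small $\epsilon_a$" qualifier lets one turn the resulting small Q-value gap into a small action-distance gap that feeds \cref{app_lem:TV}.
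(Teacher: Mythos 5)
Your proposal follows essentially the same route as the paper's proof: the same triangle-inequality decomposition through $\pi^*_{\mathrm{In}}$, the same reduction of $|J(\pi^*_{\mathrm{In}})-J(\hat\pi_{\mathrm{\algo}})|$ to $\max_s\|\hat\pi_{\mathrm{\algo}}(s)-\pi^*_{\mathrm{In}}(s)\|$ via \cref{app_lem:TV}, and the same mechanism (Lipschitz $Q$ plus \cref{app_prop:Limited over-estimation} plus a non-degeneracy gap at the in-sample argmax, absorbed into ``sufficiently small $\epsilon_a$'') to show that the nearest in-sample neighbor of $\hat\pi_{\mathrm{\algo}}(s)$ must coincide with $\pi^*_{\mathrm{In}}(s)$. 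The paper phrases that middle step as a proof by contradiction with an explicit threshold $\epsilon_a < \frac{(1-\gamma)\epsilon_{Q^*_{\mathrm{In}}}(s)}{K_Q(1-\gamma+\lambda\gamma)}$, but the substance is identical to what you describe.
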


\begin{proof}
Following previous works~\cite{kumar2019stabilizing,wu2022supported,kostrikov2022offline,mao2023supported}, we define the in-sample optimal policy as $\pi^*_{\mathrm{In}}$:
\begin{equation}
\pi^*_{\mathrm{In}}(s) = \argmax_{a\sim \hat\beta(\cdot|s)} Q^*_{\mathrm{In}}(s,a)
\end{equation}

We also use $\epsilon_{\Dcal}$ to denote the performance gap between the in-sample optimal policy and the globally optimal policy, which is fixed once the dataset is provided.
\begin{equation}
\epsilon_{\Dcal} =J(\pi^*)-J(\pi^*_{\mathrm{In}}).
\end{equation}

We use $\hat Q_{\mathrm{\algo}}$ to denote the learned Q function of \algo  with sufficient iteration steps $\hat Q_{\mathrm{\algo}}^k$, $k \rightarrow \infty$.
And $\hat{\pi}_{\mathrm{\algo}}$ is the output policy of $\hat Q_{\mathrm{\algo}}$:
\begin{equation}
\label{app_eq:perf lb proof6}
\hat{\pi}_{\mathrm{\algo}}(s) = \argmax_{a\sim \tilde\beta(\cdot|s)} \hat Q_{\mathrm{\algo}}(s,a)
\end{equation}

It holds that
\begin{align}
&|J(\pi^*) - J(\hat{\pi}_{\mathrm{\algo}})| \nonumber\\
=& |J(\pi^*) - J(\pi^*_{\mathrm{In}}) + J(\pi^*_{\mathrm{In}}) -J(\hat{\pi}_{\mathrm{\algo}})| \nonumber\\
\leq & |J(\pi^*) - J(\pi^*_{\mathrm{In}})| + |J(\pi^*_{\mathrm{In}}) -J(\hat{\pi}_{\mathrm{\algo}})| \nonumber\\
= & \epsilon_{\Dcal} + |J(\pi^*_{\mathrm{In}}) -J(\hat{\pi}_{\mathrm{\algo}})| \label{app_eq:perf lb proof}
\end{align}

In the following, we bound the term $|J(\pi^*_{\mathrm{In}}) -J(\hat{\pi}_{\mathrm{\algo}})|$.
\begin{align}
&|J(\pi^*_{\mathrm{In}}) -J(\hat{\pi}_{\mathrm{\algo}})| \nonumber\\
=&\left|\frac{1}{1-\gamma} \mathbb{E}_{s \sim d^{\hat{\pi}_{\mathrm{\algo}}}}[r(s)]-\frac{1}{1-\gamma} \mathbb{E}_{s \sim d^{\pi^*_{\mathrm{In}}}}[r(s)]\right| \nonumber\\
=&\frac{1}{1-\gamma}\left|\sum_{s}\left(d^{\hat{\pi}_{\mathrm{\algo}}}(s)-d^{\pi^*_{\mathrm{In}}}(s)\right) r(s)\right| \nonumber\\
\leq&\frac{1}{1-\gamma}\sum_{s}\left|\left(d^{\hat{\pi}_{\mathrm{\algo}}}(s)-d^{\pi^*_{\mathrm{In}}}(s)\right)\right| \left|r(s)\right| \nonumber\\
\leq& \frac{R_{\max }}{1-\gamma} \mathrm{TV}\left(d^{\hat{\pi}_{\mathrm{\algo}}}(s) || d^{\pi^*_{\mathrm{In}}}(s)\right) \nonumber\\
\leq& \frac{R_{\max }}{1-\gamma} C K_P \max_s \| \hat{\pi}_{\mathrm{\algo}}(s) -\pi^*_{\mathrm{In}}(s) \| \label{app_eq:perf lb proof1}
\end{align}
where the last inequality holds by \cref{app_lem:TV}.

According to \cref{app_prop:Limited over-estimation}, $\hat Q_{\mathrm{\algo}}$ satisfies the following inequality: 
\begin{equation}
\label{app_eq:perf lb proof2}
Q^*_{\mathrm{In}}(s,a) \leq \hat Q_{\mathrm{\algo}}(s,a) \leq Q^*_{\mathrm{In}}(s,a) + \frac{\lambda\epsilon_a K_Q \gamma}{1-\gamma},~\forall s,a \sim \Dcal.
\end{equation}

It means that for any $(s,a) \sim \Dcal$, with sufficiently small $\epsilon_a$, $\hat Q_{\mathrm{\algo}}(s,a)$ sufficiently approximates $Q^*_{\mathrm{In}}(s,a)$.
By \cref{app_def:mg}, $\tilde \beta$ is a mildly generalized policy. That is, for any $s \sim \Dcal$, $\tilde \beta$ satisfies
\begin{equation*}
\mathrm{supp}(\hat \beta(\cdot|s)) \subseteq \mathrm{supp}(\tilde \beta(\cdot|s)), ~~\text{and}~~ \max_{a_1 \sim \tilde \beta(\cdot|s)} \min_{a_2 \sim \hat \beta(\cdot|s)} \|a_1-a_2\| \leq \epsilon_a,
\end{equation*}

As $\hat{\pi}_{\mathrm{\algo}}(s) \in \tilde \beta(\cdot|s)$, it implies that we can find $a_{\mathrm{in}} \in \hat \beta(\cdot|s)$ (in dataset) such that $\|\hat{\pi}_{\mathrm{\algo}}(s)-a_{\mathrm{in}}\| \leq \epsilon_a$. 

Now suppose $a_{\mathrm{in}}$ is not the maximum point of $Q^*_{\mathrm{In}}(s, \cdot)$ at a certain $s$. We use $\pi^*_{\mathrm{In}}(s)$ to denote the maximum point of $Q^*_{\mathrm{In}}(s, \cdot)$. Let $\epsilon_{Q^*_{\mathrm{In}}}$ be the gap between $Q^*_{\mathrm{In}}(s, a_{\mathrm{in}})$ and $Q^*_{\mathrm{In}}(s, \pi^*_{\mathrm{In}}(s))$:
\begin{equation}
\label{app_eq:perf lb proof3}
\epsilon_{Q^*_{\mathrm{In}}}(s) := Q^*_{\mathrm{In}}(s, \pi^*_{\mathrm{In}}(s)) - Q^*_{\mathrm{In}}(s, a_{\mathrm{in}}) >0.
\end{equation}

By \cref{app_ass:lQ1} (Lipschitz $Q$), we have
\begin{equation}
\label{app_eq:perf lb proof4}
\hat Q_{\mathrm{\algo}}(s,\hat{\pi}_{\mathrm{\algo}}(s)) - \hat Q_{\mathrm{\algo}}(s, a_{\mathrm{in}}) \leq K_Q \|\hat{\pi}_{\mathrm{\algo}}(s)-a_{\mathrm{in}}\| \leq  K_Q\epsilon_a.
\end{equation}

Therefore,
\begin{align*}
& \hat Q_{\mathrm{\algo}}(s,\pi^*_{\mathrm{In}}(s))-\hat Q_{\mathrm{\algo}}(s,\hat{\pi}_{\mathrm{\algo}}(s)) \\
\geq & \hat Q_{\mathrm{\algo}}(s,\pi^*_{\mathrm{In}}(s))- \hat Q_{\mathrm{\algo}}(s, a_{\mathrm{in}}) - K_Q\epsilon_a \\
\geq & Q^*_{\mathrm{In}}(s,\pi^*_{\mathrm{In}}(s)) - Q^*_{\mathrm{In}}(s, a_{\mathrm{in}}) - \frac{\lambda\epsilon_a K_Q \gamma}{1-\gamma} - K_Q\epsilon_a \\
= & \epsilon_{Q^*_{\mathrm{In}}}(s) - \frac{\lambda\epsilon_a K_Q \gamma}{1-\gamma} - K_Q\epsilon_a
\end{align*}
where the first inequality holds by \cref{app_eq:perf lb proof4}, the second inequality holds by \cref{app_eq:perf lb proof2}, and the last equality holds by \cref{app_eq:perf lb proof3}.

Hence, for sufficiently small $\epsilon_a$ such that $\epsilon_{Q^*_{\mathrm{In}}}(s) - \frac{\lambda\epsilon_a K_Q \gamma}{1-\gamma} - K_Q\epsilon_a > 0$, i.e.,
\begin{equation}
\epsilon_a < \frac{(1-\gamma)\epsilon_{Q^*_{\mathrm{In}}}(s)}{K_Q(1-\gamma+\lambda\gamma)},
\end{equation}
it holds that $\hat Q_{\mathrm{\algo}}(s,\pi^*_{\mathrm{In}}(s))-\hat Q_{\mathrm{\algo}}(s,\hat{\pi}_{\mathrm{\algo}}(s)) > 0$.
As $\pi^*_{\mathrm{In}}(s) \in \hat \beta(\cdot|s)$, it also satisfies $\pi^*_{\mathrm{In}}(s) \in \tilde \beta(\cdot|s)$.
This contradicts the definition of  $\hat{\pi}_{\mathrm{\algo}}(s)$ in \cref{app_eq:perf lb proof6}:
\begin{equation*}
\hat{\pi}_{\mathrm{\algo}}(s) = \argmax_{a\sim \tilde\beta(\cdot|s)} \hat Q_{\mathrm{\algo}}(s,a)
\end{equation*}

Therefore, $a_{\mathrm{in}}$ is the maximum point of $Q^*_{\mathrm{In}}(s, \cdot)$. In other words, the maximum point of $Q^*_{\mathrm{In}}(s, \cdot)$ (denoted by $\pi^*_{\mathrm{In}}(s)$) is the closest neighbor of $\hat{\pi}_{\mathrm{\algo}}(s)$ in the dataset ($\hat\beta(\cdot|s)>0$):
\begin{equation*}
\pi^*_{\mathrm{In}}(s) = \argmin_{a \sim \hat\beta{(\cdot|s)}} \|a-\hat{\pi}_{\mathrm{\algo}}(s)\|
\end{equation*}

As $\hat{\pi}_{\mathrm{\algo}}(s) \in \tilde \beta(\cdot|s)$, the following inequality holds by \cref{app_def:mg}:
\begin{equation*}
\|\hat{\pi}_{\mathrm{\algo}}(s)-\pi^*_{\mathrm{In}}(s)\| \leq \epsilon_a.
\end{equation*}

Therefore, we have
\begin{equation}
\label{app_eq:perf lb proof5}
|J(\pi^*_{\mathrm{In}}) -J(\hat{\pi}_{\mathrm{\algo}})| \leq \frac{R_{\max }}{1-\gamma} C K_P \epsilon_a.
\end{equation}

By combining \cref{app_eq:perf lb proof,app_eq:perf lb proof5}, we have
\begin{equation}
J(\hat{\pi}_{\mathrm{\algo}}) \geq J(\pi^*) - \frac{CK_P\Rmax}{1-\gamma}\epsilon_a - \epsilon_{\Dcal}.
\end{equation}

This concludes the proof.
\end{proof}

\section{Experimental Details}
\label{app_sec:experimental_details}

\subsection{Experimental Details in Offline Experiments}
\label{app_sec:experimental_details offline}

\begin{table}[htbp]
\caption{Hyperparameters of \algo.}

\label{app_tab:hyper}
\begin{center}
\begin{tabular}{cll}
\toprule
                              & Hyperparameter          & \multicolumn{1}{l}{Value}           \\ \midrule
\multirow{11}{*}{\algo}         & Optimizer               & \multicolumn{1}{l}{Adam~\citep{kingma2014adam}}            \\
                              & Critic learning rate    & \multicolumn{1}{l}{$3\times 10^{-4}$}            \\
                              & Actor learning rate     & \multicolumn{1}{l}{$3\times 10^{-4}$ with cosine schedule}  \\
                              & Batch size              & 256                                 \\
                              & Discount factor                & 0.99                                \\
                              & Number of iterations    & $10^6$                             \\
                              & Target update rate      & 0.005                               \\
                              & Number of Critics & 2                                   \\
                              & Penalty coefficient $\nu$  & \multicolumn{1}{l}{ \{0.1,10\} for Gym-MuJoCo} \\
                              &                         & \multicolumn{1}{l}{\{0.5\} for Antmaze}    \\
                              & Mixture coefficient $\lambda$  & 0.25  \\
                              \midrule
\multirow{4}{*}{IQL Specific} & Expectile $\tau$   & \multicolumn{1}{l}{0.7 for Gym-MuJoCo} \\
                              &                         & \multicolumn{1}{l}{0.9 for Antmaze}    \\
                              & Inverse temperature $\alpha$ & \multicolumn{1}{l}{3.0 for Gym-MuJoCo} \\
                              &                         & \multicolumn{1}{l}{10.0 for Antmaze}    \\
                              \midrule
\multirow{2}{*}{Architecture} & Actor    & input-256-256-output                                 \\
                              & Critic & input-256-256-1                                      \\ \bottomrule
\end{tabular}
\end{center}
\end{table}

Our evaluation criteria follow those used in most previous works. For the Gym locomotion tasks, we average returns over 10 evaluation trajectories and 5 random seeds, while for the AntMaze tasks, we average over 100 evaluation trajectories and 5 random seeds. Following the suggestions in the benchmark~\cite{fu2020d4rl}, we subtract 1 from the rewards for the AntMaze datasets. And following previous works~\cite{fujimoto2021minimalist,kostrikov2022offline,wu2022supported,xu2023offline}, we normalize the states in Gym locomotion datasets. We choose TD3~\citep{fujimoto2018addressing} as our base algorithm and optimize a deterministic policy. Thus we replace the log likelihood in Eq.~\eqref{eq:pi} with mean squared error in practice, which is equivalent to optimizing a Gaussian policy with fixed variance~\citep{fujimoto2021minimalist}.
The reported results are the normalized scores, which are offered by the D4RL benchmark~\cite{fu2020d4rl} to measure how the learned policy compared with random and expert policy:
\begin{equation*}
\text{D4RL score} = 100 \times \frac{\text{learned policy return}-\text{random policy return}}{\text{expert policy return}-\text{random policy return}}
\end{equation*}

As we implement our main algorithm based on IQL~\cite{kostrikov2022offline}, we use the hyperparameters suggested in their paper for fair comparisons, i.e., $\tau=0.7$ and $\alpha=3$ for Gym locomotion tasks and $\tau=0.9$ and $\alpha=10$ for AntMaze tasks. For the results of $\mathcal X$QL+\algo and SQL+\algo, we also adopt the suggested hyperparameters in their papers~\cite{garg2023extreme,xu2023offline} for fair comparisons. In detail, we choose $\beta$ in $\mathcal X$QL~\cite{garg2023extreme} as $5.0$ in medium, medium-replay, and medium-expert datasets, and $\alpha$ in SQL~\cite{xu2023offline} as $2.0$ for medium, medium-replay datasets, and $5.0$ for medium-expert datasets.

\algo has two main hyperparameters: mixture coefficient $\lambda$ and penalty coefficient $\nu$. We use $\lambda=0.25$ for all tasks. We use $\nu=0.5$ for Antmaze tasks and $\nu \in \{0.1, 10\}$ for Gym locomotion tasks ($0.1$ for medium, medium-replay, random datasets; $10$ for expert and medium-expert datasets). All hyperparameters of \algo are included in \cref{app_tab:hyper}.

\subsection{Experimental Details in Offline-to-online Experiments}
\label{app_sec:experimental_details online}
For online fine-tuning experiments, we first run offline RL for $1\times10^6$ gradient steps. Then we continue training while collecting data actively in the environment and adding the data to the replay buffer. We perform online fine-tuning for $1\times10^6$ steps with $1$ update-to-data (UTD) ratio, and collect data with exploration noise $0.1$ as suggested by TD3~\cite{fujimoto2018addressing}. During offline pre-training, we fix the mixture coefficient $\lambda=0.25$ and the penalty coefficient $\nu=0.5$, while in the online phase, we exponentially adjust $\lambda$ and $\nu$, as \algo with $\lambda=1$ and $\nu=0$ corresponds to standard online RL. In the challenging AntMaze domains characterized by high-dimensional state and action spaces, as well as sparse rewards, the extrapolation error remains significant even during the online phase~\cite{wu2022supported}. Therefore, we decay $\lambda$ from $0.25$ to $0.5$ and $\nu$ from $0.5$ to $0.005$ ($1\%$ of its initial value), employing a decay rate of $0.99$ every $1000$ gradient steps. Additionally, following previous works~\cite{wu2022supported,tarasov2024revisiting}, we set $\gamma = 0.995$ when fine-tuning on antmaze-large datasets, for both \algo and IQL to ensure a fair comparison. All other training details remain consistent between the offline RL phase and the online fine-tuning phase.

\section{Additional Experimental Results}
\label{app_sec:experimental_results}

\subsection{Computational Cost}
\label{app_sec:computational cost}

We test the runtime of offline RL algorithms on halfcheetah-medium-replay-v2 on a GeForce RTX 3090. The results of \algo and other baselines are shown in \cref{app_fig:runtime}. It takes 1.7h for \algo to finish the task, which is comparable to the fastest offline RL algorithm TD3BC~\citep{fujimoto2021minimalist}. 

\begin{figure}[h]
    \centering
    \includegraphics[scale=0.6]{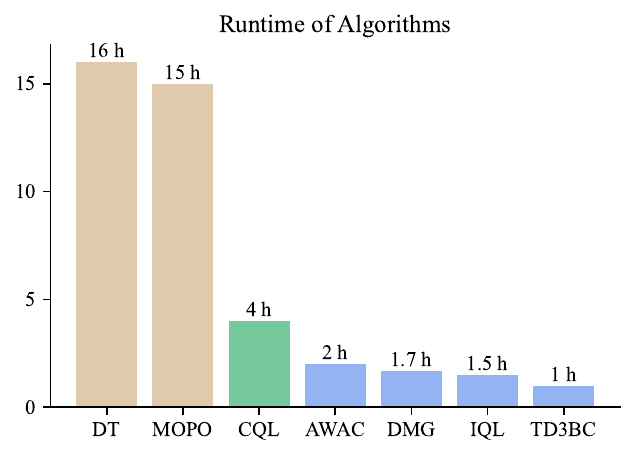}
    \caption{Runtime of algorithms on halfcheetah-medium-replay-v2 on a GeForce RTX 3090.}
    \label{app_fig:runtime}
\end{figure}

\subsection{Offline Training Results of DMG on More Random Seeds}
\label{app_sec:10 seeds}

The experimental results in the main paper show the mean and standard deviation (SD) over five random seeds. According to \citep{patterson2023empirical}, we conduct experiments to test \algo on additional random seeds, reporting 95\% confidence interval (CI) over 10 random seeds. \cref{app_tab:10 seeds} shows the comparison between the new results (10seeds/95\%CI) and the previously reported results (5seeds/SD in \cref{tab:d4rl}) on the D4RL offline training tasks. The results show that our method achieves about the same performance as under the previous evaluation criterion.

\begin{table}[h]
\caption{Comparison of DMG under different evaluation criteria on D4RL offline training tasks.}
\vspace{-1mm}
\label{app_tab:10 seeds}
\begin{center}
\begin{small}
\begin{tabular}{lcc}
\toprule
Dataset-v2 & DMG (5seeds/SD) & DMG (10seeds/95\%CI) \\ \midrule
halfcheetah-m & \textbf{54.9$\pm$0.2} & \textbf{54.9$\pm$0.3} \\
hopper-m & \textbf{100.6$\pm$1.9} & 100.5$\pm$1.0 \\
walker2d-m & \textbf{92.4$\pm$2.7} & 92.0$\pm$1.2 \\
halfcheetah-m-r & \textbf{51.4$\pm$0.3} & \textbf{51.4$\pm$0.4} \\
hopper-m-r & 101.9$\pm$1.4 & \textbf{102.1$\pm$0.6} \\
walker2d-m-r & 89.7$\pm$5.0 & \textbf{90.3$\pm$2.8} \\
halfcheetah-m-e & 91.1$\pm$4.2 & \textbf{92.9$\pm$2.1} \\
hopper-m-e & \textbf{110.4$\pm$3.4} & 109.0$\pm$2.6 \\
walker2d-m-e & \textbf{114.4$\pm$0.7} & 113.9$\pm$1.2 \\
halfcheetah-e & \textbf{95.9$\pm$0.3} & \textbf{95.9$\pm$0.2} \\
hopper-e & 111.5$\pm$2.2 & \textbf{111.8$\pm$1.3} \\
walker2d-e & \textbf{114.7$\pm$0.4} & 114.5$\pm$0.3 \\
halfcheetah-r & \textbf{28.8$\pm$1.3} & 28.7$\pm$1.2 \\
hopper-r & 20.4$\pm$10.4 & \textbf{21.6$\pm$6.6} \\
walker2d-r & 4.8$\pm$2.2 & \textbf{7.7$\pm$3.0} \\
\midrule
locomotion total & 1182.8 & \textbf{1187.2} \\
\midrule
antmaze-u & \textbf{92.4$\pm$1.8} & 91.8$\pm$1.6 \\
antmaze-u-d & \textbf{75.4$\pm$8.1} & 73.0$\pm$5.0 \\
antmaze-m-p & 80.2$\pm$5.1 & \textbf{80.5$\pm$2.1} \\
antmaze-m-d & \textbf{77.2$\pm$6.1} & 76.7$\pm$3.6 \\
antmaze-l-p & 55.4$\pm$6.2 & \textbf{56.7$\pm$3.6} \\
antmaze-l-d & \textbf{58.8$\pm$4.5} & 57.2$\pm$2.7 \\
\midrule
antmaze total & \textbf{439.4} & 435.9 \\
\bottomrule
\end{tabular}
\end{small}
\end{center}
\end{table}

\subsection{Learning Curves of \algo during Offline Training}
\label{app_sec:offline curves}
Learning curves during offline training on Gym-MuJoCo locomotion tasks and Antmaze tasks are presented in \cref{app_fig:mujoco_appendix} and \cref{app_fig:antmaze_appendix}, respectively. The curves are averaged over 5 random seeds, with the shaded area representing the standard deviation across seeds.

\subsection{Learning Curves of \algo during Online Fine-tuning}
\label{app_sec:online curves}
Learning curves during online fine-tuning on Antmaze tasks are presented in \cref{app_fig:finetune_appendix}. The curves are averaged over 5 random seeds, with the shaded area representing the standard deviation across seeds.

\section{Broader Impact}
\label{app_sec:Broader Impact}

Offline reinforcement learning (RL) presents a promising avenue for enhancing and broadening the practical applicability of RL across various domains including robotics, recommendation systems, healthcare, and education, characterized by costly or hazardous data collection processes. However, it is imperative to recognize the potential adverse societal ramifications associated with any offline RL algorithm. One such concern pertains to the possibility that the offline data utilized for training may harbor inherent biases, which could subsequently permeate into the acquired policy. Furthermore, it is essential to contemplate the potential implications of offline RL on employment, given its contribution to automating tasks conventionally executed by human experts, such as factory automation or autonomous driving. Addressing these challenges is essential for fostering the responsible development and deployment of offline RL algorithms, with the aim of maximizing their positive impact while mitigating negative societal consequences.

From an academic perspective, this research scrutinizes offline RL through the lens of generalization, balancing the need for generalization with the risk of over-generalization. The proposed approach \algo potentially offers researchers a new perspective on appropriately exploiting generalization in offline RL. Besides, \algo also holds the promise to be extended to safe RL~\citep{achiam2017constrained,gu2022review,garcia2015comprehensive}, multi-agent RL~\citep{lowe2017multi,rashid2020monotonic,shao2023complementary,qu2024choices,gronauer2022multi}, and meta RL~\citep{finn2017model,wang2022model,wang2024simple,wang2024robust,beck2023survey}.

\begin{figure}[h]
	\centering
        \vspace{1cm}
	\includegraphics[width=\linewidth]{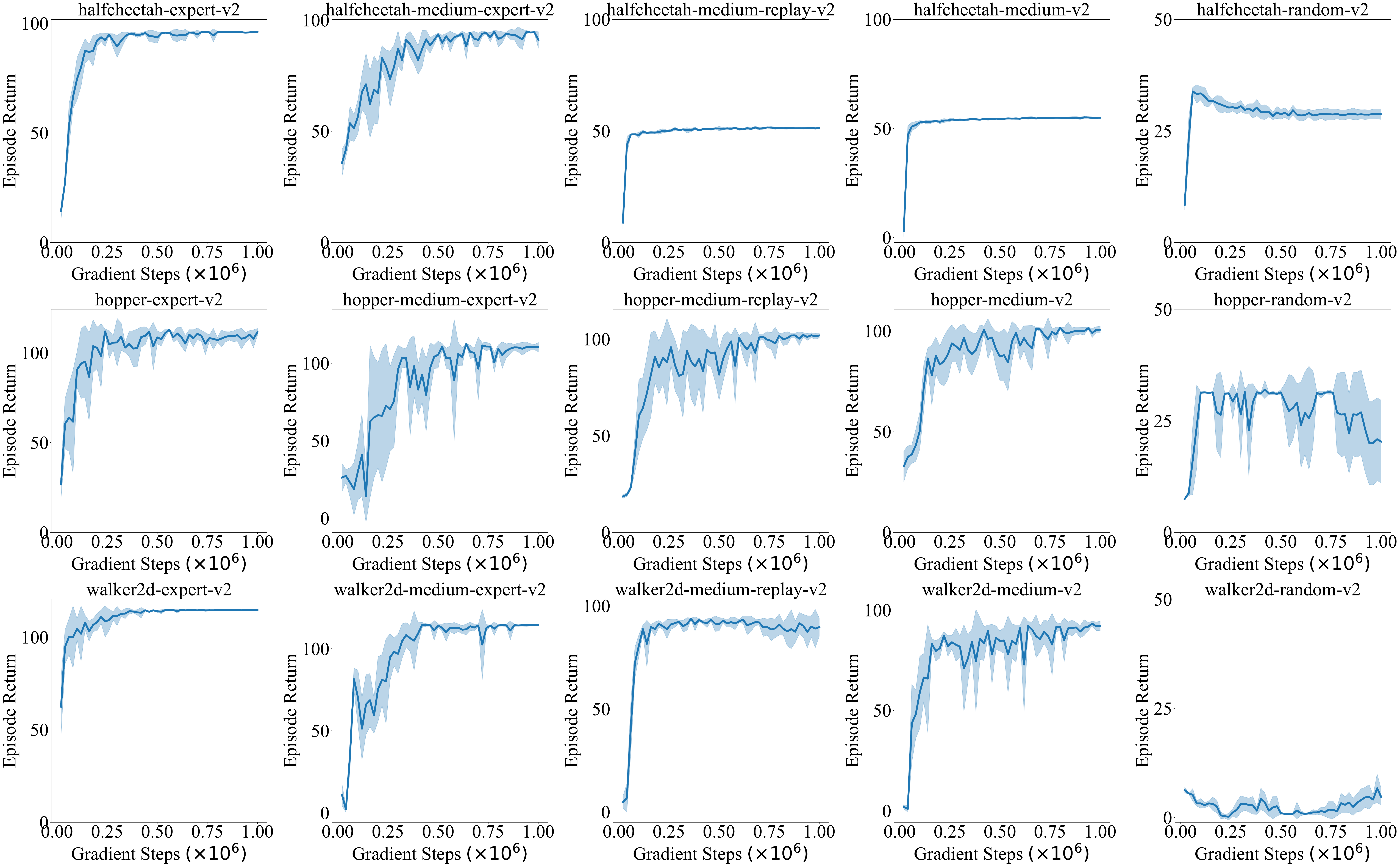}
	\vspace{-0.0cm}
	\caption{Learning curves of \algo on Gym locomotion tasks during offline training.
        The curves are averaged over 5 random seeds, with the shaded area representing the standard deviation across seeds.
	}
	\label{app_fig:mujoco_appendix}
\end{figure}
\begin{figure}
	\centering
	\includegraphics[width=0.92\linewidth]{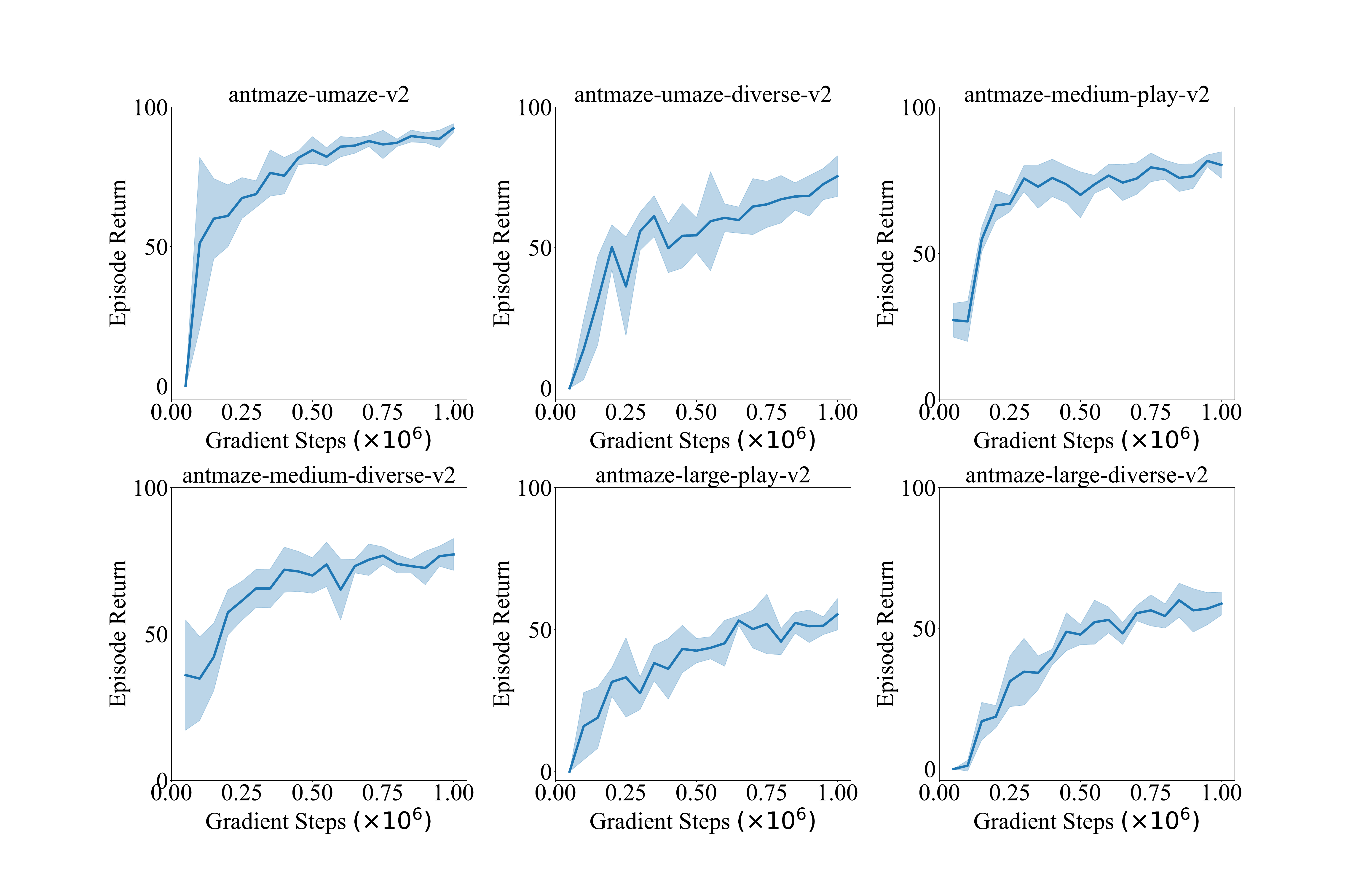}
	\vspace{-0.0cm}
	\caption{Learning curves of \algo on Antmaze tasks during offline training.
        The curves are averaged over 5 random seeds, with the shaded area representing the standard deviation across seeds.
	}
	\label{app_fig:antmaze_appendix}
\end{figure}
\begin{figure}
	\centering
	\includegraphics[width=0.92\linewidth]{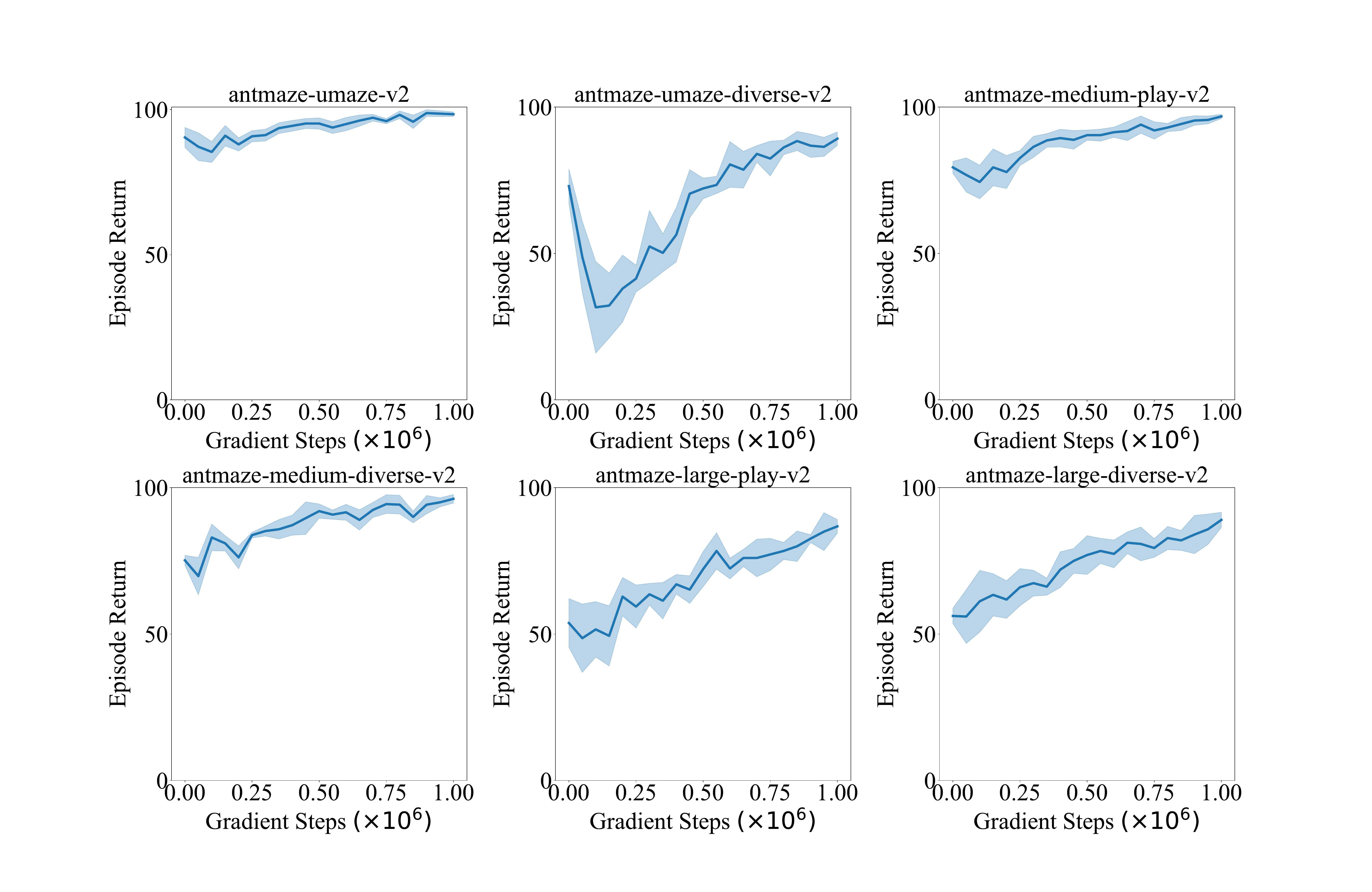}
	\vspace{-0.0cm}
	\caption{Learning curves of \algo on Antmaze tasks during online fine-tuning.
        The curves are averaged over 5 random seeds, with the shaded area representing the standard deviation across seeds.
	}
	\label{app_fig:finetune_appendix}
\end{figure}

\end{document}